\documentclass[journal,12pt,onecolumn,doublespacing,twoside]{IEEEtran}

\usepackage{versions}

\excludeversion{ttwocolumn}
\includeversion{oonecolumn}

\usepackage[utf8]{inputenc}
\usepackage[T1]{fontenc}
\usepackage[english]{babel}
\usepackage{amsmath, amsfonts, amssymb}
\usepackage[ruled,vlined]{algorithm2e}
\usepackage{array}
\usepackage{textcomp}
\usepackage{stfloats}
\usepackage{verbatim}
\usepackage{graphicx}
\usepackage{cite}
\usepackage{import}
\usepackage{bm}
\usepackage{setspace}
\usepackage{dsfont}
\usepackage{indentfirst}
\usepackage[hidelinks]{hyperref}
\usepackage{mathtools}
\usepackage{extarrows}

\usepackage{enumitem}
\usepackage{amsthm}
\usepackage{dsfont}
\usepackage[caption=false]{subfig}
\usepackage[export]{adjustbox}
\usepackage[all]{xy}
\usepackage{tabularx,booktabs,colortbl}
\usepackage{float}
\usepackage{multirow}
\usepackage[most]{tcolorbox}
\usepackage{etoolbox}

\def\BibTeX{{\rm B\kern-.05em{\sc i\kern-.025em b}\kern-.08em
    T\kern-.1667em\lower.7ex\hbox{E}\kern-.125emX}}
\AtBeginDocument{\definecolor{tmlcncolor}{cmyk}{0.93,0.59,0.15,0.02}\definecolor{TMLCNCOLOR}{cmyk}{0.93,0.59,0.15,0.02}\definecolor{NavyBlue}{RGB}{0,86,125}}

\newcommand{\rbrk}[1]{\left( #1 \right)}
\newcommand{\cbrk}[1]{\left\{ #1 \right\}}
\newcommand{\sbrk}[1]{\left[ #1 \right]}
\newcommand{\herm}{{\mathsf{H}}}
\newcommand{\trans}{{\mathsf{T}}}

\newtheorem{proposition}{Proposition}

\newtheorem{lemma}{Lemma}
\theoremstyle{definition}
\newtheorem{remark}{Remark}

\usepackage{soul, xcolor}

\usepackage[acronym]{glossaries}

\newcommand{\channel}{\mathbf{h}}
\newcommand{\userset}{\mathcal{K}}
\newcommand{\agentset}{\mathcal{A}}
\newcommand{\agentindex}{a}
\newcommand{\userindex}{k}
\newcommand{\numantennas}{N}
\newcommand{\numsubcarriers}{S}

\newcommand{\beamset}{\mathcal{B}}
\newcommand{\beam}{\mathbf{b}}
\newcommand{\numbeams}{M}
\newcommand{\oversampling}{\vartheta}
\newcommand{\beammatrix}{\mathbf{B}}
\newcommand{\receivedsignal}{y}
\newcommand{\informationsignal}{s}
\newcommand{\noisesignal}{n}
\newcommand{\noisepower}{\sigma}
\newcommand{\sinr}{\psi}
\newcommand{\setcameras}{\mathcal{C}}
\newcommand{\cameraindex}{c}
\newcommand{\image}{\mathbf{i}}
\newcommand{\imageheight}{H}
\newcommand{\imagewidth}{W}
\newcommand{\bandwidth}{\beta}
\newcommand{\bitrate}{\varrho}
\newcommand{\envobservation}{\tilde{\mathbf{o}}}
\newcommand{\statespace}{\mathcal{S}}
\newcommand{\state}{\mathbf{s}}
\newcommand{\observationspace}{\mathcal{O}}
\newcommand{\observation}{\mathbf{o}}
\newcommand{\reward}{R}
\newcommand{\discountfactor}{\gamma}
\newcommand{\transitionfunction}{T}
\newcommand{\commgraph}{\mathcal{G}}
\newcommand{\commgraphedgeset}{\mathcal{E}}
\newcommand{\userposition}{\mathbf{p}}
\newcommand{\agentposition}{\mathbf{r}}
\newcommand{\userpositionxy}{p}
\newcommand{\estimateduserposition}{\bar{\userposition}}
\newcommand{\edgefeatures}{\mathbf{e}}
\newcommand{\edgefeaturesset}{\mathcal{F}}
\newcommand{\policy}{\pi}
\newcommand{\valuefunction}{V}
\newcommand{\group}{G}
\newcommand{\groupidentityelement}{\epsilon}
\newcommand{\binaryop}{\cdot}
\newcommand{\cyclicgroup}{C}
\newcommand{\rotationmatrix}{R}
\newcommand{\groupaction}{L}
\newcommand{\groupactiontwo}{K}
\newcommand{\groupactionedge}{U}
\newcommand{\permutation}{P}
\newcommand{\cameraheight}{h}
\newcommand{\azimuthangle}{\varphi}
\newcommand{\elevationangle}{\phi}
\newcommand{\pixelcoordinate}{l}
\newcommand{\dataset}{\mathcal{D}}
\newcommand{\unlabeledcsidataset}{\mathcal{U}}
\newcommand{\lowdimdataset}{\mathcal{Z}}
\newcommand{\matriximagesensing}{\mathbf{Z}}
\newcommand{\offlinetrainingperiod}{t_{\text{train}}}
\newcommand{\distancematrix}{\mathbf{D}}
\newcommand{\numsamples}{D}
\newcommand{\channeldistance}{d}
\newcommand{\matchingmatrix}{\mathbf{M}}
\newcommand{\matchingmatrixset}{\mathcal{M}}
\newcommand{\matchingscalar}{\eta}
\newcommand{\chartingfunction}{\zeta}
\newcommand{\chartingfunctionparams}{\omega}
\newcommand{\ccregularizer}{\lambda}
\newcommand{\sensingduplicateparameter}{\Delta}
\newcommand{\neuralnetweight}{\mathbf{W}}
\newcommand{\neuralnetweightset}{\mathcal{W}}
\newcommand{\neuralnetbias}{\mathbf{w}}
\newcommand{\neuralnetinput}{\mathbf{y}}
\newcommand{\symmetrizer}{S}
\newcommand{\neuralnetbasis}{\mathbf{V}}
\newcommand{\neuralnetbasisfactor}{c}
\newcommand{\policyparams}{\theta}
\newcommand{\criticparams}{\xi}
\newcommand{\obsencoder}{x}
\newcommand{\encodingdim}{X}
\newcommand{\messages}{m}
\newcommand{\aggregatedmessages}{\mathbf{m}}
\newcommand{\messagesize}{M}
\newcommand{\update}{u}
\newcommand{\localpolicy}{\mu}
\newcommand{\localvalue}{\nu}
\newcommand{\estimatedstate}{\bar{\state}}
\newcommand{\numlayers}{Q}
\newcommand{\layerindex}{q}
\newcommand{\encoding}{\mathbf{x}}
\newcommand{\advantage}{A}
\newcommand{\advantagefactor}{\aleph}
\newcommand{\advantagediscount}{\chi}
\newcommand{\ppoclip}{\kappa}
\newcommand{\ppoentropyfactor}{\alpha}

\newcommand{\actionvaluefunction}{\mathcal{Q}}
\newcommand{\partialsymmetryR}{\varepsilon_{_R}}
\newcommand{\partialsymmetryT}{\varepsilon_{_T}}

\begin{document}

\title{Equivariant Multi-agent Reinforcement Learning for Multimodal Vehicle-to-Infrastructure Systems}

\author{%
\IEEEauthorblockN{%
Charbel Bou Chaaya,~\IEEEmembership{Student Member,~IEEE}, and Mehdi Bennis,~\IEEEmembership{Fellow,~IEEE}
}
\thanks{%
The authors are with the Centre for Wireless Communications, University of Oulu, Finland (email: charbel.bouchaaya@oulu.fi; mehdi.bennis@oulu.fi).
}
}

\markboth{ }{ }

\maketitle

\newacronym{ml}{ML}{machine learning}
\newacronym{ai}{AI}{artificial intelligence}
\newacronym{csi}{CSI}{channel state information}
\newacronym{ofdm}{OFDM}{orthogonal frequency division multiplex}
\newacronym{rnn}{RNN}{recurrent neural network}
\newacronym{ema}{EMA}{exponential moving average}
\newacronym{mlp}{MLP}{multi-layer perceptron}
\newacronym{gru}{GRU}{gated recurrent unit}
\newacronym{lstm}{LSTM}{long short-term memory}
\newacronym{1nn}{$1$--NN}{nearest neighbor}
\newacronym{adp}{ADP}{angle-delay profile}
\newacronym{mdp}{MDP}{Markov decision process}
\newacronym{mmdp}{MMDP}{multi-agent Markov decision process}
\newacronym{snr}{SNR}{signal-to-noise ratio}
\newacronym{sinr}{SINR}{signal-to-interference-plus-noise ratio}
\newacronym{rssm}{RSSM}{recurrent state-space model}
\newacronym{rl}{RL}{reinforcement learning}
\newacronym{marl}{MARL}{multi-agent reinforcement learning}
\newacronym{v2i}{V2I}{vehicle-to-infrastructure}
\newacronym{ris}{RIS}{reconfigurable intelligent surface}
\newacronym{stl}{STL}{signal temporal logic}
\newacronym{mmwave}{mmWave}{millimeter wave}
\newacronym{rf}{RF}{radio frequency}
\newacronym{gps}{GPS}{global positioning system}
\newacronym{cnn}{CNN}{convolutional neural network}
\newacronym{bs}{BS}{base station}
\newacronym{nn}{NN}{neural network}
\newacronym{ipm}{IPM}{inverse perspective mapping}
\newacronym{elm}{ELM}{extreme learning machine}
\newacronym{moa}{MOA}{model of other agent}
\newacronym{rsu}{RSU}{road-side unit}
\newacronym{dft}{DFT}{discrete Fourier transform}
\newacronym{gnn}{GNN}{graph neural network}
\newacronym{ppo}{PPO}{proximal policy optimization}
\newacronym{gae}{GAE}{generalized advantage estimation}

\glsdisablehyper

\begin{abstract}
In this paper, we study a \gls{v2i} system where distributed \glspl{bs} acting as \glspl{rsu} collect multimodal (wireless and visual) data from moving vehicles.
We consider a decentralized rate maximization problem, where each \gls{rsu} relies on its local observations to optimize its resources, while all \glspl{rsu} must collaborate to guarantee favorable network performance.
We recast this problem as a distributed \gls{marl} problem, by incorporating rotation symmetries in terms of vehicles' locations.
To exploit these symmetries, we propose a novel self-supervised learning framework where each \gls{bs} agent aligns the latent features of its multimodal observation to extract the positions of the vehicles in its local region.
Equipped with this sensing data at each \gls{rsu}, we train an equivariant policy network using a \gls{gnn} with message passing layers, such that each agent computes its policy locally, while all agents coordinate their policies via a signaling scheme that overcomes partial observability and guarantees the equivariance of the global policy.
We present numerical results carried out in a simulation environment, where ray-tracing and computer graphics are used to collect wireless and visual data.
Results show the generalizability of our self-supervised and multimodal sensing approach, achieving more than two-fold accuracy gains over baselines, and the efficiency of our equivariant \gls{marl} training, attaining more than 50\% performance gains over standard approaches.
%
%
\end{abstract}

\begin{IEEEkeywords}
Multimodal machine learning, self-supervised learning, group symmetries, multi-agent reinforcement learning.
\end{IEEEkeywords}

\glsresetall

\maketitle

\section{Introduction}
Beyond 5G and 6G wireless systems will support a plethora of advanced and emerging applications and services, such as intelligent factories, autonomous vehicles, and digital twins~\cite{saad2019vision}.
While researchers are exploring possible enabling technologies, the trend of self-driven networks is becoming more and more conspicuous, where wireless systems need to adapt, reconfigure and optimize their functions to cater for user demands under unspecified conditions~\cite{chafii2023twelve}.
This comes in tandem with the remarkable advances in the \gls{ai} field and their impact on wireless communications in particular.

The migration towards higher frequency ranges, such as the \gls{mmwave} band, presents a promising solution for the stringent demands of upcoming technologies.
This is due to their abundant bandwidths that can provide high data rates and precise sensing capabilities~\cite{rappaport2019wireless}.
However, sustaining a communication system on these bands comes with many challenges, mainly due to the intermittent channel quality.
Thus, extensive signaling and frequent channel estimations are required for beam alignment, which prevents achieving low latency communications.
Added to that, the increasing amount of antennas and reflective elements, scaling from hundreds to thousands, and their ubiquitous distribution to relax any beamforming constraints, complicates the physical layer optimization and coordination procedures to ensure a seamless user experience.

An emerging trend that could solve some of these problems is the exploitation of overhead free modalities, such as images, point clouds for downstream wireless optimization~\cite{bariah2023large}.
In fact, the optimization of wireless systems relies mainly on \gls{csi} to tune the network's parameters.
Equipping the system with different observations from various modalities can provide more degrees of freedom and cost-effective solutions.
Nevertheless, this posits new questions in terms of how to leverage these inherently different perception modules, and whether the gains they yield justify the energy and computation footprints of running new equipment for collecting and processing such data.

Substantially, different modalities capture the state of the environment in essentially different formats (\gls{csi}, images, point clouds, etc).
Hence, fusing various modalities to extract common representations facilitating a downstream task is crucial.
However, unlike image captioning with text, annotating multimodal data for wireless tasks is highly challenging, since this requires labeling images with their optimal beam direction for instance.
Accordingly, compiling such datasets and training supervised models is an expensive and unscalable approach, calling for novel self-supervised fusion methods.

On the other hand, heavy volumes of multimodal data are collected by sensors distributed in the wireless environment.
Communicating such data to a central processing server is also not feasible since this would incur intolerable delays and degrade the network's performance.
Therefore, sensors must locally process their observations and tune their parameters, while maintaining coordination with other sensors to ensure the system's reliability.

In addition, many wireless optimization problems, and \gls{v2i} systems in particular, exhibit inherent geometric symmetries that are often overlooked by learning-based solutions.
For instance, the relative positioning of users around a \gls{bs}, governs key physical layer quantities such as path losses, wireless channels, and beam directions.
As a result, rotating the spatial layout of vehicles around a \gls{rsu} in a \gls{v2i} network, induces a corresponding permutation of optimal beamforming and resource allocation decisions, as the system's infrastructure is typically symmetrical.
Explicitly leveraging these symmetries thus allows developing learning architectures that simplify physical layer optimization, by constraining the search space to solutions satisfying the desired symmetrical property.

Jointly, these considerations motivate a decentralized learning problem where distributed \glspl{rsu} must rely on unlabeled multimodal observations to establish their situational awareness of the \gls{v2i} environment and optimize the network's rate through beam alignment.
The resulting challenges is thus to design scalable learning methods that can \emph{self-supervise the fusion} of heterogeneous sensing modalities at each \gls{rsu} node, and \emph{exploit distributed symmetries} in the network to enable coordinated decision making without centralized exchange of heavy multimodal data.
All aforementioned open research questions motivate our study.
%
\subsection{Contributions}
In this work, we study a wireless \gls{v2i} network where distributed \glspl{bs} collect wireless \gls{csi} from navigating vehicles and image snapshots of the road sections they serve.
Unlike previous works, we do not assume neither the knowledge of the matching between the modalities, nor task labels for the modalities.
We consider a decentralized rate maximization problem where each \gls{bs} observes only its local multimodal data, and all \glspl{bs} must coordinate their resource management decisions to satisfy the users' utilities.
Our main contributions are summarized as follows:
\begin{itemize}[nolistsep, leftmargin=*]
    \item
        Taking advantage of rotation symmetries occuring in \gls{v2i} networks, we recast our problem as a distributed~\gls{mmdp} with symmetries, a particular class of \glspl{mmdp} that admits symmetric optimal policies, significantly reducing the solution search space.
        We argue that the optimal policy undergoes a permutation when the positions of vehicles rotate in the network,
    \item Our \emph{first key contribution} is a novel self-supervised multimodal learning framework for sensing and imputation, allowing each \gls{bs} agent to align the features of its unlabeled image and \gls{csi} data to extract the locations of the users in its region,
    \item Given the locally estimated locations for each agent, our \emph{second key contribution} is a \gls{gnn} based equivariant policy network that leverages the symmetries of the environment for improved \gls{marl} training,
    \item 
        We provide extensive simulation results from a synthetic dataset with co-existing visual and wireless data.
        Our results show the effectiveness and generalizability of our self-supervised approach in aligning multimodal data in latent space, achieving less than 1.5 m average localization error, and 50\% performance gains over benchmarks for our symmetric \gls{marl} approach.
\end{itemize}

\subsection{Prior Works}
Solving wireless problems relying on non wireless observations (other than \gls{csi}) has been tackled from multiple directions~\cite{roy2023going}.
Users' locations obtained from a \gls{gps} system have been used for a multitude of tasks, such as \gls{mmwave} beam direction prediction in \gls{v2i} networks~\cite{va2016beam}.
Camera-based beamforming exploits recent advances in \gls{ml} and computer vision to process images, mainly using \gls{cnn} based architectures for detecting users.
For instance,~\cite{tian2020applying} proposed an autoregressive framework to predict future beam indices from sequences of images based on \glspl{cnn} and \gls{lstm} modules.
Camera images were used in~\cite{shaifeng2023sensing} to reduce the search for \gls{ris} phase configurations during initial access.
Lidar point clouds were used in~\cite{jiang2022lidar} to train a \gls{rnn} to predict favorable beam directions in a \gls{v2i} system.
The primary limitation of these works stems from their supervised learning core, using labeled training datasets for \gls{ml} model training.

Using more than one modality has also received attention in the recent literature.
In~\cite{charan2021vision}, objects detected in snapshots of the environment are paired with a sequence of previous beam indices and fed to a \gls{rnn} to proactively anticipate possible user blockage.
In~\cite{gouranga2022vision}, the user's position is combined with its location in the image to form an input of a beam prediction network.
Similarly,~\cite{reus2021deep} trained a model that fuses features extracted from an image with the user's \gls{gps} location to predict any blockage event and the optimal beam direction.
The authors of~\cite{salehi2022deep} considered a \gls{v2i} scenario where a vehicle is equipped with Lidar and \gls{gps} sensors, while the \gls{bs} tracks it using a camera whose goal is to fuse all three modalities for optimal beam selection.
A distributed approach is studied in~\cite{imran2024environment}, where multimodal sensors locally process their data with pre-trained models, and communicate the obtained representations instead of the raw data to a central \gls{bs} to optimize its beam direction.
While interesting, all these works still rely on a supervised training curriculum that combines the features extracted from each modality, assuming the modalities are perfectly matched, and accessing downstream task labels.

Recent attempts~\cite{huan2024multi},~\cite{huan2024multi2} have also examined the case where the matching between the modalities (which \gls{csi} corresponds to which user in the image) is unknown.
A self-supervised learning approach using unlabeled \gls{csi} and bounding box detection from images is applied to cluster representations, followed by a supervised fine-tuning on a smaller labeled dataset.
In~\cite{alloulah2022self}, self-supervised pre-training is used to align the representations of radar heatmaps with corresponding visual image data while assuming the pairings are known.

Concurrently with the study of multimodality, our work is also related to research on group symmetries in distributed decision making problems.
In our setting, symmetries constitute spatial transformations that occur to the environment's state (such as rotations, translations, etc.), for which the agent's corresponding action must be a transformed version of its action in the original state (for example, a permuted action).
Although the literature is still limited on this topic, accounting for symmetries occurring in the environment has been shown to significantly enhance the training efficiency in both single and \gls{marl} tasks~\cite{van2020mdp},~\cite{jianye2022boosting},~\cite{yu2023esp},~\cite{pol2022multiagent}.
Recently,~\cite{zhou2024symmetry} and~\cite{shi2025symmetry} proposed novel frameworks for aerial \glspl{bs} trajectory design and coverage problems, while incorporating symmetries in their training scheme to improve sample efficiency.
In~\cite{zhou2024symmetry}, data augmentations are used to expand the experience of learning \glspl{bs} agents, hence reducing the need for extensive environment interactions.
The authors of~\cite{shi2025symmetry} proposed a policy neural network that satisfies the desired symmetric property of the actions, hence constraining the agents to appropriately transform their actions whenever symmetries occur.
The main limitation of such works is the assumption that symmetries directly act on the agents' observations.
In practice however, \glspl{bs} do not access the locations of end users, and rather estimate their high-dimensional \gls{csi}, which do not obey symmetric properties (such as rotations) like the locations.
Hence, the symmetries in such wireless networks are \emph{latent}, since they act on low-dimensional features (user positions) which must be extracted from the agents' observations.

\section{System Model and Problem Formulation}
\label{section:mm_paper_system_model}
%
We consider the downlink of a \gls{mmwave} wireless system where multiple \glspl{bs} are deployed as \glspl{rsu} serving the \gls{v2i} communication with passing vehicles, as shown in Fig.~\ref{fig:mm_paper_system_model}.
We denote the set of \glspl{bs} and users by $\agentset$ and $\userset$, respectively.%
\begin{oonecolumn}%
\begin{figure*}%
\centering
\includegraphics[width=.8\textwidth]{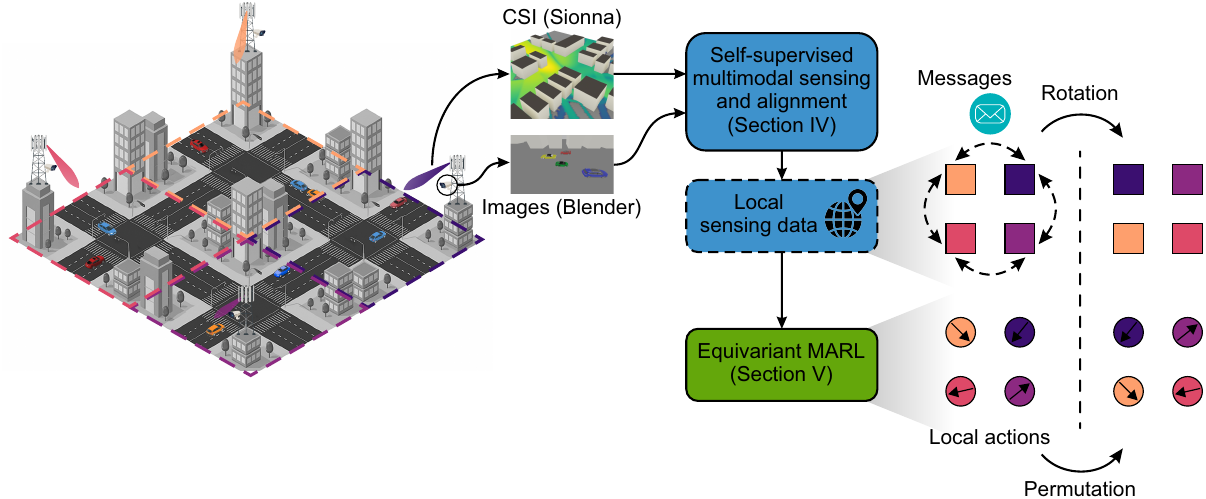}
\caption{System model showcasing multiple \gls{bs} agents, each equipped with a camera, deployed in a symmetric \gls{v2i} environment. Each agent aligns latent features of its multimodal observations to retrieve the locations of the vehicles in its local region, which are used to communicate with its neighbors and train an equivariant \gls{marl} policy.}
\label{fig:mm_paper_system_model}
\end{figure*}%
\end{oonecolumn}%
\begin{oonecolumn}
\begin{table}[H]
\centering
\caption{Definition of symbols used throughout this paper}
\label{tab:mm_paper_symbol_definition}
\renewcommand{\arraystretch}{0.8}
\begin{tabularx}{\textwidth} { 
  >{\raggedright\arraybackslash\hsize=.15\hsize}X 
  >{\raggedright\arraybackslash}X
  >{\raggedright\arraybackslash\hsize=.15\hsize}X 
  >{\raggedright\arraybackslash}X
  }
    \toprule
    Symbol & Definition & Symbol & Definition \\
    \midrule
    $\agentset, \userset$ & Sets of \gls{bs}-\gls{rsu} agents and users &
    $\agentindex, \userindex$ & Agent and user index \\
    \rowcolor{gray!15}
    $\userset_\agentindex$ & Set of users served by agent $\agentindex$ &
    $\bandwidth, \numantennas$ & Bandwidth and number of antennas \\
    $\beamset_\agentindex$ & Set of beams / actions of agent $\agentindex$ &
    $\beam_\agentindex$ & Beam / action of agent $\agentindex$ \\
    \rowcolor{gray!15}
    $\channel_{\agentindex, \userindex}$ & Wireless channel between \gls{bs} $\agentindex$ and user $\userindex$ &
    $\sinr_\userindex, \bitrate_\userindex$ & Downlink SINR and rate of user $\userindex$ \\
    $\setcameras_\agentindex, \cameraindex$ & Set of cameras of agent $\agentindex$ and camera index &
    $\image_{\agentindex, \cameraindex}$ & Image taken by camera $\cameraindex$ of agent $\agentindex$ \\
    \rowcolor{gray!15}
    $\cameraheight_\cameraindex$ & Camera height &
    $\azimuthangle^{\text{LoS}}_\cameraindex, \elevationangle^{\text{LoS}}_\cameraindex$ & Camera line-of-sight viewing angles \\
    $\azimuthangle^{\text{max}}_\cameraindex, \elevationangle^{\text{max}}_\cameraindex$ & Maximal horizontal and vertical viewing angles &
    $\userset_\agentindex^{\text{csi}}$ & Set of users with \gls{csi} modality available to \gls{bs} $\agentindex$ \\
    \rowcolor{gray!15}
    $\userset_\agentindex^{\text{im}}$ & Set of users with image modality available to \gls{bs} $\agentindex$ &
    $\envobservation_\agentindex$ & Multimodal environmental observation of agent $\agentindex$ \\
    $\agentposition_\agentindex, \userposition_\userindex$ & Locations of \gls{rsu} $\agentindex$ and user $\userindex$ &
    $\transitionfunction, \reward, \discountfactor$ & Transition and reward function and discount factor \\
    \rowcolor{gray!15}
    $\statespace, \observationspace$ & State and observation spaces &
    $\commgraph, \commgraphedgeset$ & Communication graph and its set of edges \\
    $\edgefeatures_{\agentindex, \agentindex^\prime}$ & Edge feature between agents $\agentindex$ and $\agentindex^\prime$ &
    $\policy_\agentindex$ & Policy of agent $\agentindex$ \\
    \rowcolor{gray!15}
    $\group, C_4$ & Group and rotation group &
    $\groupaction_g, \groupactiontwo_g$ & Group action on states and actions \\
    $\permutation_g$ & Pemrutation &
    $\dataset_\agentindex^{\text{im}}, \dataset_\agentindex^{\text{csi}}$ & Unlabed image and \gls{csi} data sets \\
    \rowcolor{gray!15}
    $\distancematrix_\agentindex^{\text{im}}, \distancematrix_\agentindex^{\text{csi}}$ & Distances matrices &
    $\matchingmatrix_\agentindex, \matchingscalar_\agentindex$ & Matching matrix and scaling factor \\
    $\chartingfunction_\agentindex$ & \gls{csi} sensing function &
    $\obsencoder_\agentindex$ & State encoder \\
    \rowcolor{gray!15}
    $\messages_\agentindex, \update_\agentindex$ & Message and update functions &
    $\messages_{\agentindex \to \agentindex^\prime}^\layerindex$ & Message from $\agentindex$ to $\agentindex^\prime$ at layer $\layerindex$ \\
    $\localpolicy_\agentindex, \localvalue_\agentindex$ & Policy and value heads &
    $\encoding^\layerindex_\agentindex, \aggregatedmessages^\layerindex_\agentindex$ & Features and aggregated messages at layer $\layerindex$ \\
    \rowcolor{gray!15}
    $\numlayers$ & Message passing layers / rounds \\
    \bottomrule
\end{tabularx}
\end{table}
\end{oonecolumn}

\begin{ttwocolumn}
\begin{figure*}
\centering
\begin{table}[H]
\centering
\caption{Definition of symbols used throughout this paper}
\label{tab:mm_paper_symbol_definition}
\renewcommand{\arraystretch}{0.9}
\begin{tabularx}{\textwidth} { 
  >{\raggedright\arraybackslash\hsize=.15\hsize}X 
  >{\raggedright\arraybackslash}X
  >{\raggedright\arraybackslash\hsize=.15\hsize}X 
  >{\raggedright\arraybackslash}X
  }
    \toprule
    Symbol & Definition & Symbol & Definition \\
    \midrule
    $\agentset, \userset$ & Sets of \gls{bs}-\gls{rsu} agents and users &
    $\agentindex, \userindex$ & Agent and user index \\
    %
    $\userset_\agentindex$ & Set of users served by agent $\agentindex$ &
    $\bandwidth, \numantennas$ & Bandwidth and number of antennas \\
    $\beamset_\agentindex$ & Set of beams / actions of agent $\agentindex$ &
    $\beam_\agentindex$ & Beam / action of agent $\agentindex$ \\
    %
    $\channel_{\agentindex, \userindex}$ & Wireless channel between \gls{bs} $\agentindex$ and user $\userindex$ &
    $\sinr_\userindex, \bitrate_\userindex$ & Downlink SINR and rate of user $\userindex$ \\
    $\setcameras_\agentindex, \cameraindex$ & Set of cameras of agent $\agentindex$ and camera index &
    $\image_{\agentindex, \cameraindex}$ & Image taken by camera $\cameraindex$ of agent $\agentindex$ \\
    %
    $\cameraheight_\cameraindex$ & Camera height &
    $\azimuthangle^{\text{LoS}}_\cameraindex, \elevationangle^{\text{LoS}}_\cameraindex$ & Camera line-of-sight viewing angles \\
    $\azimuthangle^{\text{max}}_\cameraindex, \elevationangle^{\text{max}}_\cameraindex$ & Maximal horizontal and vertical viewing angles &
    $\userset_\agentindex^{\text{csi}}$ & Set of users with \gls{csi} modality available to \gls{bs} $\agentindex$ \\
    %
    $\userset_\agentindex^{\text{im}}$ & Set of users with image modality available to \gls{bs} $\agentindex$ &
    $\envobservation_\agentindex$ & Multimodal environmental observation of agent $\agentindex$ \\
    $\agentposition_\agentindex, \userposition_\userindex$ & Locations of \gls{rsu} $\agentindex$ and user $\userindex$ &
    $\transitionfunction, \reward, \discountfactor$ & Transition and reward function and discount factor \\
    %
    $\statespace, \observationspace$ & State and observation spaces &
    $\commgraph, \commgraphedgeset$ & Communication graph and its set of edges \\
    $\edgefeatures_{\agentindex, \agentindex^\prime}$ & Edge feature between agents $\agentindex$ and $\agentindex^\prime$ &
    $\policy_\agentindex$ & Policy of agent $\agentindex$ \\
    %
    $\group, C_4$ & Group and rotation group &
    $\groupaction_g, \groupactiontwo_g$ & Group action on states and actions \\
    $\permutation_g$ & Pemrutation &
    $\dataset_\agentindex^{\text{im}}, \dataset_\agentindex^{\text{csi}}$ & Unlabed image and \gls{csi} data sets \\
    %
    $\distancematrix_\agentindex^{\text{im}}, \distancematrix_\agentindex^{\text{csi}}$ & Distances matrices &
    $\matchingmatrix_\agentindex, \matchingscalar_\agentindex$ & Matching matrix and scaling factor \\
    $\chartingfunction_\agentindex$ & \gls{csi} sensing function &
    $\obsencoder_\agentindex$ & State encoder \\
    %
    $\messages_\agentindex, \update_\agentindex$ & Message and update functions &
    $\messages_{\agentindex \to \agentindex^\prime}^\layerindex$ & Message from $\agentindex$ to $\agentindex^\prime$ at layer $\layerindex$ \\
    $\localpolicy_\agentindex, \localvalue_\agentindex$ & Policy and value heads &
    $\encoding^\layerindex_\agentindex, \aggregatedmessages^\layerindex_\agentindex$ & Features and aggregated messages at layer $\layerindex$ \\
    %
    $\numlayers$ & Message passing layers / rounds \\
    \bottomrule
\end{tabularx}
\end{table}
\end{figure*}
\end{ttwocolumn}%
We let $\userset_\agentindex$ represent the set of users served by \gls{bs} $\agentindex \in \agentset$, while satisfying $\cup_{\agentindex \in \agentset} \userset_\agentindex = \userset$.
Each \gls{bs} is capable of sensing the wireless channel between itself and the vehicles in its region, while also controlling one or more mounted RGB cameras.
To estimate the channel, the \gls{bs} decodes reverse link pilots sent by each user during predefined time slots.
Simultaneously, the RGB cameras capture a snapshot of the scene in front of the \gls{bs}.
As typically assumed in prior studies~\cite{shaifeng2023sensing,jiang2022lidar,gouranga2022vision, reus2021deep,koda2020cam}, we consider that the channel estimation module is fully synchronized with the camera hardware, hence the users' channels and cameras' images are collected simultaneously.
Note that equipping \glspl{rsu} with cameras is not uncommon in the literature~\cite{shaifeng2022computer, koda2020cam} and has shown various benefits such as low overhead beam training and proactive handover.
We further note that the images taken by the different cameras of each \gls{bs} depict only the road sections served by the corresponding \gls{bs}.
Therefore, each \gls{bs} $\agentindex \in \agentset$ holds only a partial observation comprising channels and image realizations from its users $\userset_\agentindex$.
However, all \glspl{bs} communicate with all users $\userset$ on the same \gls{mmwave} band, denoted by $\bandwidth$.

Table~\ref{tab:mm_paper_symbol_definition} summarizes the symbol notations used in the remainder of this paper.
We now formalize the wireless and image signal models, before formulating our problem.
%

\subsection{Communication and Channel Models}
Each vehicle $\userindex \in \userset$ is modeled as a single antenna receiver%
\footnote{%
We adopt this simplifying assumption similarly to~\cite{shaifeng2023sensing},~\cite{jiang2022lidar},~\cite{charan2021vision},~\cite{gouranga2022vision},~\cite{imran2024environment},~\cite{huan2024multi},~\cite{huan2024multi2}, while noting that in practical \gls{mmwave} networks, users are equipped with antenna arrays to achieve sufficient beamforming gains.
We however stress that our proposed optimization framework is not fundamentally tied to this assumption, and can be extended to account for multi-antenna terminals by augmenting the decision variables to account for user side beamforming.
}%
.
Each \gls{bs} $\agentindex \in \agentset$ is modeled as a uniform planar array comprising $\numantennas = \numantennas_h \times \numantennas_v$ antennas, where $\numantennas_h$ and $\numantennas_v$ represent the number of antennas in the horizontal and vertical dimensions respectively.
We assume that each \gls{bs} employs a two dimensional \gls{dft} codebook of beams $\beamset_\agentindex = \cbrk{\beam_i \in \mathbb{C}^{\numantennas} \mid i = 1, \dots, \numbeams}$, where the number of beams $\numbeams=\numantennas \times \oversampling_h \times \oversampling_v$, $\oversampling_h$ and $\oversampling_v$ are oversampling factors in the horizontal and vertical dimensions.
The possible beams $\beam_i$ are the rows of the matrix $\frac{1}{\sqrt{\numantennas}} \beammatrix_v \odot \beammatrix_h$, where row $r$ of the one dimensional \gls{dft} matrix $\beammatrix_h$ is defined as ($1 \leq r \leq \numantennas_h \oversampling_h$):
\begin{equation}
    \sbrk{\beammatrix_h}_r = \sbrk{1 \quad e^{j \frac{2 \pi \rbrk{r-1}}{\numantennas_h \oversampling_h}} \quad ... \quad e^{j \frac{2 \pi \rbrk{r-1} \rbrk{\numantennas_h-1}}{\numantennas_h \oversampling_h}}},
\end{equation}
while $\beammatrix_v$ is similarly defined using $\rbrk{\numantennas_v, \oversampling_v}$, and $\odot$ is the Kronecker product.

We denote by $\channel_{\agentindex, \userindex}^t \in \mathbb{C}^{\numantennas \times \numsubcarriers}$ as the wireless channel between \gls{bs} $\agentindex \in \agentset$ and user $\userindex \in \userset$ at time slot $t$, estimated at $\numsubcarriers$ \gls{ofdm} subcarriers.
Thus, the received base-band signal at terminal $\userindex$ is%
\footnote{In~\eqref{eq:mm_paper_received_signal} and~\eqref{eq:mm_paper_sinr}, we abuse the channel notation by letting $\channel_{\agentindex, \userindex}$ represent the channel at the central subcarrier only, for simplicity. However the downlink rate optimization framework we develop can be extended to a sum rate over all subcarriers in a straightforward manner.}%
:
\begin{equation}\label{eq:mm_paper_received_signal}
    \receivedsignal_\userindex^t = \sum_{\agentindex \in \agentset} \channel^{t \,\herm}_{\agentindex, \userindex} \, \beam_{\agentindex}^t {\sum_{j \in \userset_\agentindex} \informationsignal_j^t} + \noisesignal_\userindex^t,
\end{equation}
where $\beam_{\agentindex}^t \in \beamset_\agentindex$ is the beam chosen by agent $\agentindex$, $\informationsignal^t_\userindex$ is the information signal designated to user $\userindex$, $\noisesignal^t_\userindex \sim \mathcal{CN}\rbrk{0, \noisepower^2}$ is the additive noise with power $\noisepower^2$.
As such, the \gls{sinr} of vehicle $\userindex$ at time $t$ can be written as:
\begin{equation}\label{eq:mm_paper_sinr}
    \sinr_\userindex^t = \frac
    {\left\lvert \channel_{\agentindex_\userindex, \userindex}^{t \, \herm} \, \beam^t_{\agentindex_\userindex} \right\rvert^2}
    {\left\lvert \sum_{j \neq \userindex} \channel_{\agentindex_j, \userindex}^{t \, \herm} \, \beam^t_{\agentindex_j} \right\rvert^2 + \noisepower^2}
\end{equation}
where $\agentindex_\userindex$ denotes the \gls{bs} serving the zone of vehicle $\userindex$.
Finally, the bitrate of terminal $\userindex$ can be expressed as: $\bitrate_\userindex^t = \bandwidth \log_2\rbrk{1 + \sinr_\userindex^t}$.
Note that while directional \gls{mmwave} channels generally limit inter-\gls{rsu} interference, in dense \gls{v2i} deployments with mobile vehicles, interference and strong coupling between adjacent \glspl{rsu} can still arise mainly due to beam misalignment, and different \gls{rsu} beam collisions when vehicles are near the edge of an \gls{rsu} coverage region.

In this work, we do not assume any particular channel model, and generate the channels using the ray-tracing software Sionna~\cite{hoydis2022sionna}.
To do so, we first create a \gls{v2i} scene in the gaming engine Blender, which incorporates different aspects of the scenarios, such as buildings and vehicles.
This scene is then taken as input to Sionna, so that realistic \gls{csi} is generated using ray-tracing.
We detail our simulation setting in Section~\ref{section:mm_paper_simulation}.

\subsection{Image Model}
Each \gls{bs} $\agentindex \in \agentset$ controls a set $\setcameras_\agentindex$ of cameras which records RGB images, denoted by $\image_{\agentindex, \cameraindex}^t \in [0, 1]^{3 \times \imageheight_\cameraindex \times \imagewidth_\cameraindex}$ at slot $t$, where $\imageheight_\cameraindex$ and $\imagewidth_\cameraindex$ denote their height and width, respectively.
For instance, each camera $\cameraindex \in \setcameras_\agentindex$ takes a snapshot of a particular section of the zone served by its corresponding \gls{bs}.
We consider that each \gls{rsu} $\agentindex$ knows the height $\cameraheight_\cameraindex$, the line-of-sight viewing angles $\rbrk{\azimuthangle^{\text{LoS}}_\cameraindex, \elevationangle^{\text{LoS}}_\cameraindex}$, and the maximal horizontal and vertical viewing angles $\rbrk{\azimuthangle^{\text{max}}_\cameraindex, \elevationangle^{\text{max}}_\cameraindex}$ of its cameras $\cameraindex \in \setcameras_\agentindex$.
The heights are taken with respect to the vehicular roads, which we assume are all flat, whereas the angles can be in an arbitrary reference system.
As in~\cite{shaifeng2023sensing,gouranga2022vision, reus2021deep,koda2020cam}, we assume that the image collection is fully synchronized with the channel estimation phase, which occurs at the beginning of every communication slot.
While practical camera and CSI estimation hardware may exhibit clock offsets and processing delays, this assumption is adopted to simplify the system description and facilitate our presentation.
To assess the impact of such imperfections, we relax this assumption in Section~\ref{section:mm_paper_simulation} and evaluate the effect of timing offsets between the two modalities through simulations.

Similarly to wireless \gls{csi} obtained from Sionna, we capture images from \gls{bs} cameras positioned in the Blender scene.
Hence, the wireless channels are generated by Sionna simultaneously with the images collected by the cameras.

\subsection{Data Acquisition}
In this paper, we consider a general case where not necessarily all users served by a \gls{bs} $\agentindex \in \agentset$ appear in both modalities collected by $\agentindex$ (image and \gls{csi}).
In other words, it could be the case that the channel of a certain vehicle is not estimated by its \gls{bs}; or that a vehicle does not appear in the images of its \gls{bs}.
This could be due to a region not covered by the \gls{bs} cameras.
However, to keep our work tractable, we assume that each vehicle $\userindex\in\userset$ appears in at least one modality.

To formalize this idea, we let $\userset_\agentindex^{t,\text{csi}}$ denote the set of users from $\userset_\agentindex$ for which \gls{bs} $\agentindex$ estimates its \gls{csi} at time $t$.
Likewise, $\userset_\agentindex^{t,\text{im}}$ denotes the set of users from $\userset_\agentindex$ appearing in one of the images captured by the \gls{bs} cameras $\setcameras_\agentindex$.
Our assumption is therefore: $\userset_\agentindex^{t,\text{csi}} \cup \userset_\agentindex^{t,\text{im}} = \userset_\agentindex, \forall \agentindex, t$.

Thus, the multimodal observation of \gls{bs} $\agentindex \in \agentset$ at each time slot $t$ is: $\envobservation_\agentindex^t = \sbrk{\rbrk{\image_{\agentindex, \cameraindex}^t}_{\cameraindex \in \setcameras_\agentindex}, \rbrk{\channel^{t}_{\agentindex, \userindex}}_{\userindex \in \userset_\agentindex^{t,\text{csi}}}}$, which corresponds to images taken by the \gls{bs} cameras covering users $\userset_\agentindex^{t,\text{im}}$, as well as \gls{csi} estimated from users $\userset_\agentindex^{t,\text{csi}}$.
We consider that \gls{bs} $\agentindex$ has no knowledge of the matching between the \gls{csi} and the image modalities.
This means that the \gls{bs} has no information on which wireless channel corresponds to which vehicle in the images, and vice-versa.

Lastly, we denote by $\agentposition_\agentindex \in \mathbb{R}^2$ as the position of \gls{rsu} $\agentindex\in \agentset$ in an arbitrary coordinate system, and for each $\userindex \in \userset_\agentindex$, $\userposition_\userindex \in \mathbb{R}^2$ represents the position of vehicle $\userindex$ with respect to its corresponding \gls{rsu} $\agentindex$.
Each \gls{rsu} has \emph{no knowledge} of the positions $\sbrk{\userposition_\userindex}_{\userindex \in \userset_\agentindex}$ of the terminals in its region.

\subsection{Symmetric Environment}
Due to the \gls{v2i} communication scenario under consideration, it is typical to assume that the roads on which the vehicles navigate exhibit inherent symmetries.
In fact, \glspl{rsu} are commonly deployed at crossroads, which are ultimately symmetric road sections.
Consider the scenario shown in Fig.~\ref{fig:mm_paper_system_model} for example depicting a typical \gls{v2i} system, which we study as a use-case in the remainder of this paper.
As such, our work focuses on a class of urban road sections whose geometry and traffic interactions exhibit rotational symmetry.
This assumption captures common structured layouts such as orthogonal intersections and grid-based urban designs.
In such settings, the spatial arrangement of lanes, buildings, and roadside infrastructure is largely preserved under rotations of the scene, leading to repeated sensing patterns and interaction dynamics across different \glspl{rsu}.
This structure of the environment implies that, when the locations of all vehicle users transform by a rotation, i.e., the locations of vehicles corresponding to each \gls{rsu} are rotated versions of the original locations of the vehicles for an adjacent \gls{rsu} (see Fig.~\ref{fig:mm_paper_system_model}), then the downstream optimal resource management action of each \gls{rsu} is a permuted version of the action of the adjacent \gls{rsu}.
Adopting this symmetry assumption allows the system model to capture these recurring structures in a compact and consistent manner, facilitating scalable coordinated decision-making, as we will show hereafter.

While this assumption considers the idealized case of exact rotational symmetry in the environment for clarity and tractability, it primarily serves as a modeling abstraction to make our contribution amenable for networks adhering to such symmetries.
However, we will later demonstrate, through both analytical and empirical results, that the proposed framework remains effective when such symmetry is only approximate or partial, where the environment only partially adheres to the structure.

\subsection{Problem Formulation}
Given the system model elaborated above, we formalize the problem we seek to solve as follows:
\begin{equation}\label{eq:mm_paper_problem}
    \underset{\rbrk{\beam_\agentindex \in \, \beamset_\agentindex}_{\agentindex \in \agentset}}{\text{maximize}} \qquad \lim_{\tau \to \infty} \frac{1}{\tau} \sum_{t=1}^{\tau} \sum_{\userindex \in \userset} \bitrate_\userindex^t.
\end{equation}
Our aim in problem~\eqref{eq:mm_paper_problem} is to find a beam selection policy that maximizes the long term average sum rate of all users in the network.
This problem is challenging for several reasons.
Essentially, the beam selected by each \gls{bs} impacts the bitrates of all users in the system (inside and outside its zone) due to interference, hence, all \glspl{bs} collaborate to guarantee an overall favorable solution.
However, each \gls{bs} only observes users in its region (through \gls{csi} and images), and therefore cannot estimate the impact of its decision on the network.
Added to that, the partial observation of each \gls{bs} does not necessarily include the \gls{csi} of all its users, so it must exploit the camera images to guide its policy.
Moreover, problem~\eqref{eq:mm_paper_problem} must be solved in a distributed manner.
This is due to the multimodal data collected by each \gls{bs} that is large to communicate to a single server that coordinates all the beamforming decisions.
Lastly, the discrete nature of the feasibility set complicates the search for a favorable solution, and motivates the use of data-driven methods.
\section{Reformulation with Distributed MARL Leveraging Latent Symmetries}
\label{section:mm_paper_reformulation}
%
In this section, we reformulate problem~\eqref{eq:mm_paper_problem} through the lens of \gls{marl}.
First, we start by casting the problem as a \emph{distributed \gls{mmdp}}, where \gls{rsu} agents distributed on a graph receive partial observations of a common environment, and communicate over edges of the graph to take actions and cooperate in solving a given task.
Given our system model, we show that our problem corresponds to a distributed \gls{mmdp} with \emph{symmetries}, meaning that the optimal joint action policy of the agents is symmetric, which can be exploited to enhance its training efficiency (see Fig.~\ref{fig:mm_paper_policy_permutation}, detailed hereafter).
Hence, we review the tools needed to study equivalences in decision making problems, and recast our problem as a distributed \gls{mmdp} with symmetries.
However, unlike previous works, we demonstrate that symmetries do not apply in the agent's observation space, rather in a \emph{latent space} which we identify as the sensing space corresponding to the vehicles' locations; thus terming the symmetries as `\emph{latent symmetries}'.

\subsection{Distributed MMDP}
We model the decentralized rate maximization problem from multimodal observations elaborated in the previous section as a distributed \gls{mmdp}, represented by the tuple $\rbrk{\agentset, \statespace, \cbrk{\beamset_\agentindex}_{\agentindex\in\agentset}, \cbrk{\observationspace_\agentindex}_{\agentindex\in\agentset}, \transitionfunction, \reward, \discountfactor}$:
\begin{itemize}[nolistsep, leftmargin=*]
    \item $\agentset$ is the set of \gls{bs} agents,
    \item $\statespace = \statespace_1 \times \dots \times \statespace_{\lvert\agentset\rvert}$ is the state space, and the (unobserved) ground-truth state of the environment at time $t$ is $\state^t=\sbrk{\userposition^t_\userindex}_{\userindex\in\userset}$ is the set of user positions, while for each agent $\state_\agentindex^t=\sbrk{\userposition^t_\userindex}_{\userindex\in\userset_\agentindex}$ denotes the state restricted to its region,
    \item $\beamset_\agentindex$ is the set of actions available to agent $\agentindex$ corresponding its beam codebook, from which it selects beam $\beam_\agentindex^t$ at time $t$, and $\beamset = \beamset_1 \times \dots \times \beamset_{\lvert \agentset \rvert}$,
    \item $\observationspace_\agentindex = \widetilde{\observationspace}_\agentindex \times \widetilde{\edgefeaturesset}_\agentindex$ is the set of observations of agent $\agentindex$, which corresponds to two features $\observation_\agentindex^t = \sbrk{\envobservation_\agentindex^t, \edgefeatures_\agentindex^t}$:
    \begin{itemize}[nolistsep, leftmargin=*, label=\scalebox{0.35}{$\blacksquare$}]
        \item $\envobservation_\agentindex^t = \sbrk{\rbrk{\image_{\agentindex, \cameraindex}^t}_{\cameraindex \in \setcameras_\agentindex}, \rbrk{\channel^{t}_{\agentindex, \userindex}}_{\userindex \in \userset_\agentindex^{t,\text{csi}}}} \in \widetilde{\observationspace}_\agentindex$ is the environmental sensory data corresponding to images recorded by the agent's cameras and \gls{csi} estimated from a subset of vehicles,
        \item The agents communicate through a graph $\commgraph=\rbrk{\agentset, \commgraphedgeset}$, where each agent is a vertex $\agentindex\in\agentset$, and edge $\rbrk{\agentindex, \agentindex^\prime} \in \commgraphedgeset$ indicates that agents $\agentindex$ and $\agentindex^\prime$ can exchange messages%
        \footnote{We assume that nearby \glspl{rsu} can communicate reliably with each other with no transmission error. In fact, we assume that communication is limited to messages whose size is substantially less than the observation of each agent.}%
        . The edge attributes $\edgefeatures_{\agentindex, \agentindex^\prime} \in \widetilde{\edgefeaturesset}_{\rbrk{\agentindex, \agentindex^\prime}}$ correspond to the differences between two \gls{rsu} agents' locations $\edgefeatures_{\agentindex, \agentindex^\prime} = \agentposition_\agentindex - \agentposition_{\agentindex^\prime}$.
        At each time slot, given $\envobservation_\agentindex^t$ and $\edgefeatures_{\agentindex, \agentindex^\prime}$, agent $\agentindex$ can communicate limited messages $\messages_{\agentindex \to \agentindex^\prime}$ to its neighbors.
        The aim of this communication mechanism is to overcome the partial observability of each agent, hence endowing agents with the ability to coordinate while keeping the execution of the policy decentralized~\cite{chafii2023emergent}.
        As we will show later, this communication protocol is crucial to allow agents to detect global state symmetries given only their limited environmental observations.
    \end{itemize}
    \item $\transitionfunction: \statespace \times \beamset_1 \times \dots \times \beamset_{\lvert\agentset\rvert} \times \statespace \to \sbrk{0, 1}$ is the transition function,
    \item $\reward: \statespace \times \beamset_1 \times \dots \times \beamset_{\lvert\agentset\rvert} \to \mathbb{R}$ is the reward function, which we set as the sum rate $\reward_t = \sum_{\userindex\in\userset}\bitrate_\userindex^t$.
    Finally, $\discountfactor \in \sbrk{0, 1}$ is a discount factor.
\end{itemize}
The \gls{mmdp} dynamics proceed as follows.
At timestep $t$, given the state%
\footnote{Note that the vehicles' positions are unknown to the \glspl{bs} which only access unlabeled high-dimensional images and \gls{csi} from users in their area. We refer to the vehicles' positions as the ground-truth state as its realization generates the images and channels observed by the agents.}
of the environment $\state_t$ corresponding to the vehicles' positions, each \gls{bs} agent locally observes multimodal data $\envobservation_\agentindex^t$.
Each agent can exchange limited messages $\messages_{\agentindex \to \agentindex^\prime}$ with its neighbors over the graph edges $\edgefeatures_{\agentindex, \agentindex^\prime}$, and then selects its beam $\beam_\agentindex^t$ from its policy $\policy_\agentindex: \observationspace_\agentindex \times \beamset_\agentindex \to \sbrk{0, 1}$.
All actions $\beam^t = \sbrk{\beam_\agentindex^t}_{\agentindex\in\agentset}$ are executed in the environment, and agents receive reward $\reward_t$.
The environment then transitions to a new state $\state_{t+1}$ according to $\transitionfunction$, the agents collect new observations and so on.
The aim of the \gls{mmdp} is to find a joint action policy $\rbrk{\policy_1, \dots, \policy_{\lvert\agentset\rvert}}$ maximizing the expected discounted return $\mathbb{E}\bigl[\sum_{i=1}^\infty \gamma^i \reward_{t+i}\bigr]$, which is equivalent to solving~\eqref{eq:mm_paper_problem}.

\begin{oonecolumn}
\begin{figure*}[!t]
\centering
\includegraphics[width=.5\textwidth]{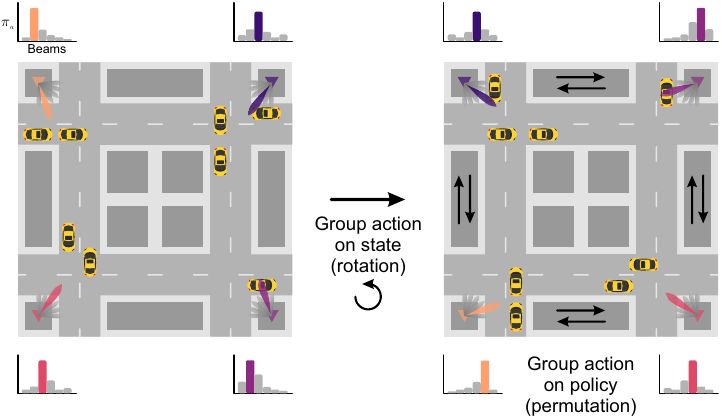}
\caption{A global symmetry: when the vehicles' positions rotate, the optimal policy is permuted between and within agents.}
\label{fig:mm_paper_policy_permutation}
\end{figure*}
\end{oonecolumn}

Approximating the optimal joint policy can be done using off-the-shelf \gls{marl} algorithms.
However, such methods are highly inefficient in terms of the training data needed for convergence.
Added to that, the observation of agents is captured through high-dimensional images and \gls{csi}, and do not necessarily capture both modalities for all users.
To overcome those drawbacks, we exploit the symmetric nature of the considered environment.
Looking at Fig.~\ref{fig:mm_paper_policy_permutation}, showing a top-down view of Fig.~\ref{fig:mm_paper_system_model}, notice that whenever the full state of the system, i.e. positions of the vehicles, rotates by $90$\textdegree, the optimal joint policy is permuted between (and within) agents.
Such instances are called transformation equivalent global state-action pairs.
Having such information as a prior, we enforce those symmetry constraints on our policy, and greatly reduce the search space for an optimal policy.

We proceed by grounding our intuitive observation in rigorous definitions of symmetry under groups and transformations~\cite{pinter2010book},~\cite{barto2001},~\cite{worrall2017harmonic}.

\begin{ttwocolumn}
\begin{figure}[!t]
\centering
\includegraphics[width=.45\textwidth]{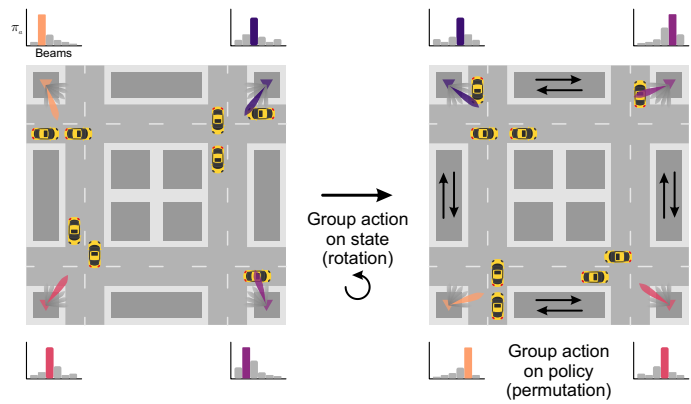}
\caption{A global symmetry: when the vehicles' positions rotate, the optimal policy is permuted between and within agents.}
\label{fig:mm_paper_policy_permutation}
\end{figure}
\end{ttwocolumn}

\subsection{Groups and Transformations}
A \emph{group} is a pair $\rbrk{\group, \binaryop}$, where $\group$ is a set and $\binaryop$ is a binary operation on $\group$ satisfying the conditions: associativity, identity, closure and inverse~\cite{pinter2010book}.
In this paper, we will discuss the symmetries of the rotation group.
For instance, consider the set $\cyclicgroup_n = \cbrk{\rotationmatrix\rbrk{\theta} \mid \theta \in \cbrk{\frac{2 \pi k}{n}, k=0,\dots,n-1}}$ of $\frac{2\pi}{n}$ rotations, equipped the matrix composition operation.
In particular, the case $n=4$ corresponds to the set of $90$\textdegree\ rotations $\cbrk{0\text{\textdegree}, 90\text{\textdegree}, 180\text{\textdegree}, 270\text{\textdegree}}$ where:
\begin{oonecolumn}
\begin{align}
    \label{eq:mm_paper_rotation_matrix} 
    \rotationmatrix\rbrk{\theta} = \biggl[ \;
    \begin{matrix*}[c]
        \cos\theta & -\sin\theta \\[-10pt] \sin\theta & \cos\theta
    \end{matrix*}
    \biggr],
\end{align}
\end{oonecolumn}
\begin{ttwocolumn}
\begin{align}
    \label{eq:mm_paper_rotation_matrix} 
    \rotationmatrix\rbrk{\theta} = \biggl[ \;
    \begin{matrix*}[c]
        \cos\theta & -\sin\theta \\ \sin\theta & \cos\theta
    \end{matrix*}
    \biggr],
\end{align}
\end{ttwocolumn}%
is a rotation matrix representing the act of rotating within Euclidean space.
Matrix multiplication is an associative operation, and $\rotationmatrix\rbrk{0}$ is the identity element, while composing two elements of the set yields another element and each member has an inverse that is also an element of the set.
Hence, the set $\cyclicgroup_4$ is a group under composition.
In fact, $\cyclicgroup_n$ is the cyclic subgroup of the group of continuous rotations $\mathrm{SO}\rbrk{2}=\cbrk{\rotationmatrix \rbrk{\theta} \mid 0 \leq \theta < 2\pi}$.
Notice that each element of the group represents a specific \emph{transformation}, for instance rotation, which is formalized using group actions.

The action of a group $\group$ on a set $X$ is a mapping $\groupaction\!\!: \group \times X \to X$ that satisfies: $\groupaction_g \sbrk{\groupaction_h \sbrk{x}} = \groupaction_{g \binaryop h}\sbrk{x} \;\forall g, h \in \group, x \in X$ and $\groupaction_\groupidentityelement \sbrk{x} = x$ where $\groupidentityelement$ is the identity element.
For instance, the group of $90$\textdegree\ rotations acts on plane vectors by rotating them.
For a given $g \in \group$, the mapping $\groupaction_g\!: X \to X$ is called a \emph{transformation} operator.

Given a mapping $f\!\!: X \to Y$ and a transformation operator $\groupaction_g\!: X \to X$, we say that $f$ is \emph{equivariant} with respect to $\groupaction_g$ if there exists another transformation $\groupactiontwo_g\!: Y \to Y$ (in the output space of $f$), such that $\forall g \in \group, x \in X$:
\begin{equation}\label{eq:mm_paper_equivariance} 
    \groupactiontwo_g\sbrk{f\rbrk{x}} = f\rbrk{\groupaction_g\sbrk{x}}.
\end{equation}
For example, in our use-case, the optimal policy is equivariant to global state rotations, i.e., whenever the state $\state$ undergoes a rotation via $\groupaction_g$, the optimal policy permutes as $\groupactiontwo_g\sbrk{\policy^\star\rbrk{\state}} = \policy^\star\rbrk{\groupaction_g\sbrk{\state}}$.
Here, since we consider discrete actions, $\groupactiontwo_g$ represents the multiplication of the policy with a permutation matrix (depending on $g$, i.e. a rotation).

Furthermore, a particular case of equivariance is \emph{invariance}.
If for all $g \in \group$, $\groupactiontwo_g$ is the identity map of $Y$, i.e., $\forall g \in \group, x \in X, \; f\rbrk{x} = f\rbrk{\groupaction_g\sbrk{x}}$, then we say that $f$ is \emph{invariant} or \emph{symmetric} to $\groupaction_g$.
For instance, in our \gls{mmdp}, the optimal value function is invariant to rotation transformations: $\valuefunction^\star\rbrk{\state} = \valuefunction^\star\rbrk{\groupaction_g\sbrk{\state}}$, in other words, two globally rotated states lead to the same optimal value.

\subsection{Distributed MARL Using Latent Symmetries}
We are interested in utilizing the above symmetry notions to recast our distributed \gls{mmdp} by exploiting the symmetries occurring in the environment to simplify the search for a favorable action policy.
Nevertheless, we must allow for our policy to be executed in a decentralized manner, since we cannot afford sending large multimodal data from all agents to a central server that determines the agents' actions.
Therefore, while each agent relies on its own observation, all agents must coordinate to detect \emph{global} state transformations.

Notice that with only access to its local information, each agent cannot identify symmetries of the global state.
For example, designing the agents' local policies to be equivariant to local state transformations -- when the positions of the vehicles in their respective zones rotate, their local policies transform by a permutation -- does not yield correct global transformation, like the one shown in Fig.~\ref{fig:mm_paper_policycomm_permutation}.

To identify global state transformations with only local information, we remark that when such transformations occur, the local states are transformed and their positions are permuted.
Hence, a global symmetry affects both the agents' local states $\rbrk{\state_\agentindex}$, as well as the features on the communication graph $\rbrk{\edgefeatures_{\agentindex, \agentindex^\prime}}$, which relate the agents' locations.
Consider as a running example the top right \gls{rsu} in Fig.~\ref{fig:mm_paper_policycomm_permutation}, in a given state of the environment.
The agent selects an action from its policy given its local state and communication from its neighbors.
Whenever the global environment transitions to a transformed state, we wish to constrain the policy of the top left agent such that it selects a transformed version of the previous action (previously taken by the top right agent), given its locally transformed state and communication.
In other words, from each agent's local perspective, if the vehicles' positions in its region are rotated, and the messages received from its neighbors are transformed similarly, then the agent must execute an equivalently transformed action from the same policy.
As such, by virtue of the communication graph, we can design our policy networks to be globally equivariant with distributed execution depending on agents' local states.

\begin{oonecolumn}
\begin{figure*}[!t]
\centering
\subfloat[Original state.]{\includegraphics[width=.25\textwidth]{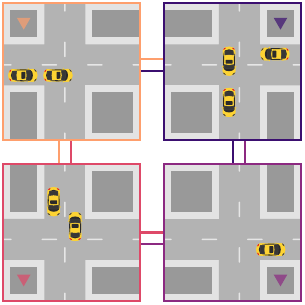}}
\hspace{5em}
\subfloat[Transformed state.]{\includegraphics[width=.25\textwidth]{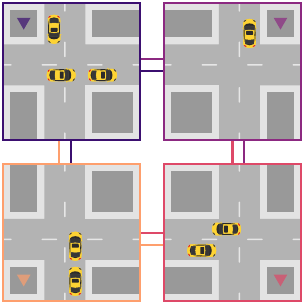}}
\caption{Two equivalent states: a global state transformation is equivalent to a rotation of local states, followed by a permutation of local states and communication graph edges.}
\label{fig:mm_paper_policycomm_permutation}
\end{figure*}
\end{oonecolumn}
\begin{ttwocolumn}
\begin{figure}[!t]
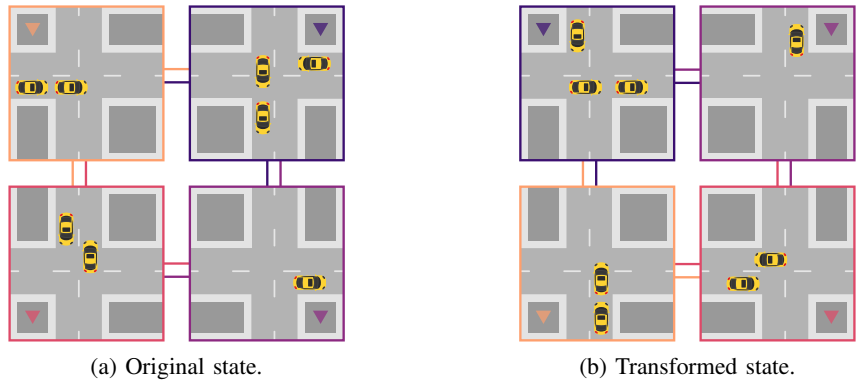

\centering
\subfloat[Original state.]{\includegraphics[width=.475\linewidth]{figures/comm-perma.pdf}}
\hfill
\subfloat[Transformed state.]{\includegraphics[width=.475\linewidth]{figures/comm-permb.pdf}}
\caption{Two equivalent states: a global state transformation is equivalent to a rotation of local states, followed by a permutation of local states and communication graph edges.}
\label{fig:mm_paper_policycomm_permutation}
\end{figure}
\end{ttwocolumn}

With the above reasoning, our distributed \gls{mmdp} belongs to the class of \emph{distributed \glspl{mmdp} with symmetries}~\cite{pol2022multiagent}.
Formally, a distributed \gls{mmdp} with symmetries is a distributed \gls{mmdp} for which the following equations hold for at least one non-trivial set of group transformations $\groupaction_g\!: \statespace \to \statespace$, and for every state $\state$, $\groupactiontwo_g^\state\!: \beamset \to \beamset$, such that:
\begin{oonecolumn}
\begin{align}
    \label{eq:mm_paper_mmdp_symmetric_reward}
    \reward\rbrk{\state, \beam} &= \reward\rbrk{\groupaction_g \sbrk{\state}, \groupactiontwo_g^ \state\sbrk{\beam}} &\qquad& \forall g \in \group, \state \in \statespace, \beam \in \beamset, \\
    \label{eq:mm_paper_mmdp_symmetric_transition}
    \transitionfunction\rbrk{\state, \beam, \state^\prime} &= \transitionfunction\rbrk{\groupaction_g \sbrk{\state}, \groupactiontwo_g^ \state\sbrk{\beam}, \groupaction_g \sbrk{\state^\prime}} &\qquad& \forall g \in \group, \state, \state^\prime \in \statespace, \beam \in \beamset,
\end{align}
\end{oonecolumn}%
\begin{ttwocolumn}
\begin{align}
    \label{eq:mm_paper_mmdp_symmetric_reward}
    \reward\rbrk{\state, \beam} &= \reward\rbrk{\groupaction_g \sbrk{\state}, \groupactiontwo_g^ \state\sbrk{\beam}}, \\
    \label{eq:mm_paper_mmdp_symmetric_transition}
    \transitionfunction\rbrk{\state, \beam, \state^\prime} &= \transitionfunction\rbrk{\groupaction_g \sbrk{\state}, \groupactiontwo_g^ \state\sbrk{\beam}, \groupaction_g \sbrk{\state^\prime}}, \\
    \nonumber &\hspace{-2em}\forall g \in \group, \state, \state^\prime \in \statespace, \beam \in \beamset,
\end{align}
\end{ttwocolumn}%
where equivalently to acting on $\state$ with $\groupaction_g$, we can act on the agents' local states and edge features separately with $\tilde{\groupaction}_g\!: \statespace_\agentindex \to \statespace_\agentindex$ and $\tilde{\groupactionedge}_g\!: \edgefeaturesset \to \edgefeaturesset$, to end up in the same global state:
\begin{equation}\label{eq:mm_paper_mmdp_groupaction_decomposition}
    \groupaction_g \sbrk{\state} = \Bigl(\permutation_g^\state \bigl[\{ \tilde{\groupaction}_g [\state_\agentindex]\}_{\agentindex \in \agentset} \bigr], \permutation_g^\edgefeatures \bigr[\{\tilde{\groupactionedge}_g [\edgefeatures_{\agentindex, \agentindex^\prime}]\}_{\rbrk{\agentindex, \agentindex^\prime} \in \commgraphedgeset} \bigr] \Bigr)
\end{equation}
where $\permutation_g^\state$ and $\permutation_g^\edgefeatures$ are state and edge permutations respectively.

Namely, a distributed \gls{mmdp} is symmetric if there exists state and action transformations which leave the reward and transition functions invariant (eqs. \eqref{eq:mm_paper_mmdp_symmetric_reward}, \eqref{eq:mm_paper_mmdp_symmetric_transition}).
As argued above, the group action on the global state can be decomposed into separate group actions on the local agent states and communication edges (eq. \eqref{eq:mm_paper_mmdp_groupaction_decomposition}).
Two state-action pairs $\rbrk{\state, \beam}$ and $\rbrk{\groupaction_g \sbrk{\state}, \groupactiontwo_g^ \state\sbrk{\beam}}$ satisfying eqs. \eqref{eq:mm_paper_mmdp_symmetric_reward} and \eqref{eq:mm_paper_mmdp_symmetric_transition} are called equivalent.
The importance of identifying a symmetric \gls{mmdp} lies in the fact that they admit symmetric optimal policies, i.e., whenever the state transforms, the policy must transform accordingly (see Fig.~\ref{fig:mm_paper_policy_permutation}).
By imbuing those inductive biases into our policy training methods, we can significantly simplify the search for an optimal policy.

Although our distributed \gls{mmdp} is symmetric, it is crucial to notice that the symmetries occur at the level of the state $\state=\sbrk{\userposition_\userindex}_{\userindex\in\userset}$, containing the ground-truth vehicle positions, which is not observable to the \gls{rsu} agents.
Instead, our agents only receive high-dimensional and incomplete observations $\rbrk{\envobservation_\agentindex}$ comprising images and \gls{csi}, which do not obey symmetric properties; meaning that when the vehicles' positions undergo a transformation, these observations do not transform by a tractable transformation~\cite{park22a}.
Hence we term our \gls{mmdp} as a distributed \gls{mmdp} with \emph{latent symmetries}, where agents must extract \emph{low-dimensional symmetric} features (vehicle positions $\userposition_\userindex$ in our case) from high-dimensional non-symmetric data $\observation_\agentindex$ (multimodal images and channels), to solve a downstream task in an efficient manner.

\begin{remark}
    Our work significantly differs from the previous literature on exploiting symmetries in \gls{marl}~\cite{yu2023esp},~\cite{pol2022multiagent},~\cite{zhou2024symmetry},~\cite{shi2025symmetry}.
    In previous works, it is assumed that the observable state of the environment is symmetric; for example~\cite{zhou2024symmetry} and~\cite{shi2025symmetry} assume that the positions of the users in the network are perfectly known by the \gls{bs} agents.
    In this work, while the symmetric nature of the environment is assumed as a prior (in terms of user positions), the agents do not directly access this state of the environment, and only observe unlabeled high-dimensional features (images and wireless channels).
\end{remark}

\begin{remark}
    Since our work is a first attempt to study the impact of symmetries in multimodal wireless networks, we limit our attention and use-case on the group $\group = \cyclicgroup_4$ of $90$\textdegree\ rotations.
    However, the framework we develop is versatile and general for other symmetry groups (like the Euclidean group $\mathrm{E}\rbrk{2}$ of translations, rotations and reflections), which we leave for future works.
\end{remark}

To proceed with solving our problem, we propose a scheme of two complementary steps, as shown in Fig.~\ref{fig:mm_paper_system_model}.
First in Section~\ref{section:mm_paper_ssl}, we devise a self-supervised learning framework for multimodal sensing where each \gls{rsu} agent estimates the positions of the terminals in its zone from high-dimensional observations.
Given this sensing data, in Section~\ref{section:mm_paper_marl} we design a distributed globally equivariant \gls{marl} policy network to solve our downstream rate maximization problem efficiently.

\section{Self-supervised Multimodal Sensing and Alignment}
\label{section:mm_paper_ssl}

\begin{oonecolumn}
\begin{figure}
    \centering
    \includegraphics[width=.6\linewidth]{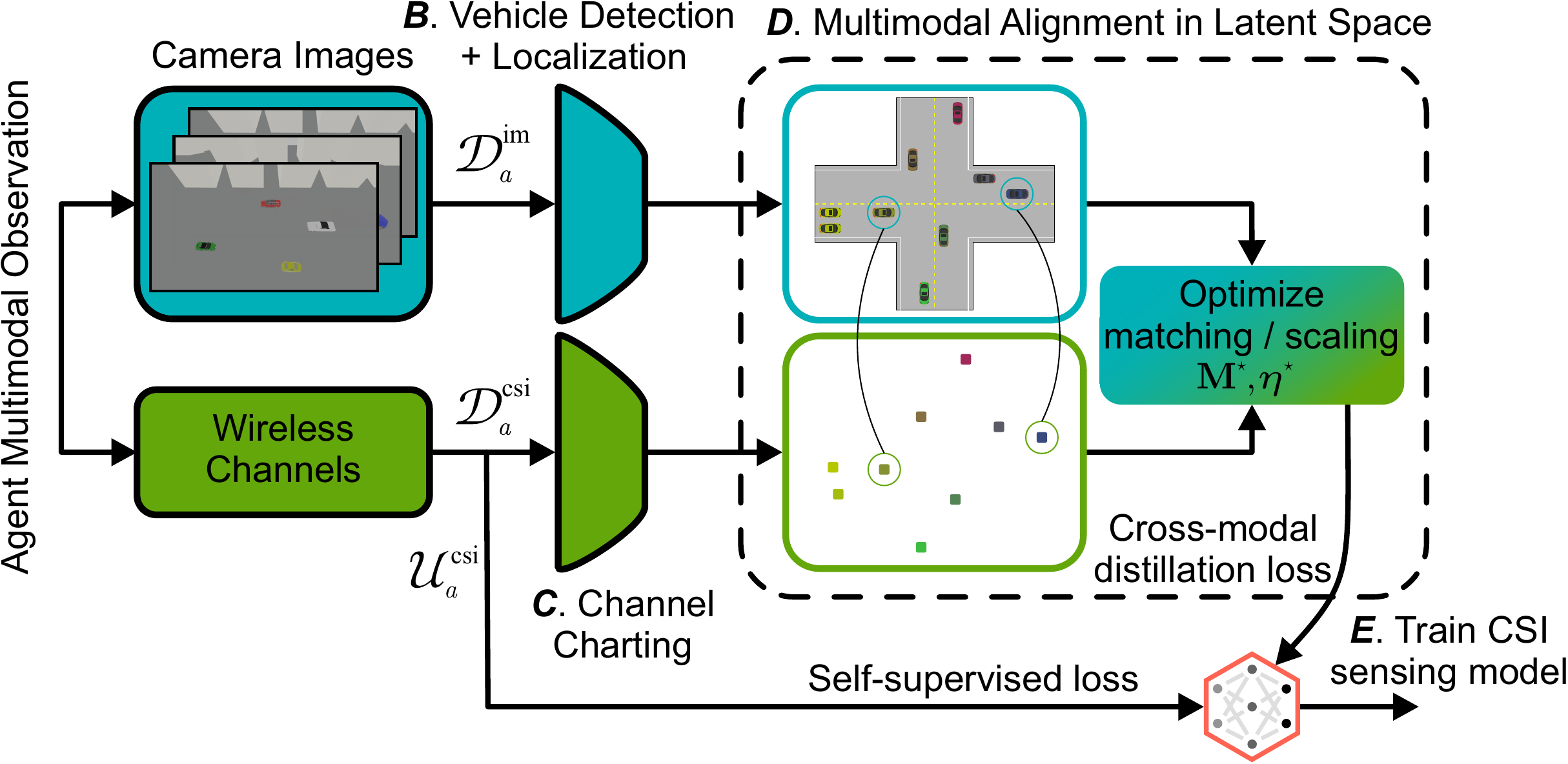}
    \caption{Proposed self-supervised multimodal sensing framework.}
    \label{fig:mm_paper_multimodal_framework}
\end{figure}    
\end{oonecolumn}

\begin{ttwocolumn}
\begin{figure}
    \centering
    \includegraphics[width=\linewidth]{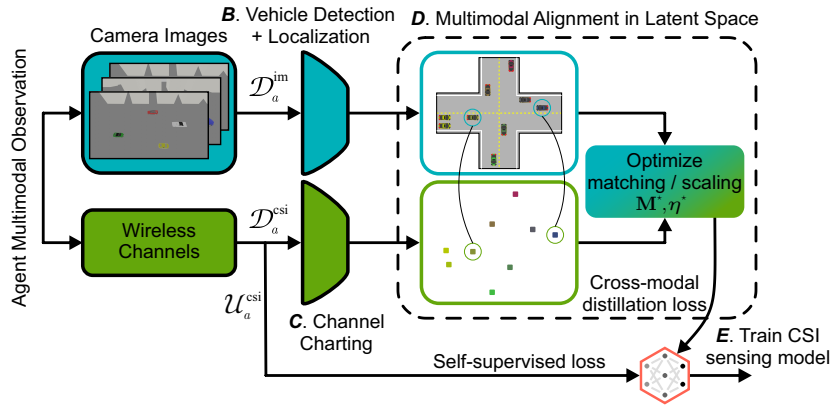}
    \caption{Proposed self-supervised multimodal sensing framework.}
    \label{fig:mm_paper_multimodal_framework}
\end{figure}    
\end{ttwocolumn}

\subsection{Main Idea and Solution Scheme}
In this section, we develop a novel framework for \emph{multimodal sensing}, shown in Fig.~\ref{fig:mm_paper_multimodal_framework}, allowing each \gls{rsu} agent $\agentindex \in \agentset$ to estimate the locations of the vehicles in its zone $\sbrk{\estimateduserposition^t_\userindex}_{\userindex \in \userset_\agentindex}$ using unlabeled and incomplete environmental observations $\envobservation_\agentindex^t$.
We recall that the multimodal data available to agent $\agentindex$ corresponds to a collection of images $\rbrk{\image_{\agentindex, \cameraindex}^t}_{\cameraindex \in \setcameras_\agentindex}$ from the \gls{bs} cameras showing a subset of users $\userset_\agentindex^{t,\text{im}}$, and a collection of \gls{csi} $\rbrk{\channel^{t}_{\agentindex, \userindex}}$ from a subset of users $\userset_\agentindex^{t,\text{csi}}$.
The data is \emph{unlabeled}, meaning that the agent has no information on the matching between the modalities, i.e., which estimated wireless channel corresponds to which user in the image.
Moreover, during each time slot $t$, the multimodal observation of an agent might be \emph{incomplete}, in the sense that not all vehicles appear in both modalities.

In order to estimate the positions $\sbrk{\estimateduserposition^t_\userindex}_{\userindex \in \userset_\agentindex}$ in such a setting, we propose an offline
training method%
\footnote{Although this routine is offline, each \gls{bs} agent runs our proposed sensing technique locally, using multimodal data from collected from its region only, and no communication occurs between the different agents (or with a central server) at this stage.}
where each agent forms two \emph{local} data sets: a data set of images $\dataset_\agentindex^{\text{im}} = \cbrk{\image_{\agentindex, \cameraindex}^t \mid \cameraindex \in \setcameras_\agentindex, t \leq \offlinetrainingperiod}$ and a channel data set $\dataset_\agentindex^{\text{csi}} = \cbrk{\channel^{t}_{\agentindex, \userindex} \mid \userindex \in \cup_{t \leq \offlinetrainingperiod} \userset_\agentindex^{t,\text{csi}}}$, where $\offlinetrainingperiod$ is the number of data collection slots.
Since the matching between the modalities is unknown, we seek to align the \emph{low dimensional features} of $\dataset_\agentindex^{\text{im}}$ and $\dataset_\agentindex^{\text{csi}}$ in a \emph{self-supervised} fashion.
Precisely, we use the image data to perform direct localization of (some of) the vehicles, while we embed the \gls{csi} data into representations conserving their relative location features.
Since the two representation sets share the same structure, we align their \emph{latent} features in a common space, corresponding to the vehicles' positions, by optimizing a \emph{latent} matching function over their intra-sample distances.
Finally, we train a parametrized function to perform sensing based on this aligned data, so that it can be used for inference or imputation during online deployment.
As such, each agent learns to perform multimodal sensing and alignment without any external supervision nor reward (such data labeling and annotation), and \emph{our algorithm is therefore a self-supervised learning method}.
As our technique is run locally for each \gls{bs} agent, we formalize our overall idea in the following subsections while fixing the agent index $\agentindex \in \agentset$.

\begin{oonecolumn}
\begin{figure*}
    \centering
    \subfloat[A sample camera image.]{
        \includegraphics[width=0.32\linewidth]{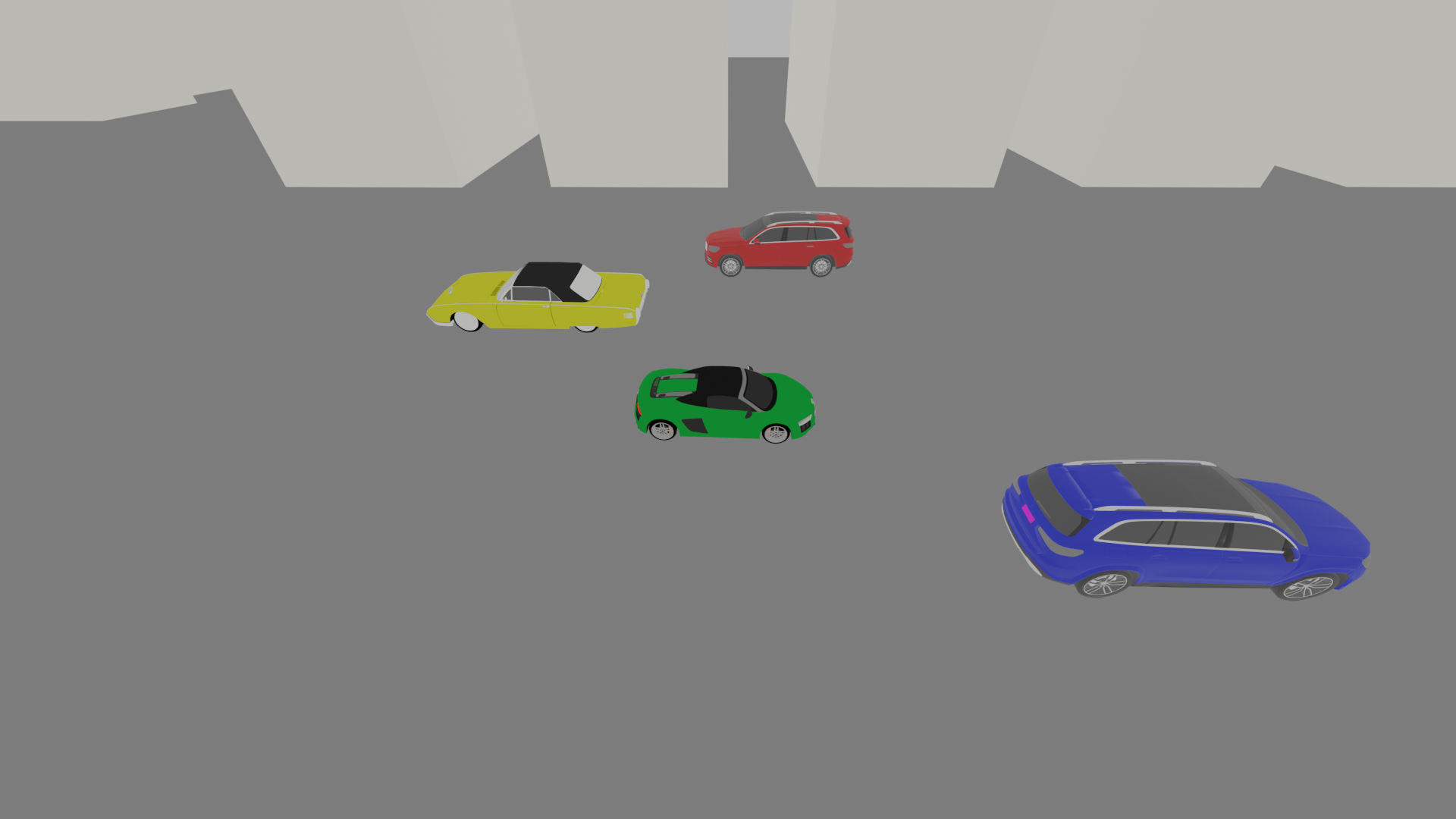}
        \label{fig:mm_paper_sample_cam_image}}
    \hfill
    \subfloat[Vehicle detection.]{
        \includegraphics[width=0.32\linewidth]{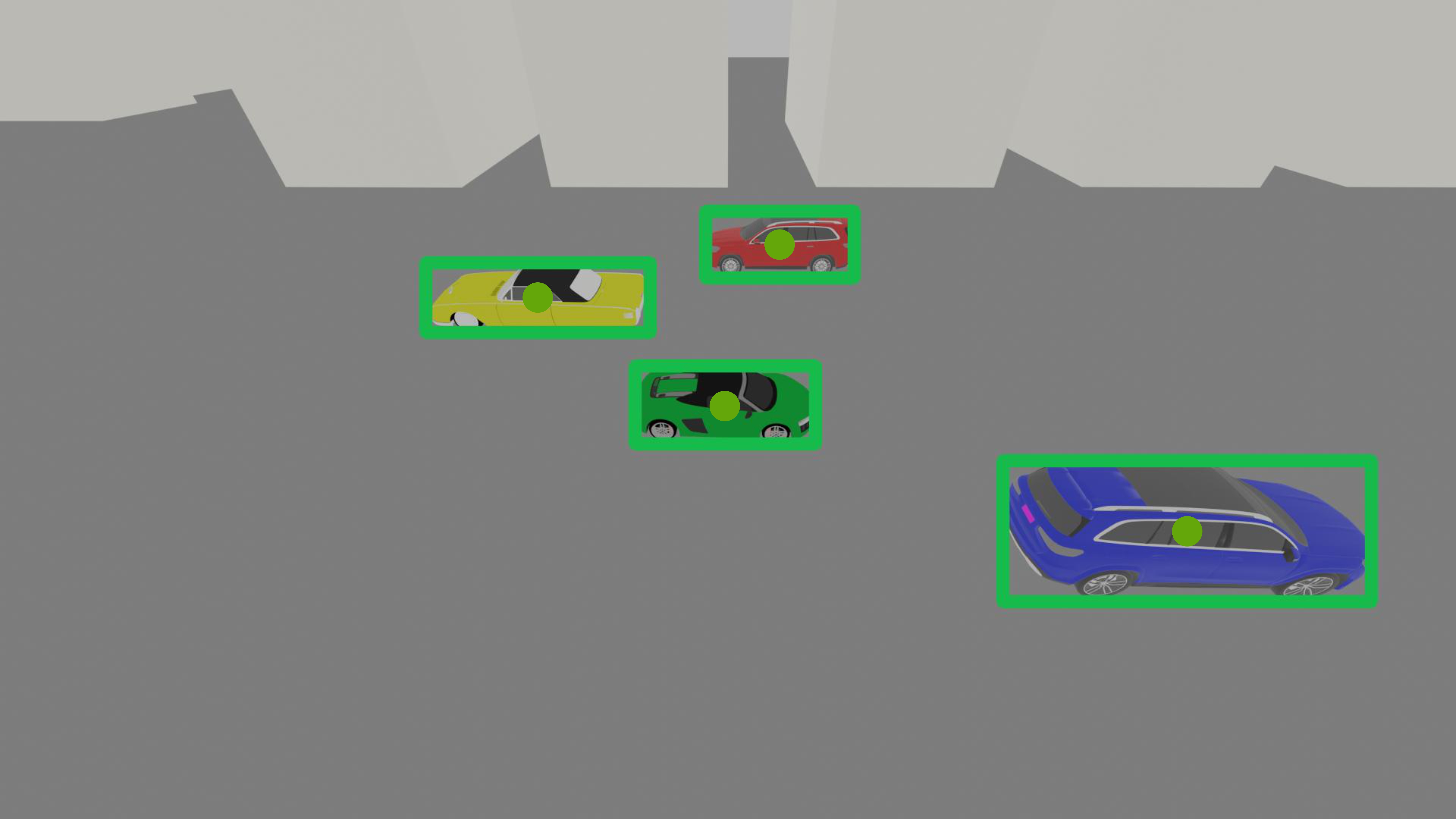}
        \label{fig:mm_paper_vehicle_detection}}
    \hfill
    \subfloat[Vehicle localization.]{
        \includegraphics[width=0.3\linewidth]{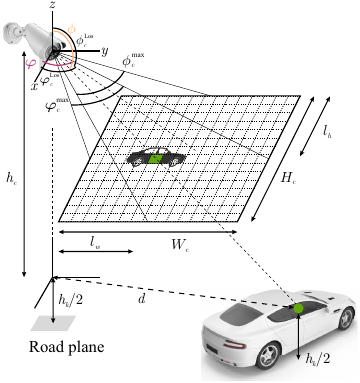}
        \label{fig:mm_paper_image_localization}}
    \caption{Proposed image-based localization.}
    \label{fig:mm_paper_image_processing}
\end{figure*}
\end{oonecolumn}
\begin{ttwocolumn}
\begin{figure*}
    \centering
    \subfloat[A sample camera image.]{
        \includegraphics[width=0.25\linewidth]{figures/vehicle_sample.png}
        \label{fig:mm_paper_sample_cam_image}}
    \hfill
    \subfloat[Vehicle detection.]{
        \includegraphics[width=0.25\linewidth]{revised_figures/vehicle_detection.jpg}
        \label{fig:mm_paper_vehicle_detection}}
    \hfill
    \subfloat[Vehicle localization.]{{\setlength{\fboxrule}{0.01pt}\fcolorbox{purple}{white}{
        \includegraphics[width=0.25\linewidth]{revised_figures/loc.pdf}
        \label{fig:mm_paper_image_localization}}}}
    \caption{Proposed image-based localization.}
    \label{fig:mm_paper_image_processing}
\end{figure*}
\end{ttwocolumn}

\subsection{Image Processing}\label{subsection:mm_paper_image_processing}
We seek to extract the locations of the vehicles from the images in $\dataset_\agentindex^{\text{im}}$, which would amounts to forming another dataset of sensing features $\lowdimdataset_\agentindex^{\text{im}} = \cbrk{\estimateduserposition_\userindex^t \mid \userindex \in \cup_{t \leq \offlinetrainingperiod} \userset_\agentindex^{t,\text{im}}}$, where $\estimateduserposition_\userindex^t$ is an estimate of the location $\userposition_\userindex^t$ of terminal $\userindex$.
We start by invoking the following lemma.

\begin{oonecolumn}
\begin{lemma}\label{lemma:mm_paper_localization}
    The physical location $\userposition=\sbrk{\userpositionxy_x, \userpositionxy_y}$ relatively to camera $\cameraindex$ of pixel coordinates $\sbrk{\pixelcoordinate_w, \pixelcoordinate_h}$ can be estimated as:
    \begin{align}
        \label{eq:mm_paper_pixel_localizationx}
        \userpositionxy_x = \cameraheight_\cameraindex \tan \rbrk{\elevationangle^{\textnormal{Los}}_\cameraindex + \tan^{-1} \rbrk{\frac{2 \pixelcoordinate_h - \imageheight_\cameraindex}{\imageheight_\cameraindex} \tan \frac{\elevationangle^{\textnormal{max}}_\cameraindex}{2}}} \cos \rbrk{\azimuthangle^{\textnormal{Los}}_\cameraindex + \tan^{-1} \rbrk{\frac{2 \pixelcoordinate_w - \imagewidth_\cameraindex}{\imagewidth_\cameraindex} \tan \frac{\azimuthangle^{\textnormal{max}}_\cameraindex}{2}}}, \\
        \label{eq:mm_paper_pixel_localizationy}
        \userpositionxy_y = \cameraheight_\cameraindex \tan \rbrk{\elevationangle^{\textnormal{Los}}_\cameraindex + \tan^{-1} \rbrk{\frac{2 \pixelcoordinate_h - \imageheight_\cameraindex}{\imageheight_\cameraindex} \tan \frac{\elevationangle^{\textnormal{max}}_\cameraindex}{2}}} \sin \rbrk{\azimuthangle^{\textnormal{Los}}_\cameraindex + \tan^{-1} \rbrk{\frac{2 \pixelcoordinate_w - \imagewidth_\cameraindex}{\imagewidth_\cameraindex} \tan \frac{\azimuthangle^{\textnormal{max}}_\cameraindex}{2}}},
    \end{align}
    where $\rbrk{\cameraheight_\cameraindex, \azimuthangle^{\textnormal{Los}}_\cameraindex, \elevationangle^{\textnormal{Los}}_\cameraindex, \azimuthangle^{\textnormal{max}}_\cameraindex, \elevationangle^{\textnormal{max}}_\cameraindex, \imageheight_\cameraindex, \imagewidth_\cameraindex}$ are camera parameters defined previously.
\end{lemma}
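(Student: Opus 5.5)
The argument is a pinhole-camera back-projection onto the flat road plane, done in three steps: recover the ray angles from the pixel coordinates, intersect that ray with the ground, and convert the result to Cartesian coordinates.

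\textbf{Setup.} Place the camera centre at height $\cameraheight_\cameraindex$ directly above the origin of a planar coordinate system on the road, which is flat by assumption. In that frame, describe any viewing ray by two angles:
\begin{itemize}
\item an azimuth $\azimuthangle$, measured in the horizontal plane in the same reference system as $\azimuthangle^{\textnormal{Los}}_\cameraindex$;
\item an angle $\elevationangle$ measured from the downward vertical, which is the convention consistent with the $\tan$ in the statement.
\end{itemize}
The proof then consists of computing $(\azimuthangle,\elevationangle)$ from the pixel $[\pixelcoordinate_w,\pixelcoordinate_h]$ and intersecting the resulting ray with the ground plane.

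\textbf{Step 1: pixel to ray angles.} Model the camera as an ideal pinhole whose image plane sits at unit focal distance along the line-of-sight axis. The image spans $[-\tan(\azimuthangle^{\textnormal{max}}_\cameraindex/2),\tan(\azimuthangle^{\textnormal{max}}_\cameraindex/2)]$ horizontally and $[-\tan(\elevationangle^{\textnormal{max}}_\cameraindex/2),\tan(\elevationangle^{\textnormal{max}}_\cameraindex/2)]$ vertically. Pixels map affinely onto this plane, so the normalized coordinate $(2\pixelcoordinate_w-\imagewidth_\cameraindex)/\imagewidth_\cameraindex\in[-1,1]$ corresponds to the metric offset
\[
\frac{2\pixelcoordinate_w-\imagewidth_\cameraindex}{\imagewidth_\cameraindex}\tan\frac{\azimuthangle^{\textnormal{max}}_\cameraindex}{2}.
\]
Taking $\tan^{-1}$ of this offset gives the angular deviation from the optical axis. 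Adding the line-of-sight angle $\azimuthangle^{\textnormal{Los}}_\cameraindex$ yields the ray's absolute azimuth. The vertical direction is handled identically and gives the absolute angle $\elevationangle$ of the ray from the vertical.

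\textbf{Step 2: ray–ground intersection.} A ray leaving height $\cameraheight_\cameraindex$ at angle $\elevationangle$ from the downward vertical meets the flat road at horizontal distance $\rho=\cameraheight_\cameraindex\tan\elevationangle$. This is elementary right-triangle trigonometry.

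\textbf{Step 3: polar to Cartesian.} Writing $\userposition=\rho[\cos\azimuthangle,\sin\azimuthangle]$ gives \eqref{eq:mm_paper_pixel_localizationx}–\eqref{eq:mm_paper_pixel_localizationy} directly.

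\textbf{Main obstacle.} Step 1 treats the horizontal and vertical angular deviations as decoupled and simply adds them to the line-of-sight angles. For an exact pinhole model this is only true along the image's central row and column. In a tilted camera the true back-projected azimuth depends on both pixel coordinates, and similarly for the elevation. I would therefore present the result, as the statement itself does with ``can be estimated'', as a separable small-angle approximation: the two rotations about the vertical and horizontal axes are treated as commuting, and the residual error is second order in the off-axis angles.

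If an exact statement were needed, I would instead form the ray direction $(x,y,1)$ in camera coordinates, rotate it by the line-of-sight pitch and yaw, and intersect it with the plane $z=-\cameraheight_\cameraindex$. I would then check that this reduces to the stated formulas when either normalized pixel offset is zero.
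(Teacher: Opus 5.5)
Your Steps 1--3 are exactly the paper's proof. The appendix writes $p_x=d\cos\varphi$, $p_y=d\sin\varphi$, $\tan\phi=d/h_c$. It gets $\phi$ and $\varphi$ from the same separable relations, $\tan(\phi-\phi^{\textnormal{Los}}_c)=\frac{2l_h-H_c}{H_c}\tan\frac{\phi^{\textnormal{max}}_c}{2}$ and its horizontal analogue, asserted ``by inspection'' of the figure. Your half-angle form is the consistent one; the appendix drops the $/2$ once. So your proof holds at the paper's level of rigor, and the paper never raises the exactness question you flag.

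Your caveat is right in spirit but wrong in its details. Write the ray as $f+u\,r+v\,e$, where:
\begin{itemize}
\item $f$ is the line-of-sight axis at angle $\phi_0=\phi^{\textnormal{Los}}_c$ from the downward vertical;
\item $r$ is the horizontal image-row direction;
\item $e=\partial f/\partial\phi_0$;
\item $u,v$ are your two metric offsets.
\end{itemize}
Intersect with the road and set $\phi=\phi_0+\tan^{-1}v$. The ground point then has forward coordinate exactly $h_c\tan\phi$, but lateral coordinate $h_c u/(\sqrt{1+v^2}\cos\phi)$. So the true azimuth offset is $\tan^{-1}\bigl(u/(\sqrt{1+v^2}\sin\phi)\bigr)$, not $\tan^{-1}u$.

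This has two consequences for your caveat:
\begin{itemize}
\item The formulas are exact only on the central column $u=0$. On the central row they fail for every $u\neq 0$ once the camera is tilted down ($\sin\phi_0<1$), so your proposed check at $v=0$ would not go through.
\item The azimuth error is first order in $u$, with coefficient $1/(\sqrt{1+v^2}\sin\phi)-1$, not second order. The ``commuting small rotations'' heuristic fails because the line-of-sight tilt is a large rotation. A small pan about the tilted image axis moves the ray's azimuth by about $u/\sin\phi_0$, not $u$. Only the range error is second order in $u$.
\end{itemize}
If you keep the remark, state it this way. It does not affect the lemma as the paper proves it, since the paper simply adopts the separable model.
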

\end{oonecolumn}
\begin{ttwocolumn}
\begin{lemma}\label{lemma:mm_paper_localization}
    The physical location $\userposition=\sbrk{\userpositionxy_x, \userpositionxy_y}$ relatively to camera $\cameraindex$ of pixel coordinates $\sbrk{\pixelcoordinate_w, \pixelcoordinate_h}$ can be estimated as:
    \begin{align}
        \begin{split}\label{eq:mm_paper_pixel_localizationx}
            \userpositionxy_x = &\cameraheight_\cameraindex \tan \rbrk{\elevationangle^{\text{Los}}_\cameraindex + \tan^{-1} \rbrk{\frac{2 \pixelcoordinate_h - \imageheight_\cameraindex}{\imageheight_\cameraindex} \tan \frac{\elevationangle^{\text{max}}_\cameraindex}{2}}} \\ &\cos \rbrk{\azimuthangle^{\text{Los}}_\cameraindex + \tan^{-1} \rbrk{\frac{2 \pixelcoordinate_w - \imagewidth_\cameraindex}{\imagewidth_\cameraindex} \tan \frac{\azimuthangle^{\text{max}}_\cameraindex}{2}}},
        \end{split}
        \\
        \begin{split}\label{eq:mm_paper_pixel_localizationy}
            \userpositionxy_y = &\cameraheight_\cameraindex \tan \rbrk{\elevationangle^{\text{Los}}_\cameraindex + \tan^{-1} \rbrk{\frac{2 \pixelcoordinate_h - \imageheight_\cameraindex}{\imageheight_\cameraindex} \tan \frac{\elevationangle^{\text{max}}_\cameraindex}{2}}} \\ &\sin \rbrk{\azimuthangle^{\text{Los}}_\cameraindex + \tan^{-1} \rbrk{\frac{2 \pixelcoordinate_w - \imagewidth_\cameraindex}{\imagewidth_\cameraindex} \tan \frac{\azimuthangle^{\text{max}}_\cameraindex}{2}}},
        \end{split}
    \end{align}
    where $\rbrk{\cameraheight_\cameraindex, \azimuthangle^{\textnormal{Los}}_\cameraindex, \elevationangle^{\textnormal{Los}}_\cameraindex, \azimuthangle^{\textnormal{max}}_\cameraindex, \elevationangle^{\textnormal{max}}_\cameraindex, \imageheight_\cameraindex, \imagewidth_\cameraindex}$ are camera parameters defined previously.
\end{lemma}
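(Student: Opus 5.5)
The formula is an inverse perspective mapping under a pinhole camera looking at a flat road, so I would prove it by tracing the ray through the given pixel until it meets the ground plane. Place the camera at the origin of a local frame, at height $\cameraheight_\cameraindex$ above the road plane. The $x$-$y$ axes lie in the road plane and azimuths are measured in that plane. I take the elevation angle as measured from the downward vertical, i.e.\ the nadir, since this is the convention under which a $\tan$ of the elevation gives ground range. Then a ray leaving the camera with elevation $\elevationangle$ and azimuth $\azimuthangle$ hits the flat road at horizontal distance $\cameraheight_\cameraindex \tan\elevationangle$. Its ground point is therefore $\cameraheight_\cameraindex\tan\elevationangle\,[\cos\azimuthangle, \sin\azimuthangle]$, which is just polar-to-Cartesian conversion. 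This reduces the lemma to showing that the pixel $\sbrk{\pixelcoordinate_w, \pixelcoordinate_h}$ corresponds to the angles
\begin{equation*}
    \azimuthangle = \azimuthangle^{\textnormal{LoS}}_\cameraindex + \tan^{-1}\rbrk{\tfrac{2\pixelcoordinate_w - \imagewidth_\cameraindex}{\imagewidth_\cameraindex}\tan\tfrac{\azimuthangle^{\textnormal{max}}_\cameraindex}{2}}, \qquad
    \elevationangle = \elevationangle^{\textnormal{LoS}}_\cameraindex + \tan^{-1}\rbrk{\tfrac{2\pixelcoordinate_h - \imageheight_\cameraindex}{\imageheight_\cameraindex}\tan\tfrac{\elevationangle^{\textnormal{max}}_\cameraindex}{2}}.
\end{equation*}

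To obtain these angle expressions, I model the image sensor as a plane at focal distance $f$ from the optical center, perpendicular to the line-of-sight direction $(\azimuthangle^{\textnormal{LoS}}_\cameraindex, \elevationangle^{\textnormal{LoS}}_\cameraindex)$. The horizontal field of view $\azimuthangle^{\textnormal{max}}_\cameraindex$ fixes the half-width of the image plane at $f\tan(\azimuthangle^{\textnormal{max}}_\cameraindex/2)$, and this half-width corresponds to $\imagewidth_\cameraindex/2$ pixels. A pixel at column $\pixelcoordinate_w$ sits at a signed offset $(\pixelcoordinate_w - \imagewidth_\cameraindex/2)$ pixels from the image center. In metric units this offset is $\frac{2\pixelcoordinate_w - \imagewidth_\cameraindex}{\imagewidth_\cameraindex} f\tan(\azimuthangle^{\textnormal{max}}_\cameraindex/2)$. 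Dividing by $f$ and taking $\tan^{-1}$ gives the angular deviation from the optical axis, and adding the line-of-sight azimuth gives $\azimuthangle$. The same argument along image rows, with $\imageheight_\cameraindex$ and $\elevationangle^{\textnormal{max}}_\cameraindex$, gives $\elevationangle$. Substituting both angles into the ground-intersection formula yields \eqref{eq:mm_paper_pixel_localizationx} and \eqref{eq:mm_paper_pixel_localizationy}.

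The main obstacle is that the azimuth and elevation deviations do not decouple exactly in a true pinhole model. A pixel off both image axes defines a ray whose azimuth offset depends on its vertical position, because of the $1/\cos$ factor from the tilted image plane. This is why the lemma says the location "can be estimated as" rather than given exactly. I would state explicitly that each angular offset is computed independently along its own image axis, treating the two axes as separable; this is exact along the central row and column and a good approximation for moderate fields of view. Two further steps, which I would settle first, are the sign conventions: row index increasing toward larger elevation, and azimuth measured consistently with the camera's reference frame. Because the angles may be given in an arbitrary reference system, the resulting position is relative to camera $\cameraindex$ in that system. I would note that converting it to the \gls{rsu} frame only requires a known rigid transformation.
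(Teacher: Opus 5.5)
Your proposal follows the same route as the paper's proof. You write the ground point as $d[\cos\azimuthangle, \sin\azimuthangle]$ with $d = \cameraheight_\cameraindex \tan\elevationangle$ (elevation taken from the nadir). You recover each angle by equating the tangent of its deviation from the line of sight with the normalized pixel offset times the tangent of the half field of view, and then substitute. Your focal-plane argument just makes explicit the tangent-ratio step that the paper reads off its figure. It also carries the half-angle $\tan(\elevationangle^{\textnormal{max}}_\cameraindex/2)$ consistently, whereas the paper's intermediate elevation ratio drops the factor $1/2$.

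One remark in your caveat is wrong, though nothing in the derivation depends on it. The separable model is exact along the central column, but not along the central row of a tilted camera. Take $\azimuthangle^{\textnormal{LoS}}_\cameraindex = 0$ and $s = \frac{2\pixelcoordinate_w - \imagewidth_\cameraindex}{\imagewidth_\cameraindex}\tan\frac{\azimuthangle^{\textnormal{max}}_\cameraindex}{2}$. A central-row pixel then defines the ray $(\sin\elevationangle^{\textnormal{LoS}}_\cameraindex, s, -\cos\elevationangle^{\textnormal{LoS}}_\cameraindex)$, which meets the road at $\bigl(\cameraheight_\cameraindex\tan\elevationangle^{\textnormal{LoS}}_\cameraindex,\ \cameraheight_\cameraindex s/\cos\elevationangle^{\textnormal{LoS}}_\cameraindex\bigr)$. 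Its ground azimuth satisfies $\tan\azimuthangle = s/\sin\elevationangle^{\textnormal{LoS}}_\cameraindex$ rather than $\tan\azimuthangle = s$. Its nadir angle also exceeds $\elevationangle^{\textnormal{LoS}}_\cameraindex$ for $s \neq 0$, so both the range and the bearing given by the formula are off. The reason is that the angle measured in the tilted plane spanned by the optical axis and the image's horizontal axis is not the ground-plane azimuth. Since the lemma only claims an estimate, this just means the approximation error is present on the central row as well; your proof of the stated formula is otherwise fine.
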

\end{ttwocolumn}%
\begin{proof}
    The proof is deferred to Appendix~\ref{proof:mm_paper_localization}.
\end{proof}

Lemma~\ref{lemma:mm_paper_localization} allows us to perform direct sensing given the pixel coordinate of a vehicle in an image $\image_{\agentindex, \cameraindex}^t$.
To detect the vehicles' coordinates in $\dataset_\agentindex^{\text{im}}$, we employ off-the-shelf \gls{ml}-based models, known for their superlative results in image segmentation tasks.
Particularly, we utilize the seventh version of the celebrated model You Only Look Once (YOLOv7)~\cite{wang2023yolov7} due to its fast and precise predictions, and lightweight hardware integration for industrial implementations.
We note that the \gls{rsu} does not train the model from scratch, but readily utilizes a pre-trained YOLOv7 model that outputs the bounding boxes of the vehicles from the camera images.

It is worth mentioning that in practice, the pixel we use to localize the vehicles is the center of the bounding box detected by YOLOv7, as depicted in Fig~\ref{fig:mm_paper_vehicle_detection} (enlarged green dots).
Accordingly, the physical height corresponding to this pixel is approximately half the height of the vehicle itself ($\frac{h_k}{2}$ where $h_k$ is the height of vehicle $k$ with respect to the road plane), as shown in Fig~\ref{fig:mm_paper_image_localization}.
Thus, when applying Lemma~\ref{lemma:mm_paper_localization} to localize the vehicles from images, the first factor $\cameraheight_\cameraindex$ in eqs.~\eqref{eq:mm_paper_pixel_localizationx} and~\eqref{eq:mm_paper_pixel_localizationy} corresponding to the camera's height must be adjusted to $\cameraheight_\cameraindex-\frac{h_k}{2}$.
However, since the vehicles' heights $h_k$ are unknown, we account for this parallax error by subtracting the average middle height of a vehicle from $\cameraheight_\cameraindex$, as shown in Fig.~\ref{fig:mm_paper_image_localization}, where we approximate $\frac{h_k}{2}\approx 0.8$m, considering vehicles' heights are highly concentrated around $1.6$m.
Accordingly, the error ensuing from our image localization is due to the average deviation of a vehicle's height from the average we assumed, and is therefore relatively negligible.

As shown in Fig.~\ref{fig:mm_paper_image_localization}, the agent detects the vehicles in its cameras' images and extracts their (low-dimensional) sensing data, collected in $\lowdimdataset_\agentindex^{\text{im}}$.
We let $\numsamples_\agentindex^{\text{im}} = \left\lvert \cup_{t \leq \offlinetrainingperiod} \userset_\agentindex^{t,\text{im}} \right\rvert = \left\lvert \lowdimdataset_\agentindex^{\text{im}} \right\rvert$ denote the number of vehicle location samples estimated by the agent from camera images, and $\matriximagesensing_\agentindex^\text{im} \in \mathbb{R}^{2 \times \numsamples_\agentindex^{\text{im}}}$ be a matrix whose columns are the estimated positions $\estimateduserposition_\userindex \in \lowdimdataset_\agentindex^{\text{im}}$.
The agent computes an intra-set distance matrix $\distancematrix_\agentindex^{\text{im}} \in \mathbb{R}^{\numsamples_\agentindex^{\text{im}} \times \numsamples_\agentindex^{\text{im}}}$ whose elements $\sbrk{\distancematrix_\agentindex^{\text{im}}}_{i,j} = \left\lvert \estimateduserposition_i - \estimateduserposition_j \right\rvert$ are the Euclidean distances between pairs of samples.

\subsection{CSI Processing}\label{subsection:mm_paper_csi_processing}
Similarly to image data, we seek to extract low-dimensional features of the agent's unlabeled \gls{csi} data set $\dataset_\agentindex^{\text{csi}}$, so as to align those features with those extracted from the images.
We let $\numsamples_\agentindex^{\text{csi}} = \left\lvert \cup_{t \leq \offlinetrainingperiod} \userset_\agentindex^{t,\text{si}} \right\rvert$ denote the number of unlabeled \gls{csi} samples estimated by the agent.
Practically, our objective is to build a channel distance matrix $\distancematrix_\agentindex^{\text{csi}} \in \mathbb{R}^{\numsamples_\agentindex^{\text{csi}} \times \numsamples_\agentindex^{\text{csi}}}$ whose elements are channel distances $\sbrk{\distancematrix_\agentindex^{\text{csi}}}_{i,j} = \channeldistance\rbrk{\channel_i, \channel_j}$.
Our aim is to align this distance matrix with the previously computed location distance matrix, so that the agent can determine the matching between the two modalities.
Therefore, $\distancematrix_\agentindex^{\text{csi}}$ must satisfy the following property: the distance between a pair of channels collected from two locations approximates the Euclidean distance of their locations (up to a scalar factor).

To construct our desired matrix, we draw inspiration from \emph{channel charting}, a self-supervised learning approach that seeks to embed high-dimensional \gls{csi} into a low-dimensional space conserving spatial neighborhoods~\cite{studer2018cc}.
In fact, typical channel charting techniques start by defining a channel dissimilarity, and then compress the \gls{csi} manifold by applying dimensionality reduction given the distance.
For our work, since we seek a globally robust distance that conserves the overall geometry of the system, we rely on the \gls{adp} distance proposed in~\cite{stephan2024angle}:
\begin{equation}\label{eq:mm_paper_adp_distance}
    \channeldistance_{\text{ADP}}\rbrk{\channel_i, \channel_j} = \sum_{t} 1 - \frac{\left\lvert \sum_{n=1}^\numantennas \tilde{\channel}^*_{i,n,t} \tilde{\channel}_{j,n,t} \right\rvert^2}{\left\lvert \sum_{n=1}^\numantennas \tilde{\channel}^*_{i,n,t} \right\rvert^2 \left\lvert \sum_{n=1}^\numantennas \tilde{\channel}^*_{j,n,t} \right\rvert^2},
\end{equation}
where $\tilde{\channel}_{i}$ is the inverse \gls{dft} of channel $\channel_i$ applied over the subcarrier axis.
In \eqref{eq:mm_paper_adp_distance}, $t$ indexes the channel impulse response taps and $n$ indexes the \gls{bs} antennas.
Such channel dissimilarities are reliable for local neighborhoods, meaning the distance between two channel samples that are spatially close is small; however it is not necessarily large for widely separated channels.
To obtain a globally representative channel dissimilarity approximating the actual terminal positions, we compute the \emph{geodesic} distance corresponding to the \gls{adp} dissimilarity.
Simply put, the geodesic distance between two samples is the sum of small dissimilarities between intermediate samples that are close (and for those nearby points, the \gls{adp} dissimilarity is accurate).
To obtain the geodesic distance, we start by computing the pairwise \gls{adp} metric for the data set $\dataset_\agentindex^{\text{csi}}$ and then find the $k$ nearest neighbors for every sample ($k=20$), and set the dissimilarity to other samples to an arbitrarily high constant.
Finally, the geodesic distance between two samples is the length of the shortest path between them, obtained using a shortest path algorithm (for instance Dijkstra's algorithm~\cite{dijkstra1959anote}).
The agent collects the geodesic distances in a matrix $\distancematrix_\agentindex^{\text{csi}}$.

\subsection{Self-supervised Multimodal Alignment}\label{subsection:mm_paper_multimodal_alignment}
To perform sensing, the agent needs to determine which estimated \gls{csi} corresponds to which vehicle in the image and equivalently its extracted location.
We propose to align the image and channel data sets, by matching their distances $\distancematrix_\agentindex^{\text{im}}$ and $\distancematrix_\agentindex^{\text{csi}}$~\cite{cui2014generalized}.
Without loss of generality, we assume $\numsamples_\agentindex^{\text{im}} \leq \numsamples_\agentindex^{\text{csi}}$, meaning the agent collects more \gls{csi} samples than images since \gls{csi} is easier to store for a \gls{bs}.
Formally, the agent aims to find the matching matrix $\matchingmatrix_\agentindex \in \matchingmatrixset_\agentindex = \cbrk{\matchingmatrix \in \cbrk{0,1}^{\numsamples_\agentindex^{\text{im}} \times \numsamples_\agentindex^{\text{csi}}} \mid \matchingmatrix \mathbf{1}_{\numsamples_\agentindex^{\text{csi}}} = \mathbf{1}_{\numsamples_\agentindex^{\text{im}}}, \matchingmatrix^\trans \mathbf{1}_{\numsamples_\agentindex^{\text{im}}} \leq \mathbf{1}_{\numsamples_\agentindex^{\text{csi}}}}$, where $\sbrk{\matchingmatrix_\agentindex}_{i,j} = 1$ means that the estimated location $\estimateduserposition_i \in \dataset_\agentindex^{\text{im}}$ corresponds to channel sample $\channel_j \in \dataset_\agentindex^{\text{csi}}$.
We formalize our multimodal alignment problem as follows:
\begin{equation}\label{eq:mm_paper_alignment_problem}
    \underset{\matchingscalar_\agentindex \geq 0, \; \matchingmatrix \in \matchingmatrixset_\agentindex}{\text{minimize}} \qquad
    \left\lVert \distancematrix_\agentindex^{\text{im}} - \matchingscalar_\agentindex \, \matchingmatrix_\agentindex \, \distancematrix_\agentindex^{\text{csi}} \, \matchingmatrix_\agentindex^\trans \right\rVert_F^2,
\end{equation}
where $\matchingscalar_\agentindex$ is an optimization parameter that rescales the distances (since the \gls{csi} distance is only proportional to the physical distance), and $\lVert\cdot\rVert_F$ is the Frobenius norm.
The main challenge in solving problem~\eqref{eq:mm_paper_alignment_problem} stems from the fact that the feasibility set $\matchingmatrixset_\agentindex$ is hard matching set which is neither closed nor convex.
To mitigate this issue, we relax the feasibility set to a soft matching $\matchingmatrixset_\agentindex^\prime = \Bigl\{\matchingmatrix \in \sbrk{0,1}^{\numsamples_\agentindex^{\text{im}} \times \numsamples_\agentindex^{\text{csi}}} \mid \matchingmatrix \mathbf{1}_{\numsamples_\agentindex^{\text{csi}}} = \mathbf{1}_{\numsamples_\agentindex^{\text{im}}}, \matchingmatrix^\trans \mathbf{1}_{\numsamples_\agentindex^{\text{im}}} \leq \mathbf{1}_{\numsamples_\agentindex^{\text{csi}}}\Bigr\}$.
Now, the elements of a matrix $\matchingmatrix_\agentindex \in \matchingmatrixset_\agentindex^\prime$ have a probabilistic interpretation where $\sbrk{\matchingmatrix_\agentindex}_{i,j}$ is the likelihood of image-based location $i$ corresponding to channel $j$~\cite{cao2020unsupervised},~\cite{cohen2023joint}.
With the relaxation of the feasibility set to $\matchingmatrixset_\agentindex^\prime$, we propose to solve~\eqref{eq:mm_paper_alignment_problem} using a primal-dual method.
Essentially, we minimize the Lagrangian associated to~\eqref{eq:mm_paper_alignment_problem} by alternating steps over the scaling parameter $\matchingscalar_\agentindex$ and the matrix $\matchingmatrix_\agentindex$.
We provide more details on our proposed solution to~\eqref{eq:mm_paper_alignment_problem} in Appendix~\ref{proof:mm_paper_details_matching_problem}.
The obtained solution variables $\rbrk{\matchingmatrix_\agentindex^\star, \matchingscalar_\agentindex^\star}$ correspond to the matching matrix that best aligns the vehicles' channels with their locations extracted from images by aligning their computed distances, and the scaling factor $\matchingscalar_\agentindex^\star$ that accounts for channel distance dis-proportionality.
Moreover, as shown in Appendix~\ref{proof:mm_paper_details_matching_problem}, the computational cost of our algorithm scales as $O\rbrk{\rbrk{\numsamples_\agentindex^{\text{csi}}}^3 + \rbrk{\numsamples_\agentindex^{\text{csi}}}^2 \numsamples_\agentindex^{\text{im}} + \rbrk{\numsamples_\agentindex^{\text{im}}}^2 \numsamples_\agentindex^{\text{csi}}}$, which is cubic in the number of available data samples.
Hence by moderately scaling the dataset sizes $\numsamples_\agentindex^{\text{csi}}$ and $\numsamples_\agentindex^{\text{im}}$ to improve the matching solution, the complexity of our solution remains practical, and is comparable to that of standard kernel methods and linear algebra routines.

\begin{ttwocolumn}
\begin{table*}[b]
\hrulefill
\begin{equation}\label{eq:mm_paper_cc_loss}
    \ell\rbrk{\chartingfunctionparams_\agentindex} = \sum_{\channel_i, \channel_j \in \unlabeledcsidataset_\agentindex^{\text{csi}}} 
    \underbracket[0.100ex][0.300ex]{
    \big\lvert \left\lvert \chartingfunction_{\chartingfunctionparams_\agentindex} \rbrk{\channel_i} - \chartingfunction_{\chartingfunctionparams_\agentindex} \rbrk{\channel_j} \right\rvert - \matchingscalar_\agentindex^{\star} \, \channeldistance_{\text{G-ADP}} \rbrk{\channel_i, \channel_j} \big\rvert^2}_{\text{self-supervised CSI loss}}
    + 
    \ccregularizer \sum_{\channel_i \in \dataset_\agentindex^{\text{csi}}} 
    \underbracket[0.100ex][0.300ex]{
    \left\lvert  \chartingfunction_{\chartingfunctionparams_\agentindex} \rbrk{\channel_i} - \matriximagesensing_\agentindex^\text{im} \matchingmatrix^\star_\agentindex \right\rvert^2}_{\text{cross-modal distillation loss}}.
\end{equation}
\end{table*}
\end{ttwocolumn}%

\begin{remark}
    Notice that the overall proposed multimodal alignment method is end-to-end self-supervised where the agent computes the distances between the modality samples $\distancematrix_\agentindex^{\text{im}}$ and $\distancematrix_\agentindex^{\text{csi}}$ and finds the correspondence between them with no external supervision.
    It is worth mentioning that our algorithm is versatile and can be extended to a semi-supervised matching.
    This corresponds to the case where the agent has partial information about the matching between some samples, for example whenever the \gls{rsu} estimates the channel of a vehicle that also transmits its position.
    For such instances, problem~\eqref{eq:mm_paper_alignment_problem} can be augmented by constraining the corresponding elements of $\matchingmatrix_\agentindex$ to $1$.
\end{remark}

\subsection{Multimodal Sensing}\label{subsection:mm_paper_multimodal_sensing}
So far, we showed how an agent collects offline multimodal image and \gls{csi} data and finds the alignment between their features.
However, during online deployment, the agent receives camera images $\rbrk{\image_{\agentindex, \cameraindex}^t}_{\cameraindex \in \setcameras_\agentindex}$ and wireless channels $\rbrk{\channel^{t}_{\agentindex, \userindex}}_{\userindex \in \userset_\agentindex^{t,\text{csi}}}$, and must find the sensing parameters $\sbrk{\estimateduserposition_\userindex}_{\userindex \in \userset_\agentindex^t}$.
For the images, the agent can follow the same offline procedure of subsection \ref{subsection:mm_paper_image_processing} to estimate the locations $\sbrk{\estimateduserposition_\userindex}$ for a subset of vehicles ${\userindex \in \userset_\agentindex^{t,\text{im}}}$.
The main challenge remains to infer the locations for the subset of vehicles ${\userindex \in \userset_\agentindex^{t,\text{csi}}}$ from wireless \gls{csi}.
To do so, we seek a function $\chartingfunction_\agentindex\!\!: \mathbb{C}^{\numantennas \times \numsubcarriers} \to \mathbb{R}^2$ that takes as input high-dimensional channels and outputs their corresponding locations.
Since this function must also generalize to unseen \gls{csi}, we propose to parametrize it by the weights of a neural network%
\footnote{Notice that the function is indexed by the agent, since each agent trains a separate function independently.}
 $\chartingfunctionparams_\agentindex$.

To train our localization function, we use a large dataset%
\footnote{In practice, only one data set $\unlabeledcsidataset_\agentindex^{\text{csi}}$ is collected by the agent, and then a small subset $\dataset_\agentindex^{\text{csi}} \subset \unlabeledcsidataset_\agentindex^{\text{csi}}$ is extracted (by uniform sampling) to perform the multimodal alignment, with $\left\lvert \unlabeledcsidataset_\agentindex^{\text{csi}} \right\rvert \gg \left\lvert \dataset_\agentindex^{\text{csi}} \right\rvert$.}
of unlabeled \gls{csi} $\unlabeledcsidataset_\agentindex^{\text{csi}}$ which is collected similarly to $\dataset_\agentindex^{\text{csi}}$.
We use the following loss function\begin{oonecolumn}:\end{oonecolumn}\begin{ttwocolumn} \eqref{eq:mm_paper_cc_loss}, shown at the bottom of the page. \end{ttwocolumn}%
\begin{oonecolumn}
\begin{equation}\label{eq:mm_paper_cc_loss}
    \ell\rbrk{\chartingfunctionparams_\agentindex} = \sum_{\channel_i, \channel_j \in \unlabeledcsidataset_\agentindex^{\text{csi}}} 
    \underbracket[0.100ex][0.300ex]{
    \big\lvert \left\lvert \chartingfunction_{\chartingfunctionparams_\agentindex} \rbrk{\channel_i} - \chartingfunction_{\chartingfunctionparams_\agentindex} \rbrk{\channel_j} \right\rvert - \matchingscalar_\agentindex^{\star} \, \channeldistance_{\text{G-ADP}} \rbrk{\channel_i, \channel_j} \big\rvert^2}_{\text{self-supervised CSI loss}}
    + 
    \ccregularizer \sum_{\channel_i \in \dataset_\agentindex^{\text{csi}}} 
    \underbracket[0.100ex][0.300ex]{
    \left\lvert  \chartingfunction_{\chartingfunctionparams_\agentindex} \rbrk{\channel_i} - \matriximagesensing_\agentindex^\text{im} \matchingmatrix^\star_\agentindex \right\rvert^2}_{\text{cross-modal distillation loss}}.
\end{equation}
\end{oonecolumn}%
Our loss function is a weighted sum of two losses.
The first loss is a \emph{self-supervised} loss which trains the network as a channel charting function, i.e., to embed two channels such that their embeddings distance equals a modified channel dissimilarity $\matchingscalar_\agentindex^{\star} \, \channeldistance_{\text{G-ADP}}$.
Recall that the scaling parameter $\matchingscalar_\agentindex$ is optimized such that the scaled channel distance matches with the physical distance estimated from images.
Thus, we exploit our multimodal alignment procedure to re-scale the geodesic channel distance $\channeldistance_{\text{G-ADP}}$, such that $\matchingscalar_\agentindex^{\star} \, \channeldistance_{\text{G-ADP}}$ well approximates the spatial distance.
Training our network with this loss only produces channel embeddings that could be arbitrarily rotated or translated versions of the desired user locations, since this loss only enforces that pairwise distances be comparable to physical distances.
Hence, our \emph{cross-modal distillation} loss grounds the channel embeddings by imposing the channels from the subset $\dataset_\agentindex^{\text{csi}}$ to have their corresponding aligned locations from $\dataset_\agentindex^{\text{im}}$, therefore obtaining a globally robust \gls{csi} sensing function $\chartingfunction_\agentindex$.
The parameter $\ccregularizer$ is trade-off parameter between the two terms, which we set to $5$.

\begin{remark}
    Our novel loss function \eqref{eq:mm_paper_cc_loss} can be interpreted as a typical channel charting loss, which is learned in a self-supervised fashion by a Siamese neural network, with two major differences stemming from our multimodal approach.
    First, we modify the geodesic \gls{adp} distance by scaling it with $\matchingscalar_\agentindex^{\star}$ which is learned by aligning image and channel features, hence obtaining a novel globally valid channel distance.
    Second, our cross-modal regularization term enforces the channel chart to be representative of the users' locations by exploiting localization data processed from the image modality, whereas previous approaches either rely on a small labeled \gls{csi} data set~\cite{stephan2024angle}, or a crafted \gls{csi} regularizer~\cite{taner2025channel}.
    It is also worth mentioning that compressing \gls{csi} via channel charting yields solid localization results even under significant non-line-of-sight and dynamic environments, as attempted on measured data in~\cite{stephan2024angle} and simulated channels in~\cite{taner2025channel}.
    As such, the obtained \gls{csi} embeddings remain representative of spatial user locations and can therefore be aligned with their corresponding counterparts through our proposed loss function.
\end{remark}

\begin{algorithm}[t]
\DontPrintSemicolon
\caption{Self-supervised Multimodal Sensing and Alignment}\label{alg:mm_paper_mutimodal}
\SetKwBlock{Begin}{Offline Training}{end function}
\Begin(Input unlabeled data sets $\dataset_\agentindex^{\text{im}}, \dataset_\agentindex^{\text{csi}}, \unlabeledcsidataset_\agentindex^{\text{csi}}$){
Extract sensing features $\lowdimdataset_\agentindex^{\text{im}}$ from $\dataset_\agentindex^{\text{im}}$ (section~\ref{subsection:mm_paper_image_processing})\;
Compute distance matrix $\distancematrix_\agentindex^{\text{im}}$\;
Compute pairwise channel dissimilarity \eqref{eq:mm_paper_adp_distance} for $\dataset_\agentindex^{\text{csi}}$ (section~\ref{subsection:mm_paper_csi_processing})\;
Compute geodesic distance matrix $\distancematrix_\agentindex^{\text{csi}}$\;
Solve \eqref{eq:mm_paper_alignment_problem} using primal-dual method and obtain $\rbrk{\matchingmatrix_\agentindex^\star, \matchingscalar_\agentindex^\star}$ (section~\ref{subsection:mm_paper_multimodal_alignment})\;
Fit model $\chartingfunction_\agentindex$ using data set $\unlabeledcsidataset_\agentindex^{\text{csi}}$ via gradient steps on $\ell\rbrk{\chartingfunctionparams_\agentindex}$ (section~\ref{subsection:mm_paper_multimodal_sensing} eq. \eqref{eq:mm_paper_cc_loss})\;
}
\SetKwBlock{Begin}{Online Inference}{end function}
\Begin(Input observation $\envobservation_\agentindex^t$){
Estimate locations $\estimateduserposition_\agentindex^{t, \text{im}} =  \sbrk{\estimateduserposition^t_\userindex}_{\userindex \in \userset_\agentindex^{t,\text{im}}}$ from images $\rbrk{\image_{\agentindex, \cameraindex}^t}_{\cameraindex \in \setcameras_\agentindex}$ (section~\ref{subsection:mm_paper_image_processing})\;
Estimate locations $\estimateduserposition^{t, \text{csi}}_\userindex = \chartingfunction_\agentindex \rbrk{\channel^{t}_{\agentindex, \userindex}}$ from \gls{csi}\;
Remove duplicates from $\sbrk{\estimateduserposition^{t, \text{im}}_\userindex, \estimateduserposition^{t, \text{csi}}_\userindex}$\;
Output estimated state $\estimatedstate_\agentindex^t = \sbrk{\estimateduserposition^t_\userindex}_{\userindex \in \userset_\agentindex^t}$
}
\end{algorithm}

\subsection{Cross-modal Imputation}
The agent can now utilize its trained function $\chartingfunction_\agentindex$ for online inference as follows.
At time slot $t$, the agent observes $\envobservation_\agentindex^t = \sbrk{\rbrk{\image_{\agentindex, \cameraindex}^t}_{\cameraindex \in \setcameras_\agentindex}, \rbrk{\channel^{t}_{\agentindex, \userindex}}_{\userindex \in \userset_\agentindex^{t,\text{csi}}}}$:
\begin{itemize}[nolistsep, leftmargin=*]
    \item The locations of the vehicles $\estimateduserposition_\agentindex^{t, \text{im}} =  \sbrk{\estimateduserposition^t_\userindex}_{\userindex \in \userset_\agentindex^{t,\text{im}}}$ in the images $\rbrk{\image_{\agentindex, \cameraindex}^t}_{\cameraindex \in \setcameras_\agentindex}$ are obtained following the procedure of subsection \ref{subsection:mm_paper_image_processing}.
    \item The locations of the vehicles $\estimateduserposition_\agentindex^{t, \text{csi}} = \sbrk{\estimateduserposition^t_\userindex}_{\userindex \in \userset_\agentindex^{t,\text{csi}}}$ with estimated \gls{csi} are obtained as $\estimateduserposition^t_\userindex = \chartingfunction_\agentindex \rbrk{\channel^{t}_{\agentindex, \userindex}}$.
\end{itemize}
Finally, the full set of vehicle positions $\estimatedstate_\agentindex^t = \sbrk{\estimateduserposition^t_\userindex}_{\userindex \in \userset_\agentindex^t}$ is obtained by removing duplicates from the concatenation of the above position subsets.
We identify duplicates (this is the case of a terminal appearing in both modalities $\userindex \in \userset_\agentindex^{t,\text{im}} \cap \userset_\agentindex^{t,\text{csi}}$) whenever the difference between the two estimated positions is less than a minimal distance $\sensingduplicateparameter$ (for example, the average length of a vehicle).
We summarize the proposed online training and offline inference routines in Algorithm~\ref{alg:mm_paper_mutimodal}.
We note that during the initial training part, the \gls{rsu} only trains the \gls{csi} model that is used to estimate the users' locations from wireless data, while the detection model that is used to localize the vehicles in the images (offline and online) is a pre-trained and frozen YOLOv7.

Notice that our proposed technique functions as a cross-modal imputation, where both modalities complement each other so that each agent can fully recover the desired sensing parameters (crucial to identify symmetries in the environment).
The subsequent section exploits this sensing information extracted (locally per agent) form the multimodal data to train a distributed symmetry-aware \gls{marl} policy solving problem~\eqref{eq:mm_paper_problem}.
The following remarks underscore the versatility of our approach.

\begin{remark}
    It is worth mentioning that our framework is non auto-regressive in the sense that for a missing user location, instead of relying on its previous progression pattern to impute its current value, the agent infers its current value from its aligned counterpart stemming from another modality.
\end{remark}

\begin{remark}
    Our framework can be easily extended with a decoding ability, allowing for instance, the estimation of wireless channels for users $\userindex \in \userset_\agentindex^{t,\text{im}} \setminus \userset_\agentindex^{t,\text{csi}}$.
    To do so, one can follow a similar approach to subsection \ref{subsection:mm_paper_multimodal_sensing}, where a channel decoder, taking as input vehicle positions and generating their wireless \gls{csi}, can be trained by again exploiting the aligned modalities.
    To focus on our substantial contributions, we leave such extensions for future works.
\end{remark}
\section{Distributed Rate Maximization with MARL under Latent Symmetries}
\label{section:mm_paper_marl}
%
In the previous section, we provided a framework where each \gls{bs} agent $\agentindex \in \agentset$ utilizes its high-dimensional multimodal observation $\envobservation_\agentindex^t$ to estimate the locations of the users in its region $\estimatedstate_\agentindex^t =  \sbrk{\estimateduserposition^t_\userindex}_{\userindex \in \userset_\agentindex^t}$.
As discussed previously, unlike the observations, the locations of the users exhibit symmetry properties which can be exploited to facilitate our distributed \gls{marl} training.
Essentially, we showed that the optimal \gls{mmdp} policy maximizing the bitrates is \emph{equivariant} under group rotations of the users' positions in the environment (see Fig.~\ref{fig:mm_paper_policy_permutation}).
In this section, we develop a neural network architecture to train our policy that obeys our desired \emph{equivariance} property, while allowing distributed execution.
Subsequently, we start by showing how to build a learnable neural network layer that satisfies equivariance.
We then use such equivariant layers to construct our policy network.

\subsection{Equivariant Layers}
There are many techniques proposed in the literature to design equivariant layers.
In this work, we follow the \emph{Symmetrizer} approach~\cite{van2020mdp}.
Consider a single linear layer%
\footnote{Here we stack the layer's biases $\neuralnetbias$ into the weights $\neuralnetweight \mapsto \sbrk{\neuralnetweight, \neuralnetbias}$ and extend the input $\neuralnetinput \mapsto \sbrk{\neuralnetinput, \mathbf{1}}$.}
with weights $\neuralnetweight \in \mathbb{R}^{D_{\text{out}} \times D_{\text{in}}}$ that maps inputs $\neuralnetinput \in \mathbb{R}^{D_{\text{in}}}$ to $\neuralnetweight \neuralnetinput$.
Given a pair of input-output linear group transformations $\rbrk{\mathbf{\groupaction}_g, \mathbf{\groupactiontwo}_g}_{g \in \group}$, we wish to find weights satisfying the equivariance property: 
\begin{equation}\label{eq:mm_paper_equivariance_symmetrizer}
    \neuralnetweightset = \cbrk{\neuralnetweight \mid \mathbf{\groupactiontwo}_g \neuralnetweight \neuralnetinput = \neuralnetweight \mathbf{\groupaction}_g \neuralnetinput, \; \forall g \in \group, \neuralnetinput \in \mathbb{R}^{D_{\text{in}}}}.
\end{equation}
The learnable weights $\neuralnetweight$ are updated during training, however we want to constrain them within $\neuralnetweightset$.
Hence, we propose to parametrize them as follows: $\neuralnetweight = \sum_{i=1}^{\text{rank}\rbrk{\neuralnetweightset}} \neuralnetbasisfactor_i \neuralnetbasis_i$ where $\cbrk{\neuralnetbasis_i}$ is a basis of $\neuralnetweightset = \text{span}\rbrk{\neuralnetbasis_i}$.
To find a basis for the equivariant subspace,~\cite{van2020mdp} introduces the \emph{Symmetrizer} functional: $\symmetrizer\rbrk{\neuralnetweight} = \frac{1}{\lvert \group \rvert} \sum_{g \in \group} \mathbf{\groupactiontwo}_g^{-1} \neuralnetweight \mathbf{\groupaction}_g$ which is proven to transform linear maps $\neuralnetweight$ to equivariant linear maps $\symmetrizer\rbrk{\neuralnetweight} \in \neuralnetweightset$, and then a singular value decomposition (SVD) is used to obtain the basis.
As such, before training, the weights are written as a linear combination of a basis of $\neuralnetweightset$, and the learnable parameters $\rbrk{\neuralnetbasisfactor_i}$ are updated during training using gradient-based optimization.

Given that we can now design an equivariant layer, we are interested in constructing a deep equivariant neural network.

\begin{oonecolumn}
\begin{figure}
    \centering
    \includegraphics[width=0.4\linewidth]{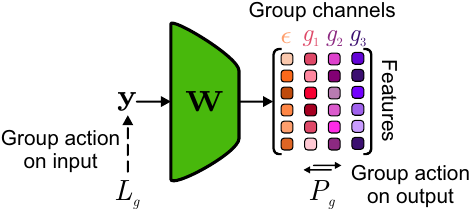}
    \caption{Equivariant layer: when the input is transformed by a group action, the output permutes over the group channels.}
    \label{fig:mm_paper_equivariant_layer}
\end{figure}
\end{oonecolumn}
\begin{ttwocolumn}
\begin{figure}
    \centering
    \includegraphics[width=0.7\linewidth]{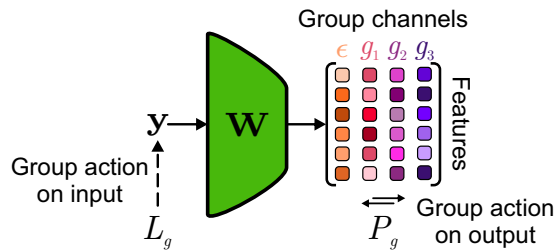}
    \caption{Equivariant layer: when the input is transformed by a group action, the output permutes over the group channels.}
    \label{fig:mm_paper_equivariant_layer}
\end{figure}
\end{ttwocolumn}

\begin{lemma}\label{lemma:mm_paper_composition_equivariance}
    Given a group $\group$, if $f_1$ is an $\rbrk{\groupaction_g, \permutation_g}$ equivariant function and $f_2$ is an $\rbrk{\permutation_g, \groupactiontwo_g}$ equivariant function, then the composition $f_2 \circ f_1$ is $\rbrk{\groupaction_g, \groupactiontwo_g}$ equivariant.
\end{lemma}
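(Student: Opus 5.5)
The plan is to verify the defining identity \eqref{eq:mm_paper_equivariance} for the composite map directly. We chain the two hypotheses through the intermediate transformation $\permutation_g$, which acts on the output space of $f_1$, equivalently the input space of $f_2$. We write $f_1\colon X \to Y$ and $f_2\colon Y \to Z$, where $\groupaction_g$ acts on $X$, $\permutation_g$ acts on $Y$, and $\groupactiontwo_g$ acts on $Z$, all indexed by the same group element $g \in \group$.

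The first step is to fix an arbitrary $g \in \group$ and $x \in X$. By the $\rbrk{\groupaction_g, \permutation_g}$ equivariance of $f_1$, we have $f_1\rbrk{\groupaction_g\sbrk{x}} = \permutation_g\sbrk{f_1\rbrk{x}}$. We then apply $f_2$ to both sides. Setting $y = f_1\rbrk{x} \in Y$ and using the $\rbrk{\permutation_g, \groupactiontwo_g}$ equivariance of $f_2$ at this point $y$ and the same $g$, we obtain $f_2\rbrk{\permutation_g\sbrk{y}} = \groupactiontwo_g\sbrk{f_2\rbrk{y}}$. Combining the two gives
\begin{equation*}
    \rbrk{f_2 \circ f_1}\rbrk{\groupaction_g\sbrk{x}} = f_2\rbrk{\permutation_g\sbrk{f_1\rbrk{x}}} = \groupactiontwo_g\sbrk{\rbrk{f_2 \circ f_1}\rbrk{x}}.
\end{equation*}
Since $g$ and $x$ were arbitrary, this is exactly the statement that $f_2 \circ f_1$ is $\rbrk{\groupaction_g, \groupactiontwo_g}$ equivariant.

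The only point requiring care, and the sole potential obstacle, is consistency of the intermediate transformation. The argument needs the $\permutation_g$ appearing in the output-side equivariance of $f_1$ to be the same map, for the same $g$, as the $\permutation_g$ on the input side of $f_2$. It also needs $f_2$'s equivariance to hold at every point of $Y$, in particular on the image of $f_1$. Both conditions are guaranteed by the hypotheses as stated, so no extra structure is needed, such as linearity or the group-action axioms beyond $g$-indexing. Finally, induction on the number of layers extends the result to any finite stack of equivariant layers built with the Symmetrizer. This is the use intended for constructing the deep policy network.
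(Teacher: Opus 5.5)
Your proof is correct and uses the same direct chaining argument as the paper. The paper writes it as $\groupactiontwo_g\sbrk{f_2 \circ f_1 \rbrk{\cdot}} = f_2 \rbrk{\permutation_g \sbrk{f_1 \rbrk{\cdot}}} = f_2 \circ f_1 \rbrk{\groupaction_g \sbrk{\cdot}}$, invoking the equivariance of $f_2$ and then that of $f_1$, which is your chain of equalities read in the opposite direction.
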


\begin{lemma}\label{lemma:mm_paper_nonlinearity_equivariance}
    Given a finite group $\group$, point-wise nonlinearities applied to an $\rbrk{\groupaction_g, \groupactiontwo_g}$ equivariant function preserve its equivariance.
\end{lemma}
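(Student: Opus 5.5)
The statement to prove is Lemma~\ref{lemma:mm_paper_nonlinearity_equivariance}. The plan is to show that a point-wise nonlinearity commutes with the output transformation $\groupactiontwo_g$, and then to conclude using Lemma~\ref{lemma:mm_paper_composition_equivariance}.

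The key assumption, which I will state explicitly, is that for a finite group acting on discrete (permutation) channels, as in Fig.~\ref{fig:mm_paper_equivariant_layer} and in the policy permutations $\groupactiontwo_g$, each output transformation is represented by a permutation matrix. So write $\groupactiontwo_g \mathbf{y} = \permutation_g \mathbf{y}$, with $\permutation_g$ permuting the coordinates of $\mathbf{y}$ according to some bijection $\tau_g$ of the index set, i.e. $[\permutation_g \mathbf{y}]_i = y_{\tau_g(i)}$. This is exactly the setting produced by the Symmetrizer layers on the group channels.

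\textbf{Step 1: the nonlinearity is equivariant.} Let $\sigma$ act coordinate-wise, $[\sigma(\mathbf{y})]_i = \sigma(y_i)$. Then
\begin{equation*}
[\sigma(\permutation_g \mathbf{y})]_i = \sigma(y_{\tau_g(i)}) = [\sigma(\mathbf{y})]_{\tau_g(i)} = [\permutation_g \sigma(\mathbf{y})]_i .
\end{equation*}
Hence $\sigma$ is $(\groupactiontwo_g, \groupactiontwo_g)$ equivariant for every $g \in \group$. This holds because a permutation only relocates entries, and applying the same scalar map to each entry is insensitive to where that entry sits.

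\textbf{Step 2: composition.} Take $f$ to be $(\groupaction_g, \groupactiontwo_g)$ equivariant and $\sigma$ to be $(\groupactiontwo_g, \groupactiontwo_g)$ equivariant. By Lemma~\ref{lemma:mm_paper_composition_equivariance}, the composition $\sigma \circ f$ is $(\groupaction_g, \groupactiontwo_g)$ equivariant. Writing it out directly gives
\begin{equation*}
\sigma(f(\groupaction_g[x])) = \sigma(\groupactiontwo_g[f(x)]) = \groupactiontwo_g[\sigma(f(x))].
\end{equation*}
If Lemma~\ref{lemma:mm_paper_composition_equivariance} is not yet proved, I would first show it in one line: $f_2(f_1(\groupaction_g x)) = f_2(\permutation_g f_1(x)) = \groupactiontwo_g f_2(f_1(x))$.

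\textbf{Main obstacle.} The only delicate point is the scope of the claim. For a general linear $\groupactiontwo_g$, such as a rotation acting on $\mathbb{R}^2$, a point-wise nonlinearity does not commute with it. I would therefore make the permutation-representation hypothesis explicit and note where finiteness of $\group$ enters. Finiteness makes the regular or permutation representation finite-dimensional, so that $\groupactiontwo_g$ is realized by permutation matrices on the group channels. I would also remark that signed permutations are covered when $\sigma$ is odd, but I would not rely on this.
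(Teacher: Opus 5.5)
Your proof is correct and follows the same chain as the paper's appendix proof, $K_g[\varsigma(f(\cdot))] = \varsigma(K_g[f(\cdot)]) = \varsigma(f(L_g[\cdot]))$; the only packaging difference is that you route it through Lemma~\ref{lemma:mm_paper_composition_equivariance}. The one substantive difference is that you justify the first equality by requiring $K_g$ to be a coordinate permutation, whereas the paper uses this silently. That hypothesis is genuinely necessary: the commutation fails, for example, for $C_4$ acting by rotation on $\mathbb{R}^2$. It is supplied by the paper's group-channel design, not by the finiteness of $G$ itself.
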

\begin{proof}
    The proofs are deferred to Appendix~\ref{proof:mm_paper_composition_equivariance} and~\ref{proof:mm_paper_nonlinearity_equivariance}.
\end{proof}

Jointly, lemmas~\ref{lemma:mm_paper_composition_equivariance} and~\ref{lemma:mm_paper_nonlinearity_equivariance} imply that, similarly to classical neural networks, an equivariant network can be built by stacking equivariant layers followed by point-wise nonlinear activations (such as ReLU)~\cite{cohen2016group}.
However, we have to carefully design intermediate layers to have matching group representations.
For instance, to obtain an end-to-end $\rbrk{\groupaction_g, \groupactiontwo_g}$ equivariant two layer network, the output transformation of the first layer must be shared with the input transformation of the second layer (see Lemma~\ref{lemma:mm_paper_composition_equivariance}).

In the following, to build our equivariant policy network, we use permutations as group representations for intermediate (and output) layers.
As we show in Fig.~\ref{fig:mm_paper_equivariant_layer}, for a group $\group$, we let the $\lvert \group \rvert \times D_{\text{out}}$ output of the first layer permute along the group channels whenever the layer's input permutes by $\groupaction_g$ (a rotation in our case).
The intermediate layers are then designed such that their outputs similarly permutes whenever their inputs permutes along the group channels, and the final layer's output transformation is set to $\groupactiontwo_g$ (which is itself a policy permutation in our case).
With this in mind, we now explain the architecture of our policy networks.

\subsection{Equivariant MARL Policy Network}
We start by parameterizing each agent as a an actor-critic network as follows:
\begin{equation*}
    \label{eq:mm_paper_actor_critic} 
    \text{Actor:} \; \beam_\agentindex^t \sim \policy_{\agentindex, \policyparams} \rbrk{\estimatedstate_\agentindex^t}, \quad
    \text{Critic:} \; \valuefunction_{\agentindex, \criticparams} \rbrk{\estimatedstate_\agentindex^t} \approx \mathbb{E}_{\policy} \sbrk{\valuefunction^t},
\end{equation*}
where the input of both networks is the estimated state $\estimatedstate_\agentindex^t =  \sbrk{\estimateduserposition^t_\userindex}_{\userindex \in \userset_\agentindex^t}$.
While the critic estimates the returns achieved by the actors, the actors are trained to maximize the critic's output.
The actor and critic networks are respectively parametrized by $\policyparams$ and $\criticparams$, which requires communication between agents, encouraging coordination under decentralized execution.
In the following, we will detail our agents' network architecture, while the next subsection describes their training procedure.
We omit the parameterizations $\policyparams$ and $\criticparams$ whenever ambiguity is unlikely.

Our global equivariance constraint is: $\groupactiontwo_g^{\state} \sbrk{\policy\rbrk{\state}} = \policy\rbrk{\groupaction_g \sbrk{\state}}$, i.e., whenever the global state undergoes a rotation, the global policy must permute between the agents.
Since we want distributed execution with communication, it is natural to parametrize the global policy network $\policy$ as a \gls{gnn} over the \gls{mmdp} communication graph $\commgraph$, where each agent's policy $\policy_\agentindex$ is computed per node~\cite{battaglia2018relational},~\cite{satorras2021neural}.
Each agent's network consists of the following:
\begin{itemize}[nolistsep, leftmargin=*]
    \item An equivariant local observation encoder $\obsencoder_\agentindex\!\!: \statespace_\agentindex \to \mathbb{R}^{\lvert \group \rvert \times \encodingdim}$ where $\encodingdim$ is the encoding dimension,
    \item An equivariant message passing function $\messages_\agentindex\!\!: \commgraphedgeset \times \mathbb{R}^{\lvert \group \rvert \times \encodingdim} \to \mathbb{R}^{\lvert \group \rvert \times \messagesize}$ where $\messagesize$ is the message size,
    \item An equivariant local update function $\update_\agentindex\!\!: \mathbb{R}^{\lvert \group \rvert \times \encodingdim} \times \mathbb{R}^{\lvert \group \rvert \times \messagesize} \to \mathbb{R}^{\lvert \group \rvert \times \encodingdim}$ where $\messagesize$ is the message size,
    \item An equivariant policy head $\localpolicy_\agentindex\!\!: \mathbb{R}^{\lvert \group \rvert \times \encodingdim} \to \sbrk{0, 1}^{\lvert \beamset_\agentindex \rvert}$,
    \item An invariant value head $\localvalue_\agentindex\!\!: \mathbb{R}^{\lvert \group \rvert \times \encodingdim} \to \mathbb{R}$.
\end{itemize}
The agent's network is built using an encoding function $\obsencoder_\agentindex$, followed by $\numlayers$ message passing layers using the messaging $\messages_\agentindex$ and update $\update_\agentindex$ functions.
Finally, on top of this backbone network, the policy $\localpolicy_\agentindex$ and value $\localvalue_\agentindex$ heads receive the final local encoding to output the agent's predicted policy and value.
We now detail the architecture shown in Fig.~\ref{fig:mm_paper_gnn}.

\subsubsection{Encoding}
The encoder must satisfy the equivariance constraint $\forall g \in \group$:
\begin{equation}\label{eq:mm_paper_encoder_equivariance}
    \permutation_g \sbrk{\obsencoder_\agentindex \rbrk{\estimatedstate_\agentindex}} = \obsencoder_\agentindex\rbrk{\groupaction_g \sbrk{\estimatedstate_\agentindex}} = \obsencoder_\agentindex\rbrk{\rotationmatrix_g \sbrk{\estimateduserposition_1}, \dots, \rotationmatrix_g \sbrk{\estimateduserposition_{\lvert \userset_\agentindex \rvert}}},
\end{equation}
where $G=\cyclicgroup_4$ is the group of $90$\textdegree\ rotations, the output transformation $\permutation_g$ is a permutation along the group channels, and the input transformation $\groupaction_g$ corresponds to a rotation of each location in the estimated state $\estimatedstate_\agentindex =  \sbrk{\estimateduserposition_\userindex}_{\userindex \in \userset_\agentindex}$.
To implement such a transformation, we use a direct sum representation:
\begin{oonecolumn}
\begin{equation}\label{eq:mm_paper_rotation_directsum}
    \groupaction_g = \oplus_{\userindex \in \userset_\agentindex} \rotationmatrix_g = 
    \begingroup
    \setlength\arraycolsep{2pt}
    \begin{bmatrix}
        \rotationmatrix_g & 0 & \dots & 0 \\[-10pt]
        0 & \rotationmatrix_g &  & \vdots \\[-10pt]
        \vdots &  & \ddots & 0 \\[-10pt]
        0 & \dots & 0 & \rotationmatrix_g
    \end{bmatrix}
    \endgroup
\end{equation}
\end{oonecolumn}%
\begin{ttwocolumn}
\begin{equation}\label{eq:mm_paper_rotation_directsum}
    \groupaction_g = \oplus_{\userindex \in \userset_\agentindex} \rotationmatrix_g = 
    \begingroup
    \setlength\arraycolsep{2pt}
    \begin{bmatrix}
        \rotationmatrix_g & 0 & \dots & 0 \\[-4pt]
        0 & \rotationmatrix_g &  & \vdots \\[-4pt]
        \vdots &  & \ddots & 0 \\[-4pt]
        0 & \dots & 0 & \rotationmatrix_g
    \end{bmatrix}
    \endgroup
\end{equation}
\end{ttwocolumn}%
where $\rotationmatrix_g$ is a rotation matrix (eq. ~\eqref{eq:mm_paper_rotation_matrix}) of the corresponding group element.
Its output, denoted $\encoding_\agentindex^0 = \obsencoder_\agentindex \rbrk{\estimatedstate_\agentindex}$, carries the rotation of the state by the ordering of its channels, and the state features are the elements in those channels.

For implementation purposes, the intermediate permutation matrices are (shorthand notations): $\permutation_{\groupidentityelement}=\allowbreak\sbrk{0,1,2,3}$, \mbox{$\permutation_{g_1}=\sbrk{3,0,1,2}$}, \mbox{$\permutation_{g_2}=\sbrk{2,3,0,1}$}, \mbox{$\permutation_{g_3}=\sbrk{1,2,3,0}$}.

\begin{oonecolumn}
\begin{figure*}
    \centering
    \includegraphics[width=0.95\linewidth]{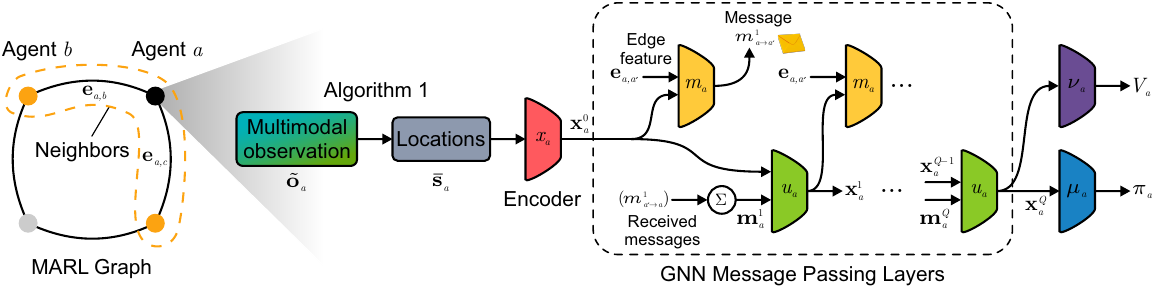}
    \caption{Proposed \gls{gnn} for \gls{marl} equivariant policy training. Each agent collects multimodal observations which are processed as detailed in section~\ref{section:mm_paper_ssl} and algorithm~\ref{alg:mm_paper_mutimodal} to extract its users' locations which serve as the \gls{gnn} input. Each agent encodes its state estimate, which is used along an emergent signaling scheme over the \gls{gnn}'s message passing layers to update its state embedding, and finally compute its policy. All networks are implemented as equivariant layers followed by point-wise nonlinearities.}
    \label{fig:mm_paper_gnn}
\end{figure*}
\end{oonecolumn}
\begin{ttwocolumn}
\begin{figure*}
    \centering
    \includegraphics[width=.85\linewidth]{figures/gnn.pdf}
    \caption{Proposed \gls{gnn} for \gls{marl} equivariant policy training. Each agent collects multimodal observations which are processed as detailed in section~\ref{section:mm_paper_ssl} and algorithm~\ref{alg:mm_paper_mutimodal} to extract its users' locations which serve as the \gls{gnn} input. Each agent encodes its state estimate, which is used along with received messages over the \gls{gnn}'s message passing layers to update its state embedding, and finally compute its policy. All networks are implemented as equivariant layers followed by point-wise nonlinearities.}
    \label{fig:mm_paper_gnn}
\end{figure*}
\end{ttwocolumn}

\subsubsection{Message Passing}
Given the agents' initial encoding $\encoding_\agentindex^0$, neighboring agents exchange messages for $\numlayers$ rounds, given by $\messages^\layerindex_{\agentindex \to \agentindex^\prime} = \messages_\agentindex \rbrk{\encoding_\agentindex^{\layerindex}, \edgefeatures_{\agentindex^\prime, \agentindex}}$, where $\layerindex$ indexes the layer, $\edgefeatures_{\agentindex^\prime, \agentindex} = \agentposition_{\agentindex^\prime} - \agentposition_\agentindex$ is the edge features corresponding to the difference between \gls{rsu} locations, and $\encoding_\agentindex^{\layerindex}$ is the agent's encoding at round $\layerindex$ (initialized by the encoder).
The messages can be interpreted as an emergent protocol used by the agents to coordinate their policies under partial observability~\cite{chafii2023emergent},~\cite{lazaridou2020emergent}.
The message function must satisfy the following equivariance $\forall g \in \group$:
\begin{equation}\label{eq:mm_paper_message_equivariance}
    \permutation_g \sbrk{\messages_\agentindex \rbrk{\encoding_\agentindex^{\layerindex}, \edgefeatures_{\agentindex^\prime, \agentindex}}} = \messages_\agentindex \rbrk{\permutation_g \sbrk{\encoding_\agentindex^{\layerindex}}, \rotationmatrix_g \sbrk{\edgefeatures_{\agentindex^\prime, \agentindex}}}.
\end{equation}
In other words, whenever the local state rotates which is detected by a permutation $\permutation_g$ of the agent's encoding $\encoding_\agentindex^{\layerindex}$, and simultaneously the agent location differences $\edgefeatures_{\agentindex^\prime, \agentindex}$ rotate according to $\rotationmatrix_g$ (as explained in Fig.~\ref{fig:mm_paper_policycomm_permutation}), then the exchanged messages are permuted by $\permutation_g$.
We build an equivariant layer message passing layer with the direct sum representation $\permutation_g \oplus \rotationmatrix_g$, similarly to \eqref{eq:mm_paper_rotation_directsum}.
After message passing round $\layerindex$, each agent $\agentindex$ must aggregate its received messages $\rbrk{\messages^\layerindex_{\agentindex^\prime \to \agentindex}}_{\rbrk{\agentindex^\prime, \agentindex} \in \commgraphedgeset}$ in such a way to preserve their equivariance.

\begin{lemma}\label{lemma:mm_paper_linearcombination_equivariance}
    A linear combination $\sum_{\rbrk{\agentindex^\prime, \agentindex} \in \commgraphedgeset} \messages^\layerindex_{\agentindex^\prime \to \agentindex}$ of equivariant functions is equivariant.
\end{lemma}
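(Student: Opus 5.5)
The plan is to use the fact that the output transformation $\permutation_g$ in \eqref{eq:mm_paper_message_equivariance} is a \emph{linear} map on $\mathbb{R}^{\lvert \group \rvert \times \messagesize}$: it is a permutation matrix acting on the group channels. Once that is noted, linearity can be pushed through the sum term by term. I will prove the slightly more general statement that for scalars $\alpha_{\agentindex^\prime}$, the combination $\sum_{(\agentindex^\prime,\agentindex)\in\commgraphedgeset} \alpha_{\agentindex^\prime}\, \messages_{\agentindex^\prime}(\cdot)$ is equivariant. The plain sum in Lemma~\ref{lemma:mm_paper_linearcombination_equivariance} is the special case $\alpha_{\agentindex^\prime}=1$.

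First I will fix the setting precisely. Each summand $\messages^\layerindex_{\agentindex^\prime \to \agentindex} = \messages_{\agentindex^\prime}(\encoding_{\agentindex^\prime}^\layerindex, \edgefeatures_{\agentindex,\agentindex^\prime})$ is viewed as a function of its input $(\encoding_{\agentindex^\prime}^\layerindex, \edgefeatures_{\agentindex,\agentindex^\prime})$. The input transformation is $\permutation_g \oplus \rotationmatrix_g$ and the output transformation is $\permutation_g$. The output transformation is the \emph{same} for every neighbor $\agentindex^\prime$, since all messages live in the common space $\mathbb{R}^{\lvert \group \rvert \times \messagesize}$ with the same group-channel permutation. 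Under a global group action, every neighbor's input is transformed by the same $g$.

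Second, I will compute
\[
\permutation_g\Bigl[\sum_{\agentindex^\prime} \alpha_{\agentindex^\prime} \messages_{\agentindex^\prime}(\encoding_{\agentindex^\prime}^\layerindex, \edgefeatures_{\agentindex,\agentindex^\prime})\Bigr] = \sum_{\agentindex^\prime} \alpha_{\agentindex^\prime} \permutation_g\bigl[\messages_{\agentindex^\prime}(\encoding_{\agentindex^\prime}^\layerindex, \edgefeatures_{\agentindex,\agentindex^\prime})\bigr] = \sum_{\agentindex^\prime} \alpha_{\agentindex^\prime} \messages_{\agentindex^\prime}\bigl(\permutation_g[\encoding_{\agentindex^\prime}^\layerindex], \rotationmatrix_g[\edgefeatures_{\agentindex,\agentindex^\prime}]\bigr).
\]
The first equality is linearity of the matrix $\permutation_g$. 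The second applies \eqref{eq:mm_paper_message_equivariance} to each summand. The right-hand side is the aggregate evaluated at the transformed inputs, which is exactly the equivariance condition \eqref{eq:mm_paper_equivariance} with $\groupactiontwo_g=\permutation_g$.

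The only point needing care, and the main obstacle, is the bookkeeping under the edge permutation $\permutation_g^\edgefeatures$ in \eqref{eq:mm_paper_mmdp_groupaction_decomposition}. A global transformation may relabel which neighbor sends which message. I will handle this by noting two facts. First, summation over the neighbor set is invariant under reindexing of that set. Second, the transformed neighbor receives the transformed message from the same shared message function. For the latter I will assume, as the GNN construction does, that message weights are shared across nodes, or equal coefficients $\alpha_{\agentindex^\prime}$ in general. With these two facts, the permuted sum coincides with the original sum of transformed messages, and the lemma follows.
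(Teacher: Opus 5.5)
Your argument is correct and is essentially the paper's proof: you push $\permutation_g$ through the sum by linearity and then apply \eqref{eq:mm_paper_message_equivariance} to each summand. Your arbitrary coefficients and your careful indexing $\messages_{\agentindex^\prime}(\encoding_{\agentindex^\prime}^\layerindex, \edgefeatures_{\agentindex,\agentindex^\prime})$ are harmless refinements. The edge-permutation and weight-sharing discussion is not needed for the lemma as stated, which concerns a single node with all inputs transformed by the same $g$; that bookkeeping belongs to the global argument of Proposition~\ref{proposition:mm_paper_policy_equivariance}.
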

\begin{proof}
    The proof is deferred to Appendix~\ref{proof:mm_paper_linearcombination_equivariance} 
\end{proof}

Lemma~\ref{lemma:mm_paper_linearcombination_equivariance} provides a simple yet effective scheme to aggregate each agent's received messages by taking their sum $\aggregatedmessages_\agentindex^\layerindex =  \sum_{\rbrk{\agentindex^\prime, \agentindex} \in \commgraphedgeset} \messages^\layerindex_{\agentindex^\prime \to \agentindex}$.

\subsubsection{Local Update}
After each message passing layer $q$, each agent updates its encoding $\encoding_\agentindex^{\layerindex + 1} = \update_\agentindex \rbrk{\encoding_\agentindex^\layerindex, \aggregatedmessages_\agentindex^\layerindex}$ given its previous encoding and collected messages.
The node update function $\update_\agentindex$ must be permutation equivariant to permutation along the agent's encoding and messages, i.e., $\forall g \in \group$:
\begin{equation}\label{eq:mm_paper_update_equivariance}
    \permutation_g \sbrk{\update_\agentindex \rbrk{\encoding_\agentindex^\layerindex, \aggregatedmessages_\agentindex^\layerindex}} = \update_\agentindex \rbrk{\permutation_g \sbrk{\encoding_\agentindex^\layerindex}, \permutation_g \sbrk{\aggregatedmessages_\agentindex^\layerindex}}.
\end{equation}
Notice that the same permutation group action $\permutation_g$ is used for all layers, guaranteeing the network's end-to-end equivariance.
Given each agent's updated features at round $\layerindex$, another message passing round occurs and the features are updated again by local node updates, with the whole process spanning $\numlayers$ rounds.
Finally, each agent's local embeddings are $\encoding_\agentindex^\numlayers$.

\subsubsection{Policy Head}
The agent's policy head computes its action distribution as $\policy_\agentindex \rbrk{\estimatedstate_\agentindex^t} = \localpolicy_\agentindex \rbrk{\encoding_\agentindex^\numlayers}$, and is an equivariant layer satisfying $\forall g \in \group$:
\begin{equation}\label{eq:mm_paper_policy_equivariance}
    \groupactiontwo_g \sbrk{\localpolicy_\agentindex\rbrk{\encoding_\agentindex^\numlayers}} = \localpolicy_\agentindex\rbrk{\permutation_g \sbrk{ \encoding_\agentindex^\numlayers}},
\end{equation}
where $\groupactiontwo_g$ is a permutation along each \gls{bs} codebook.

To obtain the desired policy permutation shown in Fig.~\ref{fig:mm_paper_policy_permutation}, the output permutation matrices are (shorthand notations): $\groupactiontwo_{\groupidentityelement}=\sbrk{0,1,\dots,\lvert\beamset_\agentindex\rvert}$, \mbox{$\groupactiontwo_{g_1}=\sbrk{\lvert\beamset_\agentindex\rvert,\dots,1,0}$}, \mbox{$\groupactiontwo_{g_2}=\sbrk{0,1,\dots,\lvert\beamset_\agentindex\rvert}$}, \mbox{$\groupactiontwo_{g_3}=\sbrk{\lvert\beamset_\agentindex\rvert,\dots,1,0}$}.

\begin{algorithm}[t]
\DontPrintSemicolon
\caption{Proposed GNN Forward Pass}\label{alg:mm_paper_gnn}
\hspace{-.7em}\textbf{Input} Observations $\rbrk{\envobservation_\agentindex}_{\agentindex\in\agentset}$\;
\For{\textnormal{agent} $\agentindex \in \agentset$}{
Estimate $\estimatedstate_\agentindex = \sbrk{\estimateduserposition_\userindex}_{\userindex \in \userset_\agentindex}$ using Algorithm~\ref{alg:mm_paper_mutimodal}\;
Compute encoding $\encoding_\agentindex^0 = \obsencoder_\agentindex \rbrk{\estimatedstate_\agentindex}$}
\For{\textnormal{round} $\layerindex = 1, \dots, \numlayers$}{
\For{\textnormal{edge} $\rbrk{\agentindex, \agentindex^\prime} \in \commgraphedgeset$}{
Compute message $\messages^\layerindex_{\agentindex \to \agentindex^\prime} = \messages_\agentindex \rbrk{\encoding_\agentindex^{\layerindex - 1}, \edgefeatures_{\agentindex^\prime, \agentindex}}$
}
\For{\textnormal{agent} $\agentindex \in \agentset$}{
Aggregate messages $\aggregatedmessages_\agentindex^\layerindex =  \sum_{\rbrk{\agentindex^\prime, \agentindex} \in \commgraphedgeset} \messages^\layerindex_{\agentindex^\prime \to \agentindex}$\;
Update encoding $\encoding_\agentindex^{\layerindex + 1} = \update_\agentindex \rbrk{\encoding_\agentindex^\layerindex, \aggregatedmessages_\agentindex^\layerindex}$
}
}
\For{\textnormal{agent} $\agentindex \in \agentset$}{
Compute policy $\policy_\agentindex \rbrk{\estimatedstate_\agentindex} = \localpolicy_\agentindex \rbrk{\encoding_\agentindex^\numlayers}$
}
\end{algorithm}

\subsubsection{Value Head}
The agent's value head computes the estimated value function as $\valuefunction_\agentindex \rbrk{\estimatedstate_\agentindex^t} = \localvalue_\agentindex \rbrk{\encoding_\agentindex^\numlayers}$, and is an invariant layer satisfying $\forall g \in \group$:
%
\begin{equation}\label{eq:mm_paper_value_invariance}
    \localvalue_\agentindex\rbrk{\encoding_\agentindex^\numlayers} = \localvalue_\agentindex\rbrk{\permutation_g \sbrk{ \encoding_\agentindex^\numlayers}}.
\end{equation}
%
which constrains the value function to be invariant to global state transformations.
The value head is a typical equivariant layer where the output transformation representations are identity matrices, implemented similarly to eq.~\eqref{eq:mm_paper_policy_equivariance} with $\groupactiontwo_g = I, \;\;\forall g \in \group$.
\begin{ttwocolumn}
\begin{table*}[b]
\hrulefill
\begin{equation}\label{eq:mm_paper_ppo_actor_loss}
    \ell\rbrk{\policyparams} = \mathbb{E}_{\policy_\agentindex^{\text{old}}} \sbrk{\min\rbrk{\frac{\policy_{\agentindex, \policyparams}\rbrk{\estimatedstate_\agentindex}}{\policy_{\agentindex, \policyparams}^{\text{old}}\rbrk{\estimatedstate_\agentindex}} \hat{\advantage}_\agentindex\rbrk{\estimatedstate_\agentindex},
    \text{clip}\rbrk{\frac{\policy_{\agentindex, \policyparams} \rbrk{\estimatedstate_\agentindex}}{\policy_{\agentindex, \policyparams}^{\text{old}}\rbrk{\estimatedstate_\agentindex}}, 1-\ppoclip, 1+\ppoclip} \hat{\advantage}_\agentindex \rbrk{\estimatedstate_\agentindex}} - \ppoentropyfactor\, \text{H} \rbrk{\policy_{\agentindex, \policyparams}^{\text{old}} \rbrk{\estimatedstate_\agentindex}}}
\end{equation}
\end{table*}
\end{ttwocolumn}%

Note that the agent's policy $\policy_\agentindex \rbrk{\estimatedstate_\agentindex^t}$ and value $\valuefunction_\agentindex \rbrk{\estimatedstate_\agentindex^t}$ networks comprise all the \gls{gnn} message passing layers as a common backbone, followed by the policy and value heads respectively.
Algorithm~\ref{alg:mm_paper_gnn} outlines a forward pass in our proposed \gls{gnn} network.
The following result guarantees the equivariance of our policy and value networks.

\begin{proposition}\label{proposition:mm_paper_policy_equivariance}
    The global policy $\rbrk{\policy_\agentindex}_{\agentindex \in \agentset}$ and value function $\valuefunction_\agentindex$ computed by our network are respectively equivariant and invariant under global state rotations: $\forall g \in \group, \, \groupactiontwo_g^{\state} \sbrk{\policy \rbrk{\state}} = \policy\rbrk{\groupaction_g \sbrk{\state}}$ and $\valuefunction_\agentindex \rbrk{\state} = \valuefunction_\agentindex \rbrk{\groupaction_g \sbrk{\state}}$.
\end{proposition}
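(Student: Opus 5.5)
The approach is to track a global rotation $g\in\group$ through each stage of Algorithm~\ref{alg:mm_paper_gnn}. First we reduce the global action $\groupaction_g$ to local ones via the decomposition \eqref{eq:mm_paper_mmdp_groupaction_decomposition}. Then we show by induction on the message passing round $\layerindex$ that every node embedding transforms by the channel permutation $\permutation_g$, up to a relabeling of nodes. Finally, the policy and value heads are read off using \eqref{eq:mm_paper_policy_equivariance} and \eqref{eq:mm_paper_value_invariance}.

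\textbf{Setup.} By \eqref{eq:mm_paper_mmdp_groupaction_decomposition}, $\groupaction_g[\state]$ is obtained as follows: rotate each local state by $\tilde{\groupaction}_g$ (which acts as $\oplus_\userindex\rotationmatrix_g$, as in \eqref{eq:mm_paper_rotation_directsum}), rotate each edge feature $\edgefeatures_{\agentindex,\agentindex'}=\agentposition_\agentindex-\agentposition_{\agentindex'}$ by $\rotationmatrix_g$, and relabel nodes and edges by permutations $\sigma_g:=\permutation_g^\state$ and $\permutation_g^\edgefeatures$. The node permutation $\sigma_g$ is a graph automorphism of $\commgraph$, since rotating the symmetric \gls{rsu} layout maps neighbors to neighbors. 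Concretely, in the transformed state agent $\sigma_g(\agentindex)$ holds the local state $\tilde{\groupaction}_g[\state_\agentindex]$ and edge feature $\rotationmatrix_g\edgefeatures_{\agentindex',\agentindex}$ on edge $(\sigma_g(\agentindex'),\sigma_g(\agentindex))$. We also use that all agents share the same layer weights ($\obsencoder,\messages,\update,\localpolicy,\localvalue$). Without this, relabeling nodes would change the function applied, so I would state it explicitly as an assumption implicit in the construction.

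\textbf{Induction.} Denote quantities in the transformed state with primes. For the base case, \eqref{eq:mm_paper_encoder_equivariance} gives $\encoding'^{0}_{\sigma_g(\agentindex)}=\obsencoder(\tilde{\groupaction}_g[\state_\agentindex])=\permutation_g[\encoding^0_\agentindex]$. For the inductive step, assume $\encoding'^{\layerindex}_{\sigma_g(\agentindex)}=\permutation_g[\encoding^{\layerindex}_\agentindex]$ for all $\agentindex$.
\begin{itemize}
\item The message along the relabeled edge is $\messages(\permutation_g[\encoding^\layerindex_{\agentindex'}],\rotationmatrix_g\edgefeatures_{\agentindex,\agentindex'})=\permutation_g[\messages^\layerindex_{\agentindex'\to\agentindex}]$ by \eqref{eq:mm_paper_message_equivariance}.
\item Because $\sigma_g$ is an automorphism, the neighbors of $\sigma_g(\agentindex)$ are exactly $\sigma_g$ applied to the neighbors of $\agentindex$. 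So the aggregated sum at $\sigma_g(\agentindex)$ runs over the same messages, and by Lemma~\ref{lemma:mm_paper_linearcombination_equivariance} (linearity of $\permutation_g$) it equals $\permutation_g[\aggregatedmessages^\layerindex_\agentindex]$.
\item Then \eqref{eq:mm_paper_update_equivariance} yields $\encoding'^{\layerindex+1}_{\sigma_g(\agentindex)}=\permutation_g[\encoding^{\layerindex+1}_\agentindex]$.
\end{itemize}
Each component is itself a stack of Symmetrizer layers with pointwise nonlinearities, so its stated equivariance follows from Lemmas~\ref{lemma:mm_paper_composition_equivariance} and~\ref{lemma:mm_paper_nonlinearity_equivariance}.

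\textbf{Heads.} With $\encoding'^{\numlayers}_{\sigma_g(\agentindex)}=\permutation_g[\encoding^\numlayers_\agentindex]$, \eqref{eq:mm_paper_policy_equivariance} gives $\policy'_{\sigma_g(\agentindex)}=\groupactiontwo_g[\policy_\agentindex]$. Collecting over agents, the global policy of the rotated state is the per-agent codebook permutation $\groupactiontwo_g$ composed with the agent permutation $\sigma_g$. We define $\groupactiontwo_g^\state$ to be exactly this composition, so the global policy is equivariant. Similarly, \eqref{eq:mm_paper_value_invariance} gives $\valuefunction'_{\sigma_g(\agentindex)}=\valuefunction_\agentindex$, i.e.\ invariance up to the same node relabeling, which is the intended meaning of $\valuefunction_\agentindex(\state)=\valuefunction_\agentindex(\groupaction_g[\state])$.

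\textbf{Main obstacle.} The hardest step is the bookkeeping for the relabeled graph. One must verify that $\sigma_g$ is an automorphism of $\commgraph$ and that it rotates the edge features consistently, i.e.\ $\agentposition_{\sigma_g(\agentindex')}-\agentposition_{\sigma_g(\agentindex)}=\rotationmatrix_g(\agentposition_{\agentindex'}-\agentposition_\agentindex)$. This holds when the \gls{rsu} positions form a $\cyclicgroup_4$-orbit about the rotation center. I would prove it directly from the symmetric deployment assumption before starting the induction.
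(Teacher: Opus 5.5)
Your proof is correct and follows the same route as the paper's. Both arguments push the global rotation through the encoder, then through each message-passing round, using the message, aggregation (Lemma~\ref{lemma:mm_paper_linearcombination_equivariance}) and update equivariances with edge features rotating by $R_g$, since $R_g\mathbf{r}_a - R_g\mathbf{r}_{a'} = R_g(\mathbf{r}_a-\mathbf{r}_{a'})$, and finally read off the policy and value heads. Your extra bookkeeping makes precise what the paper's proof leaves implicit when it identifies agent $a$'s perceived state with a rotation of the one seen by the agent at $R_g[\mathbf{r}_a]$: the node relabeling $\sigma_g$ as a graph automorphism, shared weights across agents, and reading the value invariance as $V'_{\sigma_g(a)} = V_a$.
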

\begin{proof}
    The proof is deferred to Appendix~\ref{proof:mm_paper_policy_equivariance}.
\end{proof}

Given the constructed equivariant network, we now provide the \gls{marl} training procedure in the following subsection, while the remark below discusses the inference latency of the overall proposed models.

\begin{remark}
    By inspecting Fig.~\ref{fig:mm_paper_gnn}, we note that at the level of each \gls{rsu}, the multimodal observations are processed by the image and \gls{csi} models to extract the vehicles' locations which feed the \gls{gnn} policy model. For deployment considerations, the inference latency of this overall operation can be minimized as follows. The image processing latency can be efficiently reduced using modern post-training techniques on YOLOv7 such as pruning, quantization, and distillation~\cite{liu2024lightweight},~\cite{frantar2022optimal}, and utilizing effective deep learning hardware~\cite{shuvo2023efficient}, noting that such lightweight detection models are already integrated in many industrial cameras. Similar model compression methods can be used for the \gls{csi} neural network that has a less complex architecture compared that of the image network. Regarding the \gls{gnn} layers, the parametrization of the encoding, messaging and update functions is done using shallow networks with one or two layers of small size (see Section~\ref{section:mm_paper_simulation}-\ref{subsection:mm_paper_simulation_setting}); hence, inducing a tolerable inference latency. We emphasize that while messaging passing between \glspl{rsu} incurs a delay to the inference process, our distributed optimization method overcomes the intolerable delay of centralized optimization methods where all nodes must communicate their multimodal data to a centralized server for processing and decision feedback.
\end{remark}

\subsection{Proximal Policy Optimization}
We train our \gls{marl} using \gls{ppo}~\cite{schulman2017proximal}.
The actor's loss is computed as\begin{oonecolumn}:\end{oonecolumn}\begin{ttwocolumn} shown on the bottom of the page in \eqref{eq:mm_paper_ppo_actor_loss},\end{ttwocolumn}
\begin{oonecolumn}
\begin{equation}\label{eq:mm_paper_ppo_actor_loss}
    \ell\rbrk{\policyparams} = \mathbb{E}_{\policy_\agentindex^{\text{old}}} \sbrk{\min\rbrk{
    \frac{\policy_{\agentindex, \policyparams}\rbrk{\estimatedstate_\agentindex}}{\policy_{\agentindex, \policyparams}^{\text{old}}\rbrk{\estimatedstate_\agentindex}} \hat{\advantage}_\agentindex\rbrk{\estimatedstate_\agentindex},
    \text{clip}\rbrk{\frac{\policy_{\agentindex, \policyparams} \rbrk{\estimatedstate_\agentindex}}{\policy_{\agentindex, \policyparams}^{\text{old}}\rbrk{\estimatedstate_\agentindex}}, 1-\ppoclip, 1+\ppoclip} \hat{\advantage}_\agentindex \rbrk{\estimatedstate_\agentindex}
    }
    - \ppoentropyfactor\, \text{H} \rbrk{\policy_{\agentindex, \policyparams}^{\text{old}} \rbrk{\estimatedstate_\agentindex}}
    }
\end{equation}
\end{oonecolumn}
where the advantage for $T$ steps is estimated using \gls{gae}:
\begin{equation}\label{eq:mm_paper_ppo_gae}
    \hat{\advantage}_\agentindex \rbrk{\estimatedstate_\agentindex^t} = \advantagefactor_t + \discountfactor \advantagediscount \advantagefactor_{t+1} + \dots + \rbrk{\discountfactor \advantagediscount}^{T-t+1} \advantagefactor_{T-1},
\end{equation}
with $\advantagefactor_t = \reward_t + \discountfactor \valuefunction\rbrk{\estimatedstate_\agentindex^{t+1}} - \valuefunction\rbrk{\estimatedstate_\agentindex^t}$ and $\advantagediscount=0.95$ is a discounting hyperparameter.
In \eqref{eq:mm_paper_ppo_actor_loss}, the clipping factor $\ppoclip=0.2$ stabilizes policy updates, while the actor's entropy is regularized with $\ppoentropyfactor=0.01$ to encourage exploration.
This loss updates the actor's policy to increase the probability of actions with positive advantage.
The critic is trained to regress the target value function with the following loss:
\begin{equation}\label{eq:mm_paper_ppo_critic_loss}
    \ell\rbrk{\criticparams} = \mathbb{E}_{\policy} \sbrk{\frac{1}{2} \rbrk{
    \valuefunction_{\agentindex, \criticparams} \rbrk{\estimatedstate_\agentindex^t} - \hat{\reward_t}}^2
    }
\end{equation}
where $\hat{\reward_t}$ is the sum of future discounted rewards.

To train the networks, the actors follow a fixed policy $\policy_{\agentindex, \policyparams}^{\text{old}}$ to collect environmental data for a certain number of epochs.
This collected data is used to update the actor and critic networks via gradient descent on \eqref{eq:mm_paper_ppo_actor_loss} and \eqref{eq:mm_paper_ppo_critic_loss} to improve the policies, and the procedure is repeated until convergence.

\subsection{Theoretical Analysis under Partial Symmetry}
Throughout our discussion in this section, we have assumed that the latent symmetry obeyed by our environment is \emph{exact}, i.e., when the unknown vehicle locations undergo an exact rotation, the global policy permutes between the agents.
However, in practice, this assumption of perfect rotation symmetry is only partially satisfied.
For instance, practical road infrastructure will only adhere to an approximate rotation symmetry, introducing an asymmetry error that is due to asymmetric road topologies.
In such cases, enforcing perfect policy equivariance may degrade the performance, since the optimal policy is no longer necessarily equivariant.
The following result bounds the performance loss due to utilizing our proposed equivariant policy under imperfect symmetries caused by errors breaking idealized group symmetries.
\begin{proposition}\label{proposition:mm_paper_partial_symmetry}
    Assume there exists $\partialsymmetryR, \partialsymmetryT \geq 0$, such that the reward and transition operator of the \gls{mmdp} satisfy, $\forall g \in \group, \state, \state^\prime \in \statespace, \beam \in \beamset$:
    \begin{align}
        \label{eq:mm_paper_mmdp_partial_symmetric_reward}
        &\left\lvert \reward\rbrk{\state, \beam} - \reward\rbrk{\groupaction_g \sbrk{\state}, \groupactiontwo_g^ \state\sbrk{\beam}} \right\rvert \leq \partialsymmetryR, \\
        \label{eq:mm_paper_mmdp_partial_symmetric_transition}
        &\underset{h \in \mathcal{H}}{\sup} \Bigl| \mathbb{E}_{\state^\prime \sim \transitionfunction \rbrk{\state^\prime \mid \beam, \state}}\sbrk{h \rbrk{\state^\prime}} - \mathbb{E}_{\state^\prime \sim \transitionfunction\rbrk{\groupaction_g\sbrk{\state^\prime} \mid  \groupactiontwo_g^\state\sbrk{\beam}, \groupaction_g\sbrk{\state}}}\sbrk{h \rbrk{\state^\prime}} \Bigr| \leq \partialsymmetryT,
    \end{align}
    where $\mathcal{H}$ is the class of bounded mappings $\statespace\to\mathbb{R}$.
    Let $\actionvaluefunction^\star$ denote the optimal action value function of the \gls{mmdp}.
    Then, $\forall g \in \group, \state \in \statespace, \beam \in \beamset, \left\lvert \actionvaluefunction^\star \rbrk{\state,\beam} -\actionvaluefunction^\star \rbrk{\groupaction_g\sbrk{\state}, \groupactiontwo_g^ \state\sbrk{\beam}} \right\rvert \leq \rbrk{\partialsymmetryR + \discountfactor \partialsymmetryT}\rbrk{1-\discountfactor}^{-1}$.
\end{proposition}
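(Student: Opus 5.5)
The plan is a contraction argument on the Bellman optimality operator. For each $g \in \group$, define the transformed action value function $\actionvaluefunction^g\rbrk{\state,\beam} = \actionvaluefunction^\star\rbrk{\groupaction_g\sbrk{\state}, \groupactiontwo_g^\state\sbrk{\beam}}$. The goal is to bound $\Delta = \sup_{\state,\beam}\lvert \actionvaluefunction^\star\rbrk{\state,\beam} - \actionvaluefunction^g\rbrk{\state,\beam}\rvert$. Both $\actionvaluefunction^\star$ and $\actionvaluefunction^g$ are bounded, since the reward (a sum rate over a finite codebook with bounded SINR) is bounded and $\discountfactor<1$, so $\Delta$ is finite.

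First, write the Bellman equation at $\rbrk{\state,\beam}$ and at $\rbrk{\groupaction_g\sbrk{\state}, \groupactiontwo_g^\state\sbrk{\beam}}$:
\begin{align*}
\actionvaluefunction^\star\rbrk{\state,\beam} &= \reward\rbrk{\state,\beam} + \discountfactor\, \mathbb{E}_{\state^\prime\sim\transitionfunction\rbrk{\cdot\mid\beam,\state}}\sbrk{\max_{\beam^\prime}\actionvaluefunction^\star\rbrk{\state^\prime,\beam^\prime}},\\
\actionvaluefunction^g\rbrk{\state,\beam} &= \reward\rbrk{\groupaction_g\sbrk{\state}, \groupactiontwo_g^\state\sbrk{\beam}} + \discountfactor\, \mathbb{E}_{\state^{\prime\prime}\sim\transitionfunction\rbrk{\cdot\mid \groupactiontwo_g^\state\sbrk{\beam}, \groupaction_g\sbrk{\state}}}\sbrk{\max_{\beam^\prime}\actionvaluefunction^\star\rbrk{\state^{\prime\prime},\beam^\prime}}.
\end{align*}
In the second expectation, substitute $\state^{\prime\prime}=\groupaction_g\sbrk{\state^\prime}$, which is a bijection on $\statespace$. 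Because $\groupactiontwo_g^{\state^\prime}$ is a bijection on $\beamset$, we have $\max_{\beam^\prime}\actionvaluefunction^\star\rbrk{\groupaction_g\sbrk{\state^\prime},\beam^\prime} = \max_{\beam^\prime}\actionvaluefunction^g\rbrk{\state^\prime,\beam^\prime}$. This reparametrization matches the distribution written in \eqref{eq:mm_paper_mmdp_partial_symmetric_transition}.

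Next, split the difference into three terms via the triangle inequality. Write $h^\star\rbrk{\state^\prime}=\max_{\beam^\prime}\actionvaluefunction^\star\rbrk{\state^\prime,\beam^\prime}$ and $h^g\rbrk{\state^\prime}=\max_{\beam^\prime}\actionvaluefunction^g\rbrk{\state^\prime,\beam^\prime}$.
\begin{itemize}
\item The reward term is at most $\partialsymmetryR$ by \eqref{eq:mm_paper_mmdp_partial_symmetric_reward}.
\item The transition mismatch term, $\discountfactor\lvert \mathbb{E}_{\transitionfunction}\sbrk{h^\star} - \mathbb{E}_{\text{transformed}}\sbrk{h^\star}\rvert$, is at most $\discountfactor\partialsymmetryT$ by \eqref{eq:mm_paper_mmdp_partial_symmetric_transition}, since $h^\star \in \mathcal{H}$.
\item The value mismatch term, $\discountfactor\lvert\mathbb{E}_{\text{transformed}}\sbrk{h^\star - h^g}\rvert$, is at most $\discountfactor\sup\lvert h^\star-h^g\rvert \le \discountfactor\Delta$, using $\lvert\max f-\max f^\prime\rvert\le\sup\lvert f-f^\prime\rvert$.
\end{itemize}
Taking the supremum over $\rbrk{\state,\beam}$ gives $\Delta \le \partialsymmetryR + \discountfactor\partialsymmetryT + \discountfactor\Delta$. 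Rearranging, since $\Delta<\infty$, yields $\Delta\le\rbrk{\partialsymmetryR+\discountfactor\partialsymmetryT}\rbrk{1-\discountfactor}^{-1}$.

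The main obstacle is the second step: correctly interpreting the transformed transition expectation in \eqref{eq:mm_paper_mmdp_partial_symmetric_transition} so that the change of variables lines up with the Bellman backup of $\actionvaluefunction^g$. Two points need care there. One is the state-dependence of $\groupactiontwo_g^{\state}$, which requires noting that the maximum over actions is invariant under any bijection of $\beamset$, including a state-dependent one. The other is that $\mathcal{H}$ is the class of bounded functions, so the transition bound applies to $h^\star$, or to a rescaled $h^\star$ if $\mathcal{H}$ is implicitly normalized. If the norm were normalized, the bound would pick up a factor $\lVert h^\star\rVert_\infty$. I would read $\mathcal{H}$ exactly as stated, so that no such factor appears.
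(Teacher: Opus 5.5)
Your proof is correct and gives the same constant, but it closes the estimate differently from the paper. The paper does not argue at the fixed point. It inducts over the value-iteration iterates $\actionvaluefunction_m$ (with $\actionvaluefunction_{-1}=0$), proving $\lvert \actionvaluefunction_m\rbrk{\state,\beam}-\actionvaluefunction_m\rbrk{\groupaction_g\sbrk{\state},\groupactiontwo_g^{\state}\sbrk{\beam}}\rvert\le\sum_{l=0}^m\discountfactor^l\partialsymmetryR+\sum_{l=1}^m\discountfactor^l\partialsymmetryT$, and then passes to the limit using $\actionvaluefunction_m\to\actionvaluefunction^\star$. Its counterpart of your value-mismatch term, $\mathbb{X}_m$, is handled by a sign-case split: the inductive bound on $\lvert\valuefunction_m\rbrk{\state^\prime}-\valuefunction_m\rbrk{\groupaction_g\sbrk{\state^\prime}}\rvert$ is pushed into one of the two expectations, and the transition hypothesis is then applied to $\valuefunction_m\circ\groupaction_g$ or to $\valuefunction_m$.

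In your version, $\Delta$ plays the role of the induction hypothesis. The self-bounding inequality $\Delta\le\partialsymmetryR+\discountfactor\partialsymmetryT+\discountfactor\Delta$ replaces both the induction and the limit interchange. Inserting the intermediate term $\mathbb{E}_{\text{transformed}}\sbrk{h^\star}$ also removes the need for the case split, since the triangle inequality handles both signs at once. Your route requires $\Delta<\infty$ a priori, which you justify by the same boundedness the paper uses to place $\valuefunction_m$ in $\mathcal{H}$. The paper's route instead yields explicit finite-horizon bounds for every $m$, at the cost of invoking convergence of value iteration. You also make explicit two facts the paper uses silently: $\groupaction_g$ is a bijection of $\statespace$, so the reparametrized transition law is a probability distribution; and $\groupactiontwo_g^{\state^\prime}$ is a bijection of $\beamset$, so action-value differences control state-value differences.

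Your caveat about $\mathcal{H}$ applies verbatim to the paper's proof, which also plugs unnormalized value functions into the supremum. Read literally, the hypothesis is degenerate. Replacing $h$ by $ch$ and letting $c\to\infty$ shows that a finite $\partialsymmetryT$ forces the two expectations to coincide for every bounded $h$. Both arguments therefore remain valid, but the $\discountfactor\partialsymmetryT$ contribution could be dropped. Under a normalized class such as $\lVert h\rVert_\infty\le 1$, both arguments would need an extra factor of the sup-norm of the value function multiplying $\partialsymmetryT$.
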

\begin{proof}
    The proof is deferred to Appendix~\ref{proof:mm_paper_partial_symmetry}.
\end{proof}
Proposition~\ref{proposition:mm_paper_partial_symmetry} is explained as follows.
First, we start by relaxing the exact reward and transition invariance from eqs.~\eqref{eq:mm_paper_mmdp_symmetric_reward} and~\eqref{eq:mm_paper_mmdp_symmetric_transition} to the partial invariance forms shown in eqs.~\eqref{eq:mm_paper_mmdp_partial_symmetric_reward} and~\eqref{eq:mm_paper_mmdp_partial_symmetric_transition}.
Under this partial symmetry, constraining the policy network to be exactly equivariant yields a performance that is within the given bound compared to an optimal policy.
Notice that by setting $\partialsymmetryR=\partialsymmetryT=0$, we recover a distributed \gls{mmdp} with exact symmetries, that satisfies an equivariant optimal policy.
As such, while the above result holds analytical significance when the environment has approximate symmetries in the sense of~\eqref{eq:mm_paper_mmdp_partial_symmetric_reward} and~\eqref{eq:mm_paper_mmdp_partial_symmetric_transition}, we empirically test our proposed policy in such settings in the following section.
\section{Simulation Results}
\label{section:mm_paper_simulation}

\subsection{Setting}\label{subsection:mm_paper_simulation_setting}
We conduct simulations using data generated by the graphics software Blender, and the ray tracing tool Sionna~\cite{hoydis2022sionna}.
We design a scene in Blender as shown in Fig.~\ref{fig:mm_paper_simulation_setup}, where each \gls{bs} is equipped with 4 cameras positioned on top of the buildings at 15 m, shown as the blue squares, each providing an 80\textdegree\ field of view.
Communication is operated at a bandwidth of 200 MHz centered at a frequency of 28.6 GHz.

\begin{oonecolumn}
\begin{figure}
    \centering
    \includegraphics[width=0.5\linewidth]{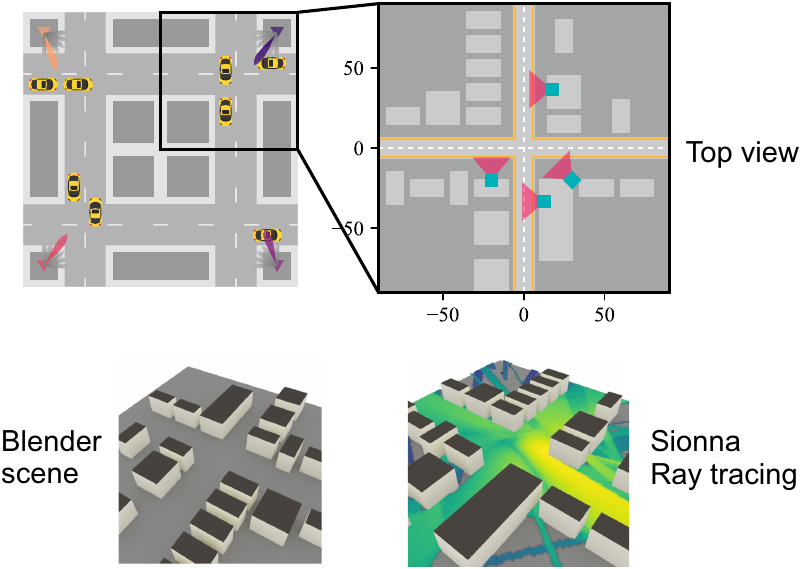}
    \caption{Simulation setting showing the top view of the considered V2I network. A Blender scene is processed by Sionna to render wireless CSI using ray tracing.}
    \label{fig:mm_paper_simulation_setup}
\end{figure}
\end{oonecolumn}
\begin{ttwocolumn}
\begin{figure}
    \centering
    \includegraphics[width=0.9\linewidth]{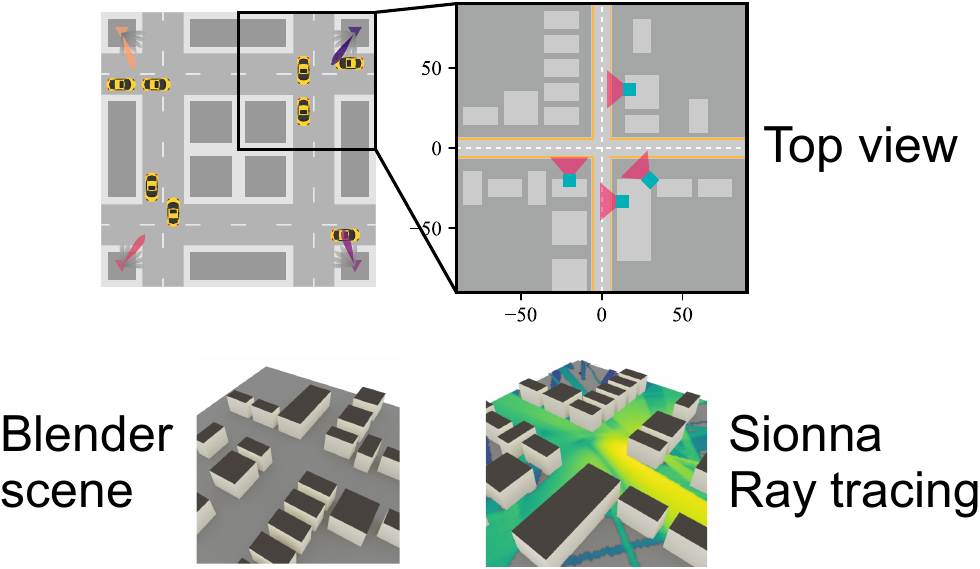}
    \caption{Simulation setting showing the top view of the considered V2I network. A Blender scene is processed by Sionna to render wireless CSI using ray tracing.}
    \label{fig:mm_paper_simulation_setup}
\end{figure}
\end{ttwocolumn}

To test our multimodal sensing framework, we collect 3,000 images from the scene (depicting around 5,000 vehicles as processed by YOLOv7) and 35,000 wireless channels.
The charting function $\chartingfunction_{\chartingfunctionparams_\agentindex}$ is parametrized as a \gls{mlp} with 5 hidden layers of (1024, 512, 256, 128, 64) neurons followed by ReLU activations, and trained with a learning rate of 0.01.
For the \gls{marl}, each encoder is made of two equivariant layers comprising 64 and 32 neurons, while the message passing and update functions are equivariant layers with 64 neurons, all followed by ReLU activations.
The policy head is an equivariant layer that receives the final embedding and outputs a distribution over the codebook, while the value head is another equivariant layer with 64 neurons that outputs the estimated value.
We train the \gls{ppo} agents with a learning rate of 0.0001.

\subsection{Discussion and Key Insights}
%
\subsubsection{Impact of Multimodal Sensing}
We start by studying the performance of our \textbf{Proposed} self-supervised multimodal sensing method.
Since the same framework is used by all agents, we show results for one agent where the ground-truth vehicle locations are shown in Fig.~\ref{fig:mm_paper_locations_gt}.
We compare with three baselines:
\begin{itemize}[nolistsep, leftmargin=*]
    \item \textbf{Baseline}: a typical (\gls{csi} only) channel charting scheme followed by an optimal affine transform to obtain the users' locations from \gls{csi} embeddings. This is equivalent to setting $\matchingscalar = 1$ and $\ccregularizer = 0$ in \eqref{eq:mm_paper_cc_loss}.
    \item \textbf{Supervised}: a fully supervised fingerprinting baseline where we assume the ground-truth matching locations of wireless channels are known by the agent.
    \item \textbf{Proposed (Partial)}: an ablation benchmark where the agent only accesses images from limited sections of the environments. We implement this approach by removing the left-most camera in Fig.\ref{fig:mm_paper_simulation_setup}, which withholds the agent from observing vehicle locations in the yellow parts of Fig.~\ref{fig:mm_paper_locations_gt}. The idea of this baseline is to examine the out-of-distribution generalizability of our method, testing whether the agent can correctly align the partial matching sections of the two modalities, and recover the locations of the unmatched \gls{csi} samples with no supervision.
\end{itemize}

\begin{table*}
    \caption{Latent space and localization metrics for different methods}
    \centering
    \begin{tabular}{lcccccc}
    \toprule
    & & \multicolumn{3}{c}{Latent Space Metrics} & \multicolumn{2}{c}{Localization error [m]} \\\cmidrule(lr){3-5} \cmidrule(lr){6-7}
    Method & Figure & CT $\rbrk{\uparrow}$ & TW $\rbrk{\uparrow}$ & KS $\rbrk{\downarrow}$ & Mean $\rbrk{\downarrow}$ & 95\textsuperscript{th} percentile $\rbrk{\downarrow}$ \\
    \midrule
    Supervised & --- & 0.999935 & 0.999939 & 0.013794 & 0.368543 & 0.978475 \\
    Baseline & Fig.~\ref{fig:mm_paper_baseline_chart} & 0.993567 & 0.995233 & 0.118514 & 3.916458 & 9.258683 \\
    Proposed & Fig.~\ref{fig:mm_paper_proposed_chart} & 0.999407 & 0.999515 & 0.032431 & 1.442052 & 3.366604 \\
    Proposed (Partial) & Fig.~\ref{fig:mm_paper_proposed_chart_trunc} & 0.996655 & 0.996944 & 0.073794 & 2.092158 & 5.983181 \\
    \bottomrule
    \end{tabular}
    \label{tab:mm_paper_cc_metrics}
\end{table*}

\begin{figure*}[!t]
    \centering
    \subfloat[Ground-truth locations.]{
        \includegraphics{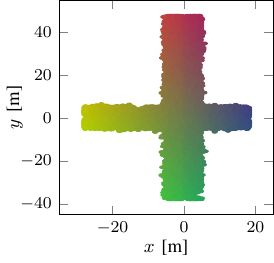}
        \label{fig:mm_paper_locations_gt}}
    \hfill
    \subfloat[Baseline.]{
        \includegraphics{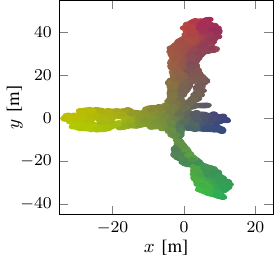}
        \label{fig:mm_paper_baseline_chart}}
    \hfill
    \subfloat[Proposed.]{
        \includegraphics{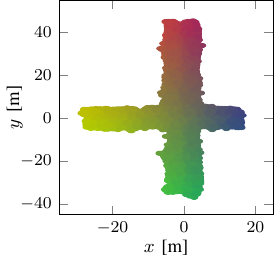}
        \label{fig:mm_paper_proposed_chart}}
    \caption{Comparison between different sensing frameworks.}
    \label{fig:mm_paper_charts}
\end{figure*}
\begin{figure}[!t]
    \centering
    \includegraphics{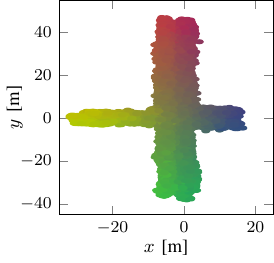}
    \caption{Localization performance of the proposed method with partial alignment.}
    \label{fig:mm_paper_proposed_chart_trunc}
\end{figure}

Figs.~\ref{fig:mm_paper_baseline_chart} and~\ref{fig:mm_paper_proposed_chart} show the obtained locations from \gls{csi} samples, corresponding to ground-truth locations gradient colored in Fig.~\ref{fig:mm_paper_locations_gt}, for the baseline and our proposed approach.
Clearly, our approach significantly outperforms the baseline, signifying that the agent correctly matches the modalities without supervision allowing for precise localization.
To quantify these results, Table~\ref{tab:mm_paper_cc_metrics} presents the continuity (CT), trustworthiness (TW) and Kruskal stress (KS) metrics, typically used the channel charting literature~\cite{stephan2024angle}, while also showing the mean and 95\textsuperscript{th} percentile localization error.
First, we notice that all approaches perform well in terms of obtained latent space quality, noting that our proposed approach achieves lower KS than the baseline underscoring that the obtained embeddings mirror ground-truth locations more accurately (since KS measures the discrepancy between pairwise distances in original and latent spaces, determining the global structure preservation of the embeddings).
Furthermore, our proposed method attains an average localization error of 1.44 m, 64\% less than the baseline of 3.91 m, while the supervised approach guarantees a 0.36 m mean error.

\subsubsection{Impact of Multimodal Alignment}
To gain more insight on the proposed multimodal alignment, Fig.~\ref{fig:mm_paper_proposed_chart_trunc} illustrates the obtained locations from our model trained under partial alignment.
We observe that our approach is almost unaffected by such disruptions, and generalizes well even with partial sections of the environment completely missing from the image modality.
In fact, this partial alignment approach achieves a 95\textsuperscript{th} percentile error of 5.98 m, 36\% better than the baseline.

Fig.~\ref{fig:mm_paper_loc_cdf} plots the cumulative distribution function (CDF) of the localization error for the different proposed and baseline methods.
We again remark that our approach, even under partial alignment, realizes substantially better sensing results than the baseline.
We also notice the impact of the missing modality on the localization error with the gap between the proposed (green) and the partial (blue) curves.
This is due to the misaligned \gls{csi} samples which the agent incorrectly localizes, underscoring the importance of the cross-modal distillation term in our loss function \eqref{eq:mm_paper_cc_loss} to accurately ground the \gls{csi} embeddings with their aligned images.

\begin{oonecolumn}
\begin{figure}[!t]
    \begin{minipage}[t]{.45\linewidth}
        \centering
        \includegraphics{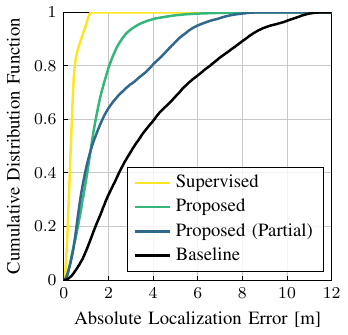}
        \caption{Empirical CDF of localization error for different methods.}
        \label{fig:mm_paper_loc_cdf}
    \end{minipage}
    \hfill
    \begin{minipage}[t]{.45\linewidth}
        \centering
        \includegraphics{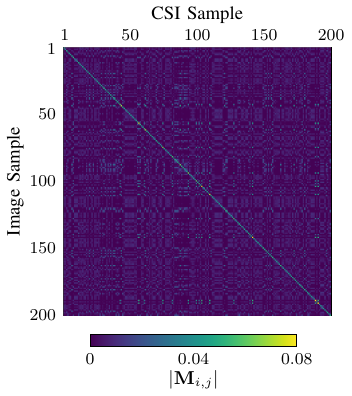}
        \caption{First rows and columns of the obtained matching matrix.}
        \label{fig:mm_paper_matching}
    \end{minipage}
\end{figure}
\end{oonecolumn}

\begin{ttwocolumn}
\begin{figure*}[!t]
    \begin{minipage}[t]{.5\linewidth}
        \centering
        \includegraphics{sim-figures/loc-cdf.pdf}
        \caption{Empirical CDF of localization error for different methods.}
        \label{fig:mm_paper_loc_cdf}
    \end{minipage}
    \hfill
    \begin{minipage}[t]{.5\linewidth}
        \centering
        \includegraphics{sim-figures/matching.pdf}
        \caption{First rows and columns of the obtained matching matrix.}
        \label{fig:mm_paper_matching}
    \end{minipage}
\end{figure*}
\end{ttwocolumn}

\begin{ttwocolumn}
\begin{figure}[!t]
\centering
{\setlength{\fboxrule}{1pt}\setlength\fboxsep{-0.5pt}\fcolorbox{purple}{white}{
\subfloat[$\left\lvert\dataset_\agentindex^{\text{csi}}\right\rvert = 3{,}000$ samples.]{
    \includegraphics{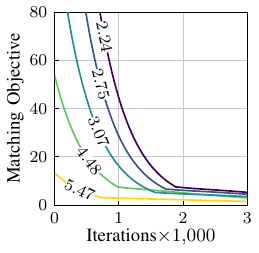}
    \label{fig:mm_paper_alignment_convergence0}}
\hspace{-1em}
\subfloat[$\left\lvert\dataset_\agentindex^{\text{csi}}\right\rvert = 5{,}000$ samples.]{
    \includegraphics{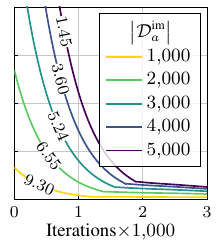}
    \label{fig:mm_paper_alignment_convergence1}}
}}
\caption{\revise{Convergence of our proposed multimodal alignment procedure for different data set sizes. Each curve is also labeled by its corresponding mean localization error in meters.}}
\label{fig:mm_paper_alignment_convergence}
\end{figure}
\end{ttwocolumn}

\begin{oonecolumn}
\begin{figure}[!t]
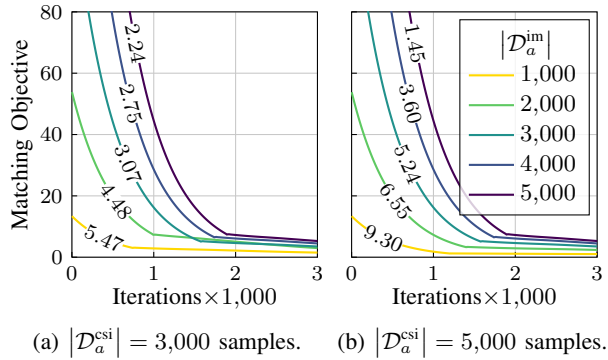

\centering
\subfloat[$\left\lvert\dataset_\agentindex^{\text{csi}}\right\rvert = 3{,}000$ samples.]{
    \includegraphics{revised_figures/optconv0.pdf}
    \label{fig:mm_paper_alignment_convergence0}}
\hspace{-1em}
\subfloat[$\left\lvert\dataset_\agentindex^{\text{csi}}\right\rvert = 5{,}000$ samples.]{
    \includegraphics{revised_figures/optconv1.pdf}
    \label{fig:mm_paper_alignment_convergence1}}
\caption{Convergence of our proposed multimodal alignment procedure for different data set sizes. Each curve is also labeled by its corresponding mean localization error in meters.}
\label{fig:mm_paper_alignment_convergence}
\end{figure}
\end{oonecolumn}

Moreover, Fig.~\ref{fig:mm_paper_matching} depicts the first 200 rows and columns of our obtained multimodal matching matrix.
For this experiment, the samples are arranged such that the first image location corresponds to the first \gls{csi} sample, etc.
We notice that our obtained matching matrix is strongly diagonal, eliciting a correct matching between wireless \gls{csi} and images, and confirming all previous localization results.

Fig.~\ref{fig:mm_paper_alignment_convergence} presents the convergence of our solution to the multimodal alignment problem (13), for different \gls{csi} and image data set sizes.
Furthermore, we report the average localization error for each tuple $\rbrk{\numsamples_\agentindex^{\text{csi}},\numsamples_\agentindex^{\text{im}}}$ obtained by training the multimodal sensing model with the corresponding scaling factor and matching matrix $\rbrk{\matchingscalar_\agentindex, \matchingmatrix_\agentindex}$.
First, as shown in Fig.~\ref{fig:mm_paper_alignment_convergence0} for the case of $\numsamples_\agentindex^{\text{csi}}=3{,}000$, we notice that with $1{,}000$ image samples, our algorithm converges quickly after around $500$ iterations, while achieving a localization error of $5.4$m.
This convergence is delayed by four times when $\numsamples_\agentindex^{\text{im}}$ increases to $5{,}000$ that yields a localization error of $2.2$m, less than half of the former after $2000$ iterations.
When we increase $\numsamples_\agentindex^{\text{csi}}$ to $5{,}000$ samples in Fig.~\ref{fig:mm_paper_alignment_convergence1}, we remark that the algorithm requires around $1.5$ times more steps to converge due to the increased dimensionality of the problem, for 1-3{,}000 images, however achieves comparably higher localization errors.
Interestingly, the necessary number of image samples to achieve a competitive localization performance must increase similarly to \gls{csi}.
For instance, our algorithm marks a $2.2$m error when $\numsamples_\agentindex^{\text{csi}}=3{,}000$ and $\numsamples_\agentindex^{\text{im}}=5{,}000$, in contrast to a $5.2$m error when the sample sizes are flipped.
This implies that images are more important to our localization algorithm than \gls{csi}, since they provide the grounding labels that allow the extraction of the user locations from \gls{csi} embeddings.
The best overall performance is achieved with $5{,}000$ images and channel samples, yielding a mean localization error of $1.4$m.

Besides, we investigate the sensitivity of our multimodal alignment to timing offsets between the camera and \gls{csi} estimate hardware.
As such, we delay the image sampling frequency at a fixed offset from that of the wireless channel, and execute our multimodal sensing algorithm such that \gls{csi} is now aligned with delayed images.
We report the empirical localization error distribution in Fig~\ref{fig:mm_paper_timing_offset}, while illustrating the corresponding five number summary.
We remark that a low offset of $10$ms does not affect our algorithm, where the localization error is concentrated around $1$-$2$m, achieving a median error of $1.5$m.
However, this distribution turns into a bimodal distribution at a $20$ms offset, where the errors are almost equally spread around the modes at $1$ and $3.5$m, with an interquartile range of $2.5$m, denoting a significant impact of offset between the modalities.
Further, a $30$ ms offset shifts almost all the localization errors to become concentrated around $3.7$m with a narrower interquartile range of $0.8$m.
This indicates that further work on multimodal sensing ought to explicitly account for timing offsets between the different modality hardware within the alignment procedure.

\begin{ttwocolumn}
\begin{figure*}[!t]
    \begin{minipage}[b]{.5\linewidth}
            \centering{\setlength{\fboxrule}{1pt}\setlength{\fboxsep}{5pt}\fcolorbox{purple}{white}{
        \includegraphics[width=0.75\linewidth]{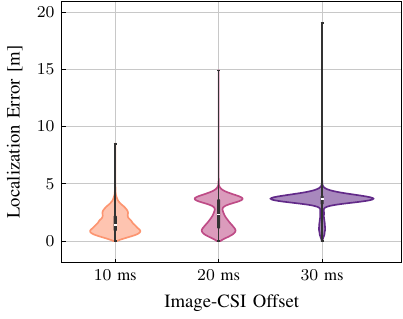}}}
        \caption{\revise{Violin plot showing the distribution of localization errors under different timing offsets between the camera and CSI acquisition modules.}}
        \label{fig:mm_paper_timing_offset}
    \end{minipage}
    \hfill
    \begin{minipage}[b]{.5\linewidth}
        \centering
        \includegraphics[width=0.9\linewidth]{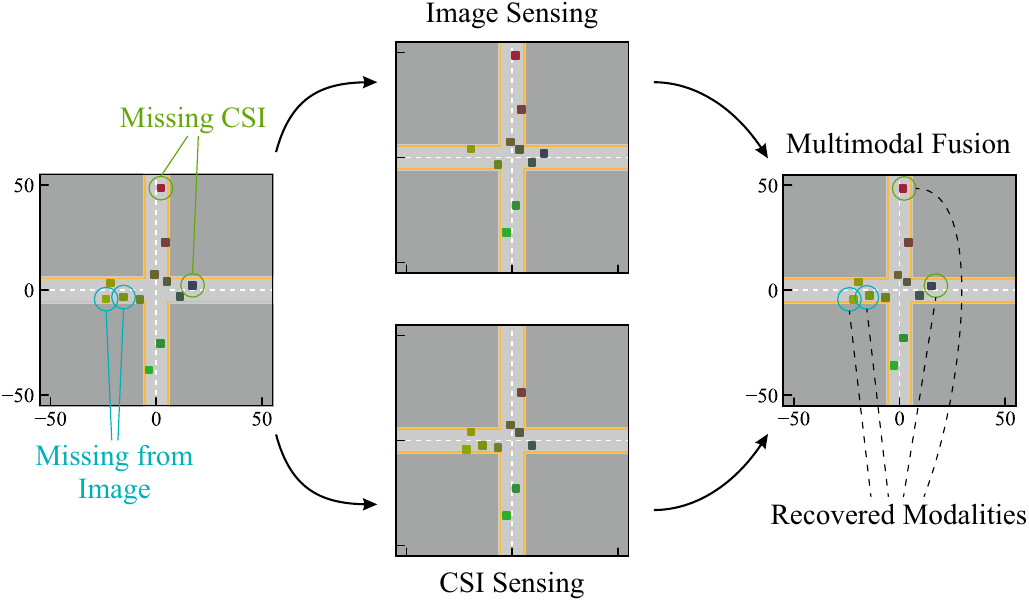}
        \caption{Proposed crossmodal imputation on an environment with missing data from each modality.}
        \label{fig:mm_paper_proposed_imputation}
    \end{minipage}
\end{figure*}
\end{ttwocolumn}

\begin{ttwocolumn}
\begin{figure*}[!t]
    \begin{minipage}[t]{.5\linewidth}
        \centering
        \includegraphics{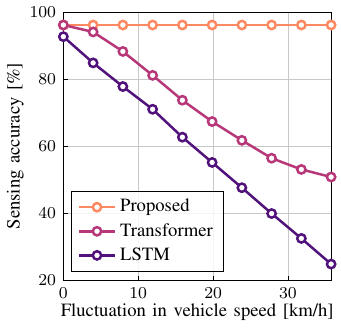}
        \caption{Out-of-distribution average sensing accuracy for different imputation models.}
        \label{fig:mm_paper_ood_forecast}
    \end{minipage}
    \hfill
    \begin{minipage}[t]{.5\linewidth}
        \centering
        \includegraphics{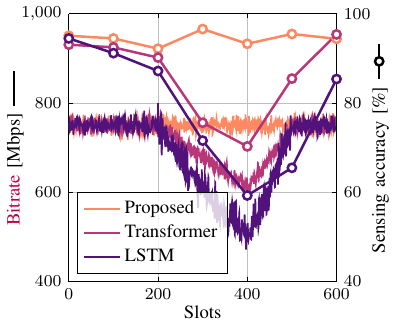}
        \caption{Rate and sensing accuracy for different imputation models.}
        \label{fig:mm_paper_ood_sensing_rate}
    \end{minipage}
\end{figure*}
\end{ttwocolumn}

\subsubsection{Impact of Crossmodal Imputation}
We now examine our proposed crossmodal imputation framework, allowing the agent to recover a missing modality.
Fig.~\ref{fig:mm_paper_proposed_imputation} shows a particular sample of ground-truth vehicle locations from the environment (left).
As indicated, two users are not observed by the camera images (blue) and the agent has no \gls{csi} estimates for two other users (green).
In the middle part, the agent processes the images and \gls{csi} to extract the aligned vehicles' sensing data.
Finally, as shown on the right, the agent can perform localization for all users by fusing both modalities, hence acquiring a full situational awareness, even with missing modalities.

\begin{oonecolumn}
\begin{figure}[!t]
\begin{minipage}[b]{.45\linewidth}
    \centering
    \includegraphics[width=0.8\linewidth]{revised_figures/violin_offset.pdf}
    \caption{Violin plot showing the distribution of localization errors under different timing offsets between the camera and CSI acquisition modules.}
    \label{fig:mm_paper_timing_offset}
\end{minipage}
\hfill
\begin{minipage}[b]{.45\linewidth}
    \centering
    \includegraphics[width=\linewidth]{sim-figures/imputation.pdf}
    \caption{Proposed crossmodal imputation on an environment with missing data from each modality.}
    \label{fig:mm_paper_proposed_imputation}
\end{minipage}
\end{figure}
\end{oonecolumn}

We now compare our proposed method with two supervised auto-regressive baselines from the literature:
\begin{itemize}[nolistsep, leftmargin=*]
    \item \textbf{Transformer}: a transformer architecture is trained to impute missing sensing data from a time-series, following~\cite{wu2020deep}. The transformer's architecture consists of encoder and decoder layers. First the input series is embedded using a linear layer to a $512$ feature vector, followed by element-wise addition of a positional encoding vector to encode the series' order. This vector is then passed through four identical encoder layers: each layer consists of $8$ attention heads, a normalization layer, a $1024$ neuron fully-connected layer and another normalization layer. The decoder mirrors the encoder’s architecture with the additional cross-attention layer over the encoder's output. The output of the last decoder layer is fed to a linear layer to form the prediction target. We train the model in a supervised manner with a mean squared error loss, a batch size of $32$ and $0.0001$ learning rate, to infer a user's $5$ future locations given its last $20$ locations.
    \item \textbf{LSTM}: a standard \gls{lstm}~\cite{gers99lstm} for time-series forecasting is trained for the same task. The model employs three stacked recurrent layers, each with $64$ neurons. The hidden state of the final layer is projected through a linear linear that outputs the prediction vector of size $5$. We also train the LSTM in a supervised manner with a mean squared error loss, a batch size of $32$ and $0.005$ learning rate.
\end{itemize}

\begin{oonecolumn}
\begin{figure}[!t]
    \begin{minipage}[t]{.45\linewidth}
        \centering
        \includegraphics{sim-figures/ood-forecast.pdf}
        \caption{Out-of-distribution average sensing accuracy for different imputation models.}
        \label{fig:mm_paper_ood_forecast}
    \end{minipage}
    \hfill
    \begin{minipage}[t]{.45\linewidth}
        \centering
        \includegraphics{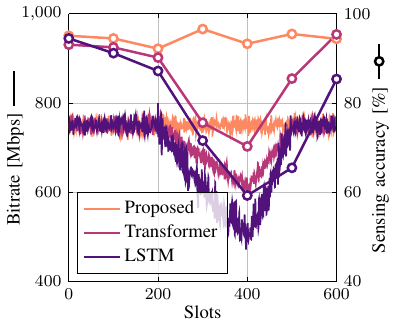}
        \caption{Rate and sensing accuracy for different imputation models.}
        \label{fig:mm_paper_ood_sensing_rate}
    \end{minipage}
\end{figure}
\end{oonecolumn}
To train these two auto-regressive models, we collect time-series sensing data from our environment and train them in a supervised learning manner.
The training data is collected while the vehicles navigate our environment at an average speed of 40 km/h.
However, during testing we introduce random fluctuations in the speed which creates a distribution shift of training data.

Fig.~\ref{fig:mm_paper_ood_forecast} displays the sensing accuracy of the different models when tested on data corresponding to varying speed fluctuations.
With no fluctuations in vehicles’ speed, all models provide the same accuracy with 3-4\% differences.
The weakness of the baselines appears with their performance under out-of-distribution settings, in which, they yield an accuracy below 60\% and 70\% for the LSTM and transformer, respectively, at around 20 km/h fluctuation in speed.
In contrast, our cross-modal recovery method maintains the same sensing accuracy around 95\% since it bases its prediction on the counterpart of each modality in the same timeslot (as shown in Fig.~\ref{fig:mm_paper_proposed_imputation}), while the auto-regressive baselines cannot extrapolate well to unseen sensing data patterns.

Besides, since our \gls{marl} training relies on fine-grained sensing data to exploit symmetries, we inspect the impact of different imputation models on its real-time performance.
Fig.~\ref{fig:mm_paper_ood_sensing_rate} plots the bitrate and running average of the sensing accuracy for a particular navigating vehicle, whose speed starts fluctuating between slots 200 and 400.
We notice that both imputation baselines suffer from a rate decrease of 20\% and 30\% respectively for the transformer and \gls{lstm} under unseen data patterns, while our approach adapts seamlessly with its unaffected sensing accuracy, guaranteeing high-rates for the user.

\subsubsection{Impact of Equivariant MARL}
We now analyze the performance of our proposed equivariant \gls{marl} training scheme.
We compare with four baselines:
\begin{itemize}[nolistsep, leftmargin=*]
    \item \textbf{Baseline 1}: we implement the proposed \gls{gnn} used standard neural network layers instead of equivariant layers to study the impact of accounting for symmetries in the environment. Note that this benchmark utilizes the estimated vehicles' locations obtained from multimodal data as input, but trains a standard \gls{marl} policy.
    \item \textbf{Baseline 2}: a data augmentation baseline where a standard non-equivariant \gls{gnn} is used for the \gls{marl} policy, and the agents' networks are also trained with \gls{ppo}. At each policy update step, the collected environment data batch $\cbrk{\estimatedstate^t_\agentindex, \beam^t_\agentindex, \reward^t_\agentindex}_{t=1}^B$ are synthetically augmented to $\bigcup_{g \in \group} \cbrk{\groupaction_g \sbrk{\estimatedstate^t_\agentindex}, \groupactiontwo_g^\state \sbrk{\beam^t_\agentindex}, \reward^t_\agentindex}_{t=1}^B$ applying group transformations to the experienced samples and actions. Hence for a batch size $B$, this baseline amount to training a standard network with $\left\lvert \group \right\rvert B$ samples, and serves for comparison to our proposed network that satisfies the equivariance property through its architecture.
    \item \textbf{Baseline 3}: a centralized controller baseline, where a single \glsentryshort{rl} agent observes all the states and takes a decision for all \glspl{bs}.
    \item \textbf{Baseline 4}: a baseline with no communication, where each \gls{bs} agent acts independently given its own observation as a single \glsentryshort{rl} agent. This benchmark evaluates the impact of communication on the coordination between the agents.
\end{itemize}
\begin{oonecolumn}
\begin{figure*}
    \centering
    \subfloat[$\left\lvert\beamset_\agentindex\right\rvert=64,\;\; \left\lvert\userset_\agentindex\right\rvert=4$]{
        \includegraphics{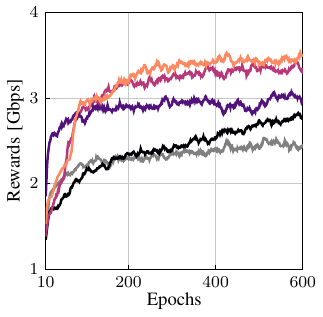}
        \label{fig:mm_paper_marl1}}
    \hfill
    \subfloat[$\left\lvert\beamset_\agentindex\right\rvert=256,\;\; \left\lvert\userset_\agentindex\right\rvert=4$]{
        \includegraphics{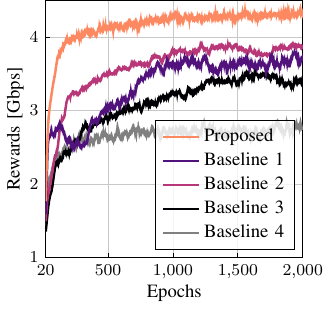}
        \label{fig:mm_paper_marl2}}
    \hfill
    \subfloat[$\left\lvert\beamset_\agentindex\right\rvert=64,\;\; \left\lvert\userset_\agentindex\right\rvert=8$]{
        \includegraphics{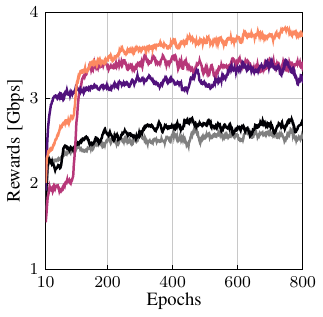}
        \label{fig:mm_paper_marl3}}
    \caption{Convergence of \gls{marl} algorithms for different system parameters affecting the state and action sets.}
    \label{fig:mm_paper_marl}
\end{figure*}
\end{oonecolumn}
\begin{ttwocolumn}
\begin{figure*}
    \centering
    \subfloat[$\left\lvert\beamset_\agentindex\right\rvert=64,\;\; \left\lvert\userset_\agentindex\right\rvert=4$]{
        \includegraphics{sim-figures/marl_64_4.pdf}
        \label{fig:mm_paper_marl1}}
    \hfill
    \subfloat[$\left\lvert\beamset_\agentindex\right\rvert=256,\;\; \left\lvert\userset_\agentindex\right\rvert=4$]{
        \includegraphics{sim-figures/marl_256_4.pdf}
        \label{fig:mm_paper_marl2}}
    \hfill
    \subfloat[$\left\lvert\beamset_\agentindex\right\rvert=64,\;\; \left\lvert\userset_\agentindex\right\rvert=8$]{
        \includegraphics{sim-figures/marl_64_8.pdf}
        \label{fig:mm_paper_marl3}}
    \caption{Convergence of \gls{marl} algorithms for different system parameters affecting the state and action sets.}
    \label{fig:mm_paper_marl}
\end{figure*}
\end{ttwocolumn}
\begin{ttwocolumn}
\begin{figure}[!t]
    \centering
    {\setlength{\fboxrule}{1pt}\setlength\fboxsep{-0.5pt}\fcolorbox{purple}{white}{
    \includegraphics[width=.6\linewidth]{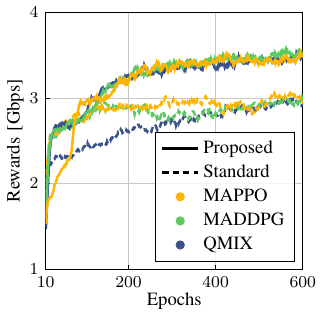}}}
    \caption{\revise{Impact of different MARL training schemes using our proposed equivariant policy network and a standard non-equivariant network $\rbrk{\left\lvert\beamset_\agentindex\right\rvert=64,\;\; \left\lvert\userset_\agentindex\right\rvert=4}$.}}
    \label{fig:mm_paper_marl_ablation}
\end{figure}
\end{ttwocolumn}
Fig.~\ref{fig:mm_paper_marl1} shows the convergence of all \gls{marl} algorithms in the case where $\left\lvert\beamset_\agentindex\right\rvert=64$ and $\left\lvert\userset_\agentindex\right\rvert=4$ users.
We notice that our equivariant method outperforms a typical non-equivariant network (baseline 1) by $20\%$, where the baseline achieves rates of $3$ Gbps after $600$ training epochs, which is attained by our approach after $100$ epochs, and increased to more than $3.5$ Gbps after convergence.
Further, in this setting, the augmentation approach (baseline 2) performs similarly to our algorithm until convergence, where our method achieves $5\%$ higher rewards, noting that the benchmark necessitates hand-crafted processing steps to augment the agents' training data, whereas our network satisfies the equivariance property by design.
The single agent method (baseline 3) is the slowest method to converge due to the large state and action sets, and is outperformed by our method by $30\%$ higher rewards, whereas the benchmark with no communication (baseline 4) performs the worst attaining a sum rate less than $2.5$ Gbps, underlying the importance of communication in distributed coordination problems.

\subsubsection{Impact of Scalability}
To study the scalability of our proposed method, we examine its robustness with respect to the local state and action sets.
Fig.~\ref{fig:mm_paper_marl2} plots the convergence of all algorithms when the number of antennas per \gls{bs} is increased so that $\left\lvert\beamset_\agentindex\right\rvert=256$.
First, we note that our proposed approach maintains its swift convergence in this setting, achieving a return higher than $4$ Gbps in less than $300$ epochs.
This underscores $50\%$ and $30\%$ gains over the non-equivariant and augmentation baselines respectively, in the low data regime.
To attain the same data rates, the augmentation baseline requires around $1500$ epochs, resulting in more than $80\%$ savings in terms of training data for our method.
Furthermore, we observe that in this setting, the augmentation baseline can no longer match the performance of our proposed technique that achieves $12\%$ higher returns of $4.3$ Gbps, and matches the rewards attained by the baseline with $70\%$ less training epochs.
The non-equivariant baseline performs only $4\%$ worse than the augmentation approach in terms of rewards, but requires around $300$ more training rounds to converge, underscoring the importance of inducing symmetries into the policy training, particularly under high-dimensional action sets.
The single agent and no communication baselines are again outperformed by our proposed method by $30\%$ and $55\%$ reward gains after convergence respectively.

\begin{oonecolumn}
\begin{figure}[!t]
    \centering
    \includegraphics[width=.3\linewidth]{revised_figures/marl_ablation.pdf}
    \caption{Impact of different MARL training schemes using our proposed equivariant policy network and a standard non-equivariant network $\rbrk{\left\lvert\beamset_\agentindex\right\rvert=64,\;\; \left\lvert\userset_\agentindex\right\rvert=4}$.}
    \label{fig:mm_paper_marl_ablation}
\end{figure}
\end{oonecolumn}

\begin{ttwocolumn}
\begin{figure*}
    \centering
    {\setlength{\fboxrule}{1pt}\setlength\fboxsep{-0.5pt}\fcolorbox{purple}{white}{
    \subfloat[Road and RSU angle perturbations breaking exact $C_4$ symmetry.]{
        \includegraphics[width=0.34\textwidth, valign=b]{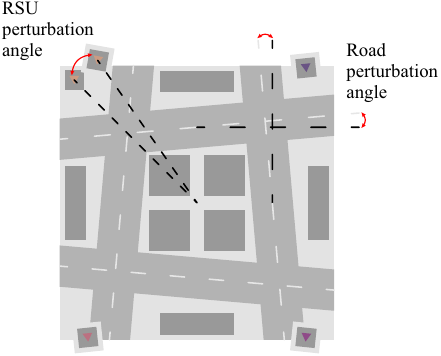}
        \label{fig:mm_paper_partial_symmetry_def}}
    \hfill
    \subfloat[Impact of road angle asymmetry.]{
        \includegraphics[valign=b]{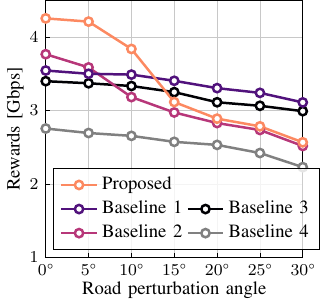}
        \label{fig:mm_paper_partial_symmetry_road}}
    \hfill
    \subfloat[Impact of RSU angle asymmetry.]{
        \includegraphics[valign=b]{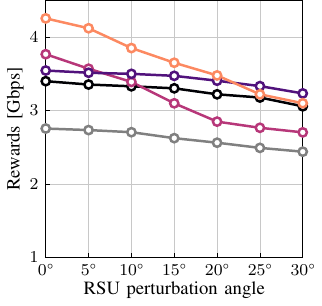}
        \label{fig:mm_paper_partial_symmetry_rsu}}
    }}
    \caption{\revise{Comparison between different MARL algorithms under partial symmetry ($\left\lvert\beamset_\agentindex\right\rvert=256,\;\; \left\lvert\userset_\agentindex\right\rvert=4$).}}
    \label{fig:mm_paper_partial_symmetry_sim}
\end{figure*}
\end{ttwocolumn}
\begin{ttwocolumn}
\begin{figure}[!t]
    \centering
    {\setlength{\fboxrule}{1pt}\setlength\fboxsep{3pt}\fcolorbox{purple}{white}{
    \includegraphics[width=.95\linewidth]{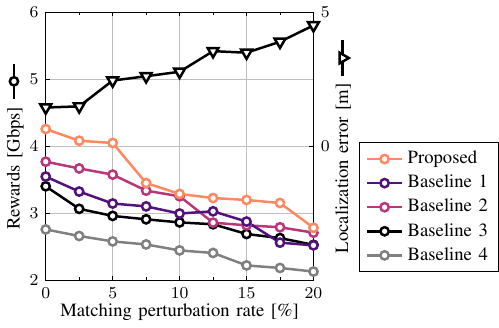}}}
    \caption{\revise{Impact of localization error on the performance of different MARL algorithms ($\left\lvert\beamset_\agentindex\right\rvert=256,\;\; \left\lvert\userset_\agentindex\right\rvert=4$). We inject such errors by randomly flipping some of the rows of the optimized matching matrix. Note the different scaling on the two vertical axes.}}
    \label{fig:mm_paper_injected_noise}
\end{figure}
\end{ttwocolumn}
Fig.~\ref{fig:mm_paper_marl3} shows the convergence of different algorithms when more vehicles are present in the network, i.e., $\left\lvert\userset_\agentindex\right\rvert=8$ users.
In this setting, our approach still outperforms the data augmentation and non-equivariant baselines by $15\%$ higher rewards, underscoring its scalable behavior.
Both baselines perform very similarly in this setting and achieve a sum rate around $4.3$ Gbps, highlighting the weakness of data augmentations in reaching strong performances under large observation spaces, while the single agent approach can only obtain a return of $2.7$ Gbps.
Altogether, Fig.~\ref{fig:mm_paper_marl} shows the importance of identifying the underlying symmetries in the optimization of a wireless network, achieving important gains in all settings.

To further examine our proposed \gls{marl} policy, we study its sensitivity to PPO gradient updates, by comparing it with two other popular \gls{marl} variants, namely MADDPG~\cite{lowe2017multi} and QMIX~\cite{rashid2020monotonic}.
We plot the reward convergence for all three variants in Fig.~\ref{fig:mm_paper_marl_ablation}, while comparing our equivariant policy to a standard non-equivariant policy (baseline 1) in the case $\left\lvert\beamset_\agentindex\right\rvert=64$ and $\left\lvert\userset_\agentindex\right\rvert=4$.
We notice that regardless of the \gls{marl} training scheme, equipping agents with our proposed equivariant network guarantees stable convergence, as different policy improvement methods persistently yield almost the same convergence behavior (solid curves).
This is in contrast to the case where agents train non-equivariant policy networks, that are clearly more sensitive to the \gls{marl} updates (dashed curves).
For instance, we notice that MAPPO and MADDPG perform similarly, both converging at around $200$ epochs, however MAPPO showcases a more stable behavior than MADDPG whose performance fluctuates.
Whereas, QMIX requires around $400$ epochs to converge to similar rewards as the other baselines, a two fold decrease in sample efficiency.
Altogether, this experiment underscores the importance of infusing symmetry inductive biases to the downstream \gls{marl} training, as it stabilizes policy updates while also achieving significant reward gains.

\begin{oonecolumn}
\begin{figure*}
    \centering
    \subfloat[Road and RSU angle perturbations breaking exact $C_4$ symmetry.]{
        \includegraphics[width=0.34\textwidth, valign=b]{revised_figures/perturb.pdf}
        \label{fig:mm_paper_partial_symmetry_def}}
    \hfill
    \subfloat[Impact of road angle asymmetry.]{
        \includegraphics[valign=b]{revised_figures/perturbation_road.pdf}
        \label{fig:mm_paper_partial_symmetry_road}}
    \hfill
    \subfloat[Impact of RSU angle asymmetry.]{
        \includegraphics[valign=b]{revised_figures/perturbation_rsu.pdf}
        \label{fig:mm_paper_partial_symmetry_rsu}}
    \caption{Comparison between different MARL algorithms under partial symmetry ($\left\lvert\beamset_\agentindex\right\rvert=256,\;\; \left\lvert\userset_\agentindex\right\rvert=4$).}
    \label{fig:mm_paper_partial_symmetry_sim}
\end{figure*}
\end{oonecolumn}

\subsubsection{Impact of Partial Symmetry}
We now study the performance of our proposed equivariant \gls{marl} under imperfect rotation symmetries.
To that end, we introduce two variants of our original environment where we imbue asymmetric perturbations.
In the first experiment, we introduce a `road perturbation angle' whereby the roads of the environment are rotated in an asymmetric fashion, thus the vehicles no longer navigate by adhering to exact location symmetries with respect to the \glspl{rsu}.
Likewise, in the second experiment, we perturb the locations of the \glspl{rsu} through an `RSU perturbation angle', asymmetrically pushing the top and bottom RSUs closer to each other, breaking the exact $C_4$ symmetry, since the relative vehicle locations to each \gls{rsu} no longer transform under $90$\textdegree\ rotations.
The two asymmetrical variants are presented in Fig.~\ref*{fig:mm_paper_partial_symmetry_def}, and the aim of this experiment is to examine the performance of the different \gls{marl} benchmarks under a controlled notion of asymmetry.
We note that the performance of model-free methods (baselines 1 and 3) is only slightly affected by such perturbations since we re-train them from scratch for each road or \gls{rsu} deviation angle for a fair comparison.
For example, as shown in Fig.~\ref*{fig:mm_paper_partial_symmetry_road}, the performance of the non-equivariant method (baseline 1) degrades by $13\%$ and $9\%$ respectively when tested on road / \gls{rsu} perturbation angles of $30$\textdegree, achieving rewards of $3.1$ and $3.2$ Gbps compared to the original $3.5$ Gbps under perfect $C_4$ symmetry.
This performance loss is only due to the more intricate learnability of irregular observation patterns in the asymmetrical environments. On the other hand, our equivariant method and the data augmentation method (baseline 2) restrain policy updates to satisfy $C_4$ equivariance, and hence lose performance under more environment perturbations (since by increasing the perturbation angles, the environment no longer adheres to such equivariance).
First, under road angle deviations, our proposed algorithm maintains its strong performance achieving around $4$ Gbps rewards up to $10$\textdegree\ road perturbations, and starts losing performance to the model-free \gls{marl} after $15$\textdegree\ road angle deviation.
To the contrary, the augmentation baseline can only sustain up to $5$\textdegree\ perturbations as its performance is directly exceeded by the non-equivariant algorithm.
Hence, under such asymmetries, our proposed method can still mark $15\%$ gains compared to benchmarks, showcasing its generalizability due to the effective inductive bias we infuse to its policy training.
When the road perturbation angle exceeds $15$\textdegree, it becomes more advantageous to train a classical non-equivariant policy, since our proposed method and the augmentation baseline perform similarly around $13\%$ worse than the model-free algorithm.
Besides, as reported in Fig.~\ref*{fig:mm_paper_partial_symmetry_rsu}, when the asymmetry is due to \gls{rsu} displacement, we notice that our proposed technique performs much better and can sustain up to $20$\textdegree\ angle perturbation while still outperforming all baselines, as the rewards achieved slowly decrease from $4.3$ to around $3$ Gbps under $30$\textdegree\ deviations, less than $5\%$ worse than the non-equivariant baseline 1.
This is in contrast to the data augmentation method, whose performance matches the classical \gls{marl} after only $5$\textdegree\ perturbation, and is outperformed by $20\%$ under a $30$\textdegree\ by our equivariant method.

\begin{oonecolumn}
\begin{figure}[!t]
    \centering
    \includegraphics[width=.5\linewidth]{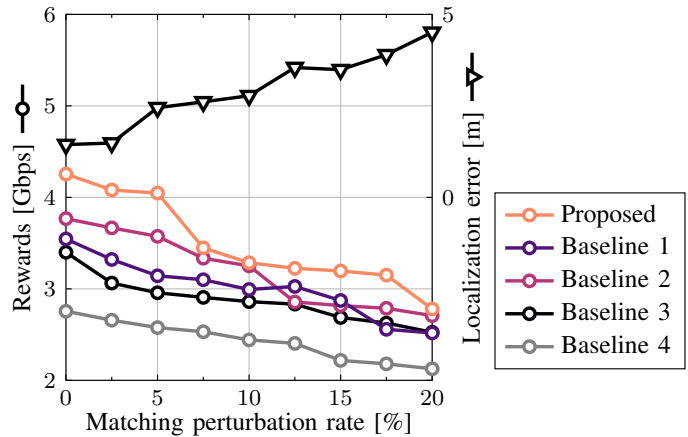}
    \caption{Impact of localization error on the performance of different MARL algorithms ($\left\lvert\beamset_\agentindex\right\rvert=256,\;\; \left\lvert\userset_\agentindex\right\rvert=4$). We inject such errors by randomly flipping some of the rows of the optimized matching matrix. Note the different scaling on the two vertical axes.}
    \label{fig:mm_paper_injected_noise}
\end{figure}
\end{oonecolumn}

In addition, while the optimal \gls{marl} policy for the rate maximization problem is equivariant under rotation symmetries of the users' positions, the ground-truth vehicle positions are unknown, and the policy takes as input the estimated locations given by the multimodal sensing algorithm developed in Section~\ref{section:mm_paper_ssl}.
However, those estimates are prone to localization error which affect the equivariance of the policy that no longer transform exactly under the rotation group action.
To examine the impact of such errors on different \gls{marl} policies, we first select a fixed percentage of the rows of the optimized matching matrix $\matchingmatrix_\agentindex^\star$ (solving the alignment problem~\eqref{eq:mm_paper_alignment_problem}) and permute them at random. 
We refer to the percentage of selected rows as the `matching perturbation rate'.
As such, since this matrix is used to train our localization model, we inject controlled errors to our position estimates, and can therefore compare different \gls{marl} benchmark under such noise, as reported in Fig.~\ref{fig:mm_paper_injected_noise}.
First, we note that the mean localization error gradually increases from $1.4$ to approach $5$m at $20\%$ perturbation rate, which is underscores its robustness to alignment errors due to the self-supervised \gls{csi} loss (first term in eq.~\eqref{eq:mm_paper_cc_loss} remains unaffected in this case).
Further, we observe that our equivariant policy performs the best compared to all other baselines under varying perturbation rates, sustaining its performance of more than $4$ Gbps rewards for noise levels less than $5\%$, achieving $30\%$ and $15\%$ gains compared to the non-equivariant and data augmentations baselines (1 and 2).
At flipping rates of $10\%$, our proposed method loses $30\%$ of its achieved rewards and reaches $3.3$ Gbps, matching the augmentation baseline, while achieving around $15\%$ gains to the classical \gls{marl} and single agent baselines (1 and 3); noting that the latter's performances quickly degrade by $20\%$ even at noise rates of $5\%$, highlighting the sensitivity of model-free methods to observation drifts.
We remark that our method can constantly sustain this performance up to a $20\%$ perturbation rate, after which it reaches rewards less than $3$ Gbps.
We also affirm the importance of communication in our partially observable setting, as baseline 4 where agents that do not communicate perform the worst and lose $30\%$ of performance under a perturbation rate of $20\%$.

\begin{figure}
    \centering
    \includegraphics{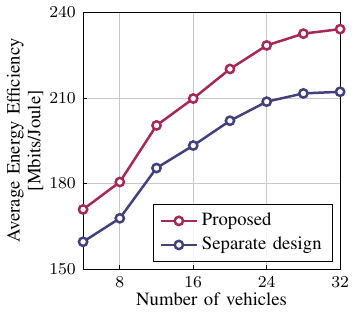}
    \caption{Energy efficiency of our multimodal system design compared to a separate design.}
    \label{fig:mm_paper_energy_efficiency}
\end{figure}

\begin{oonecolumn}
\begin{table}
    \caption{Computing costs for different modalities}
    \centering
    \begin{tabular}{ccc}
    \toprule
    & CSI model & Image model \\
    \midrule
    Number of parameters & 34.2 M & 36.9 M \\
    FLOPs & 18.18 G & 104.7 G \\
    \bottomrule
    \end{tabular}
    \label{tab:mm_paper_params}
\end{table}
\end{oonecolumn}

\subsubsection{Impact of Power and Computing Resources}
In Fig.~\ref{fig:mm_paper_energy_efficiency}, we report the energy efficiency of our \textbf{Proposed} multimodal approach where wireless \gls{csi} is re-used for sensing by aligning its features with those extracted from the images, as compared to a standalone communication and sensing design (\textbf{Separate design}), where both are performed on separate bands.
We measure the energy efficiency as the network's sum rate in bits/s divided by the system's energy consumption in Joule/s or Watts.
We notice that regardless of the number of vehicles in the system, our proposed design is always more energy efficient than the separate design.
In particular, with more than $20$ users in the network, our proposed multimodal framework achieves $15\%$ gains compared to the standalone design that sacrifices the communication bandwidth for sensing.
With $32$ users in the network, our approach realizes an energy efficiency of $235$ Mbits/Joule compared to $212$ Mbits/Joule for the separate design benchmark.
This proves the importance of recycling the wireless modality for sensing by learning to align its features with image features in a self-supervised manner.
Nevertheless, this favorable network spectral and energy efficiency comes at the price of running computationally heavy models, where the number of trainable parameters and floating point operations per second (FLOPs) for each modality's network are shown in Table~\ref{tab:mm_paper_params}.
We notice that both the image and \gls{csi} processing models have around $35$ million parameters, with varying numbers in terms of FLOPs resulting from the different model architectures.
This indicates that significant computing resources are required to achieve the proposed performance.

\begin{ttwocolumn}
\begin{table}
    \caption{Computing costs for different modalities}
    \centering
    \begin{tabular}{ccc}
    \toprule
    & CSI model & Image model \\
    \midrule
    Number of parameters & 34.2 M & 36.9 M \\
    FLOPs & 18.18 G & 104.7 G \\
    \bottomrule
    \end{tabular}
    \label{tab:mm_paper_params}
\end{table}
\end{ttwocolumn}
\section{Conclusion}
\label{section:mm_paper_conclusion}
In this work, we proposed a novel self-supervised learning framework where \gls{mmwave} operated \glspl{rsu} align their wireless \gls{csi} acquired for communication purposes, with visual data from their equipped cameras to form high situational awareness of the environment.
Our proposed framework first localizes the vehicles in images leveraging \gls{ml} based object detection, and forms an analogous latent space to users' locations for \gls{csi} using self-supervised channel clustering.
The two representation spaces are then aligned by optimizing a pairing function that matches image locations with their corresponding \gls{csi} with no external supervision, and the obtained matching is used to fine-tune the \gls{csi} model to a multimodal sensing model.
\Glspl{bs} then train an equivariant \gls{marl} policy relying on the obtained sensing data that exploits the symmetries of the \gls{v2i} environment, i.e., when the locations of the vehicles rotate, the beam directions permute between the \glspl{bs}.
Extensive simulation results corroborate the superiority of the proposed multimodal framework compared to benchmarks, while underscoring its trade-offs.
Our framework assumes fully synchronized \gls{csi} and camera image sampling, and perfect inter-\gls{rsu} message passing.
In practice, timing offsets between modalities as well as packet losses can degrade the performance of our proposed algorithm.
We plan to investigate the impact of these parameters in future work, considering systems that exhibit symmetries beyond rotations, or are not fully symmetric.
Besides, future studies will focus on privacy concerns, where downsampling visual modalities is necessary before processing, and integrating more modalities (such as LiDar) in a similar framework.

\appendix

\subsection{Proof of Lemma~\ref{lemma:mm_paper_localization}}
\label{proof:mm_paper_localization}
By direct inspection of Fig.~\ref{fig:mm_paper_image_localization}, we can write $\userpositionxy_x = d \cos\rbrk{\azimuthangle}$ and $\userpositionxy_y = d \sin\rbrk{\azimuthangle}$.
To compute $d$, notice that given the pixel's elevation angle with respect to the camera, we have $\tan\rbrk{\elevationangle} = \frac{d}{\cameraheight_\cameraindex}$.
To estimate $\elevationangle$, observe that $\frac{\tan\rbrk{\elevationangle - \elevationangle^{\textnormal{Los}}_\cameraindex}}{\tan\rbrk{\elevationangle^{\textnormal{max}}_\cameraindex}} = \frac{\pixelcoordinate_h - \frac{\imageheight_\cameraindex}{2}}{\frac{\imageheight_\cameraindex}{2}} = \frac{2 \pixelcoordinate_h - \imageheight_\cameraindex}{\imageheight_\cameraindex}$, which yields $\elevationangle = \elevationangle^{\textnormal{Los}}_\cameraindex + \tan^{-1} \rbrk{\frac{2 \pixelcoordinate_h - \imageheight_\cameraindex}{\imageheight_\cameraindex} \tan \frac{\elevationangle^{\textnormal{max}}_\cameraindex}{2}}$, and $d = \cameraheight_\cameraindex \tan \rbrk{\elevationangle^{\textnormal{Los}}_\cameraindex + \tan^{-1} \rbrk{\frac{2 \pixelcoordinate_h - \imageheight_\cameraindex}{\imageheight_\cameraindex} \tan \frac{\elevationangle^{\textnormal{max}}_\cameraindex}{2}}}$.
To get the physical locations $\userpositionxy_x$ and $\userpositionxy_y$, we still need to estimate the azimuth angle $\azimuthangle$.
Notice that $\azimuthangle$ satisfies: $\frac{\tan\rbrk{\azimuthangle - \azimuthangle^{\textnormal{Los}}_\cameraindex}}{\tan\rbrk{\frac{\azimuthangle^{\textnormal{max}}_\cameraindex}{2}}} = \frac{\pixelcoordinate_w - \frac{\imagewidth_\cameraindex}{2}}{\frac{\imagewidth_\cameraindex}{2}} = \frac{2 \pixelcoordinate_w - \imagewidth_\cameraindex}{\imagewidth_\cameraindex}$, which yields $\azimuthangle = \azimuthangle^{\textnormal{Los}}_\cameraindex + \tan^{-1} \rbrk{\frac{2 \pixelcoordinate_w - \imagewidth_\cameraindex}{\imagewidth_\cameraindex} \tan \frac{\azimuthangle^{\textnormal{max}}_\cameraindex}{2}}$.
Plugging the obtained expressions of $d$ and $\azimuthangle$ in the equations of $\userpositionxy_x$ and $\userpositionxy_y$ provides the forms shown in eqs.~\eqref{eq:mm_paper_pixel_localizationx} and~\eqref{eq:mm_paper_pixel_localizationy}.

\subsection{Proof of Lemma~\ref{lemma:mm_paper_composition_equivariance}}
\label{proof:mm_paper_composition_equivariance}
$\forall g \in \group, \groupactiontwo_g\sbrk{f_2 \circ f_1 \rbrk{\cdot}} \overset{(a)}{=} f_2 \rbrk{\permutation_g \sbrk{f_1 \rbrk{\cdot}}} \overset{(b)}{=} f_2 \circ f_1 \rbrk{\groupaction_g \sbrk{\cdot}}$, where (a) is due to the equivariance of $f_2$ and (b) is due to the equivariance of $f_1$.

\subsection{Proof of Lemma~\ref{lemma:mm_paper_nonlinearity_equivariance}}
\label{proof:mm_paper_nonlinearity_equivariance}
Assume $f$ is $\rbrk{\groupaction_g, \groupactiontwo_g}$ equivariant, and $\varsigma: \mathbb{R} \to \mathbb{R}$ is a point-wise nonlinear function. $\forall g \in \group, \groupactiontwo_g \sbrk{\varsigma\rbrk{f \rbrk{\cdot}}} = \varsigma \rbrk{\groupactiontwo_g \sbrk{f \rbrk{\cdot}}} = \varsigma \rbrk{ f \rbrk{\groupaction_g\sbrk{\cdot}}}$. Hence $\varsigma \circ f$ is $\rbrk{\groupaction_g, \groupactiontwo_g}$ equivariant.

\subsection{Proof of Lemma~\ref{lemma:mm_paper_linearcombination_equivariance}}
\label{proof:mm_paper_linearcombination_equivariance}
$\forall g \in \group, \permutation_g \sbrk{\sum_{\rbrk{\agentindex^\prime, \agentindex} \in \commgraphedgeset} \messages^\layerindex_{\agentindex^\prime \to \agentindex}} \overset{(a)}{=} \sum_{\rbrk{\agentindex^\prime, \agentindex} \in \commgraphedgeset} \permutation_g \sbrk{\messages^\layerindex_{\agentindex^\prime \to \agentindex}} \overset{(b)}{=} \sum_{\rbrk{\agentindex^\prime, \agentindex} \in \commgraphedgeset} \messages_\agentindex \rbrk{\permutation_g \sbrk{\encoding_\agentindex^{\layerindex}}, \rotationmatrix_g \sbrk{\edgefeatures_{\agentindex^\prime, \agentindex}}}$, where (a) is due to the linearity of $\permutation_g$ and (b) is due to the equivariance of each function (eq.~\eqref{eq:mm_paper_message_equivariance}).

\subsection{Proof of Proposition~\ref{proposition:mm_paper_policy_equivariance}}
\label{proof:mm_paper_policy_equivariance}
Assume that the global state $\state$ transforms by a rotation to end up at $\groupaction_g\sbrk{\state}$ for some $g \in \group$.
At the level of each individual agent $\agentindex$ at location $\agentposition_\agentindex$, the perceived state $\state_\agentindex$ is a rotated version of the state perceived by the agent at location $\rotationmatrix_g \sbrk{\agentposition_\agentindex}$.
At the graph level, the edges of the communication graph become: $\rotationmatrix_g \sbrk{\agentposition_\agentindex} - \rotationmatrix_g \sbrk{\agentposition_{\agentindex^\prime}} = \rotationmatrix_g \sbrk{\agentposition_\agentindex - \agentposition_{\agentindex^\prime}} = \rotationmatrix_g \sbrk{\edgefeatures_{\agentindex, \agentindex^\prime}}$, due to the linearity of $\rotationmatrix_g$.
This means that edge features transform by $\rotationmatrix_g$.

By design, whenever each local state $\state_\agentindex$ transforms by $\groupaction_g$, meaning the location of each individual terminal is rotated by $\rotationmatrix_g$, each agent's initial encoding permutes as $\permutation_g\sbrk{\encoding_\agentindex^0}$.
We now study the behaviour of an arbitrary message passing layer $\layerindex = 1, \dots, \numlayers$,
\begin{oonecolumn}
\begin{align}
    \permutation_g \sbrk{\update_\agentindex \rbrk{\encoding_\agentindex^\layerindex, \aggregatedmessages_\agentindex^\layerindex}} 
    &= \permutation_g \sbrk{\update_\agentindex \rbrk{\encoding_\agentindex^\layerindex, \textstyle\sum_{\rbrk{\agentindex^\prime, \agentindex} \in \commgraphedgeset} \messages^\layerindex_{\agentindex^\prime \to \agentindex}}} \\ 
    &= \permutation_g \sbrk{\update_\agentindex \rbrk{\encoding_\agentindex^\layerindex, \textstyle\sum_{\rbrk{\agentindex^\prime, \agentindex} \in \commgraphedgeset} \messages_\agentindex \rbrk{\encoding_\agentindex^{\layerindex}, \edgefeatures_{\agentindex^\prime, \agentindex}}}} \\
    \label{eq:mm_paper_update_equi}
    &= \update_\agentindex \rbrk{\permutation_g \sbrk{\encoding_\agentindex^\layerindex}, \permutation_g \sbrk{\textstyle\sum_{\rbrk{\agentindex^\prime, \agentindex} \in \commgraphedgeset} \messages_\agentindex \rbrk{\encoding_\agentindex^{\layerindex}, \edgefeatures_{\agentindex^\prime, \agentindex}}}} \\
    \label{eq:mm_paper_message_equi}
    &= \update_\agentindex \rbrk{\permutation_g \sbrk{\encoding_\agentindex^\layerindex}, \textstyle\sum_{\rbrk{\agentindex^\prime, \agentindex,} \in \commgraphedgeset} \messages_\agentindex \rbrk{\permutation_g \sbrk{\encoding_\agentindex^{\layerindex}}, \rotationmatrix_g \sbrk{\edgefeatures_{\agentindex^\prime, \agentindex}}}}
\end{align}
\end{oonecolumn}%
\begin{ttwocolumn}
\begin{align}
    \permutation_g \sbrk{\update_\agentindex \rbrk{\encoding_\agentindex^\layerindex, \aggregatedmessages_\agentindex^\layerindex}} 
    &= \permutation_g \sbrk{\update_\agentindex \rbrk{\encoding_\agentindex^\layerindex, \textstyle\sum_{\rbrk{\agentindex^\prime, \agentindex} \in \commgraphedgeset} \messages^\layerindex_{\agentindex^\prime \to \agentindex}}} \\ 
    &\hspace{-6em}= \permutation_g \sbrk{\update_\agentindex \rbrk{\encoding_\agentindex^\layerindex, \textstyle\sum_{\rbrk{\agentindex^\prime, \agentindex} \in \commgraphedgeset} \messages_\agentindex \rbrk{\encoding_\agentindex^{\layerindex}, \edgefeatures_{\agentindex^\prime, \agentindex}}}} \\
    \label{eq:mm_paper_update_equi}
    &\hspace{-6em}= \update_\agentindex \rbrk{\permutation_g \sbrk{\encoding_\agentindex^\layerindex}, \permutation_g \sbrk{\textstyle\sum_{\rbrk{\agentindex^\prime, \agentindex} \in \commgraphedgeset} \messages_\agentindex \rbrk{\encoding_\agentindex^{\layerindex}, \edgefeatures_{\agentindex^\prime, \agentindex}}}} \\
    \label{eq:mm_paper_message_equi}
    &\hspace{-6em}= \update_\agentindex \rbrk{\permutation_g \sbrk{\encoding_\agentindex^\layerindex}, \textstyle\sum_{\rbrk{\agentindex^\prime, \agentindex} \in \commgraphedgeset} \messages_\agentindex \rbrk{\permutation_g \sbrk{\encoding_\agentindex^{\layerindex}}, \rotationmatrix_g \sbrk{\edgefeatures_{\agentindex^\prime, \agentindex}}}}
\end{align}
\end{ttwocolumn}%
where \eqref{eq:mm_paper_update_equi} and \eqref{eq:mm_paper_message_equi} are valid by construction of the update and message functions.
Notice that the output of every update layer is permuted only when, for all agents, both the local features permute by $\permutation_g$ (which are initially due to rotations of the initial local states by $\rotationmatrix_g$) and edges transform by $\rotationmatrix_g$, which is equivalent to a global state transformation $\groupaction_g\sbrk{\state}$.
Since all layers share the same group transformations, stacking the layers successively preserves equivariance (see Lemma~\ref{lemma:mm_paper_composition_equivariance}).
By construction, the policy head satisfies: 
\begin{equation}
\policy_\agentindex\rbrk{\groupaction_g \sbrk{ \encoding_\agentindex^\numlayers}} = \localpolicy_\agentindex\rbrk{\permutation_g \sbrk{ \encoding_\agentindex^\numlayers}} = \groupactiontwo_g \sbrk{\localpolicy_\agentindex\rbrk{\encoding_\agentindex^\numlayers}} = \groupactiontwo_g \sbrk{\policy_\agentindex\rbrk{\state_\agentindex}},
\end{equation}
where the permutation $\groupactiontwo_g$ is designed to guarantee global policy equivariance $\groupactiontwo_g^{\state} \sbrk{\policy\rbrk{\state}} = \policy\rbrk{\groupaction_g \sbrk{\state}}$.
Similarly, the value head satisfies: 
\begin{equation}
\valuefunction_\agentindex\rbrk{\groupaction_g \sbrk{ \encoding_\agentindex^\numlayers}} = \localvalue_\agentindex\rbrk{\permutation_g \sbrk{ \encoding_\agentindex^\numlayers}} = \localvalue_\agentindex \rbrk{\encoding_\agentindex^\numlayers} = \valuefunction_\agentindex \rbrk{\state_\agentindex},
\end{equation}
which concludes the proof.

\subsection{Proof of Proposition~\ref{proposition:mm_paper_partial_symmetry}}
\label{proof:mm_paper_partial_symmetry}
We start by recalling that the action-value function of an \gls{mmdp} under a joint policy $\policy$ is:
\begin{equation}
    \actionvaluefunction^\policy \rbrk{\state, \beam} = \reward\rbrk{\state, \beam} + \discountfactor \sum_{\state^\prime \in \statespace} \transitionfunction\rbrk{\state, \beam, \state^\prime} \valuefunction^\policy \rbrk{\state^\prime}
\end{equation}
Under an optimal policy $\policy^\star$ solving the \gls{mmdp}, the optimal action value function satisfies:
\begin{equation}
    \actionvaluefunction^\star \rbrk{\state, \beam} = \sum_{\state^\prime \in \statespace} \transitionfunction\rbrk{\state, \beam, \state^\prime} \sbrk{\reward\rbrk{\state, \beam} + \discountfactor \max_{\beam^\prime \in \beamset} \actionvaluefunction^\star \rbrk{\state^\prime, \beam^\prime}}.
\end{equation}
Following~\cite{ravindran2001symmetries}, the $m$-step optimal discounted action value function is defined recursively for all state action pairs $\rbrk{\state, \beam}$ and non-negative integers $m$ as:
\begin{equation}\label{eq:mm_paper_actionvaluefunction_m}
    \actionvaluefunction_m \rbrk{\state, \beam} = \reward\rbrk{\state, \beam} + \discountfactor \sum_{\state^\prime \in \statespace} \transitionfunction\rbrk{\state, \beam, \state^\prime} \max_{\beam^\prime \in \beamset} \actionvaluefunction_{m-1} \rbrk{\state^\prime, \beam^\prime},
\end{equation}
Herein, we set $\actionvaluefunction_{-1}\rbrk{\state,\beam}=0$, $\valuefunction_{m}\rbrk{\state} = \max_{\beam \in \beamset} \actionvaluefunction_{m} \rbrk{\state, \beam}$, and rewrite \eqref{eq:mm_paper_actionvaluefunction_m} as:
\begin{equation}\label{eq:mm_paper_Q_def}
    \actionvaluefunction_m \rbrk{\state, \beam} = \reward\rbrk{\state, \beam} + \discountfactor \sum_{\state^\prime \in \statespace} \transitionfunction\rbrk{\state, \beam, \state^\prime} \valuefunction_{m-1}\rbrk{\state^\prime}.
\end{equation}
The optimal action value function is obtained as: $\actionvaluefunction^\star\rbrk{\state,\beam} = \lim_{m\to\infty}\actionvaluefunction_m\rbrk{\state,\beam}$.

We prove the proposition by induction on $m$, building on the arguments of~\cite{yu2024leveraging}.

For the case of $m=0$, we have $\actionvaluefunction_0 \rbrk{\state, \beam} = \reward\rbrk{\state, \beam}$, hence $\forall g \in \group, \state \in \statespace, \beam \in \beamset$:
\begin{align}
    &\left\lvert \actionvaluefunction_0 \rbrk{\state, \beam} - \actionvaluefunction_0 \rbrk{\groupaction_g\sbrk{\state}, \groupactiontwo_g^\state\sbrk{\beam}} \right\rvert \\
    = &\left\lvert \reward \rbrk{\state, \beam} - \reward \rbrk{\groupaction_g\sbrk{\state}, \groupactiontwo_g^\state\sbrk{\beam}} \right\rvert \\
    \leq &\;\partialsymmetryR
\end{align}
where the inequality is due to the assumption in~\eqref{eq:mm_paper_mmdp_partial_symmetric_reward}.

For the case of $m=1$, we have $\forall g \in \group, \state \in \statespace, \beam \in \beamset$%
\begin{ttwocolumn}%
    , the upper bound derived on the top of the next page (eqs.~\eqref{eq:mm_paper_Q1_1},~\eqref{eq:mm_paper_Q1_2},~\eqref{eq:mm_paper_Q1_3}), where~\eqref{eq:mm_paper_Q1_2} is a direct application of the definition in~\eqref{eq:mm_paper_Q_def} to~\eqref{eq:mm_paper_Q1_1}, and~\eqref{eq:mm_paper_Q1_3} is due to the subadditivity of the absolute value.
\begin{table*}[t]
\revise{
\begin{align}
    \label{eq:mm_paper_Q1_1}
    &\left\lvert \actionvaluefunction_1 \rbrk{\state, \beam} - \actionvaluefunction_1 \rbrk{\groupaction_g\sbrk{\state}, \groupactiontwo_g^\state\sbrk{\beam}} \right\rvert \\
    \label{eq:mm_paper_Q1_2}
    = &\left\lvert \reward\rbrk{\state, \beam} + \discountfactor \sum_{\state^\prime \in \statespace} \transitionfunction \rbrk{\state, \beam, \state^\prime} \valuefunction_0 \rbrk{\state^\prime} - \reward\rbrk{\groupaction_g\sbrk{\state}, \groupactiontwo_g^\state\sbrk{\beam}} - \discountfactor \sum_{\state^\prime \in \statespace} \transitionfunction \rbrk{\groupaction_g\sbrk{\state}, \groupactiontwo_g^\state\sbrk{\beam}, \groupaction_g\sbrk{\state^\prime}} \valuefunction_0 \rbrk{\groupaction_g\sbrk{\state^\prime}} \right\rvert \\
    \label{eq:mm_paper_Q1_3}
    \leq & \underbracket[0.100ex][0.500ex]{\left\lvert \reward\rbrk{\state, \beam} - \reward\rbrk{\groupaction_g\sbrk{\state}, \groupactiontwo_g^\state\sbrk{\beam}} \right\rvert}_{\leq \partialsymmetryR} + \discountfactor \underbracket[0.100ex][0.500ex]{\left\lvert \mathbb{E}_{\state^\prime \sim \transitionfunction \rbrk{\state^\prime \mid \beam, \state}} \sbrk{\valuefunction_0 \rbrk{\state^\prime}} - \mathbb{E}_{\state^\prime \sim \transitionfunction\rbrk{\groupaction_g\sbrk{\state^\prime} \mid  \groupactiontwo_g^\state\sbrk{\beam}, \groupaction_g\sbrk{\state}}} \sbrk{\valuefunction_0 \rbrk{\groupaction_g\sbrk{\state^\prime}}} \right\rvert}_{\mathbb{X}_0}
\end{align}
\hrulefill
\begin{align}
    \label{eq:mm_paper_Xterm_1}
    \mathbb{X}_0 &\leq \left\lvert \mathbb{E}_{\state^\prime \sim \transitionfunction \rbrk{\state^\prime \mid \beam, \state}} \sbrk{\valuefunction_0 \rbrk{\groupaction_g\sbrk{\state^\prime}} + \partialsymmetryR} - \mathbb{E}_{\state^\prime \sim \transitionfunction\rbrk{\groupaction_g\sbrk{\state^\prime} \mid  \groupactiontwo_g^\state\sbrk{\beam}, \groupaction_g\sbrk{\state}}} \sbrk{\valuefunction_0 \rbrk{\groupaction_g\sbrk{\state^\prime}}} \right\rvert \\
    \label{eq:mm_paper_Xterm_2}
    &\leq \partialsymmetryR + \underbracket[0.100ex][0.500ex]{\left\lvert \mathbb{E}_{\state^\prime \sim \transitionfunction \rbrk{\state^\prime \mid \beam, \state}} \sbrk{\valuefunction_0 \rbrk{\groupaction_g\sbrk{\state^\prime}}} - \mathbb{E}_{\state^\prime \sim \transitionfunction\rbrk{\groupaction_g\sbrk{\state^\prime} \mid  \groupactiontwo_g^\state\sbrk{\beam}, \groupaction_g\sbrk{\state}}} \sbrk{\valuefunction_0 \rbrk{\groupaction_g\sbrk{\state^\prime}}} \right\rvert}_{\leq \partialsymmetryT}
\end{align}
\hrulefill
\begin{align}
    \label{eq:mm_paper_Xterm_3}
    \mathbb{X}_0 &\leq \left\lvert \mathbb{E}_{\state^\prime \sim \transitionfunction \rbrk{\state^\prime \mid \beam, \state}} \sbrk{\valuefunction_0 \rbrk{\state^\prime}} - \mathbb{E}_{\state^\prime \sim \transitionfunction\rbrk{\groupaction_g\sbrk{\state^\prime} \mid  \groupactiontwo_g^\state\sbrk{\beam}, \groupaction_g\sbrk{\state}}} \sbrk{\valuefunction_0 \rbrk{\state^\prime} + \partialsymmetryR} \right\rvert \\
    \label{eq:mm_paper_Xterm_4}
    &\leq \partialsymmetryR + \underbracket[0.100ex][0.500ex]{\left\lvert \mathbb{E}_{\state^\prime \sim \transitionfunction \rbrk{\state^\prime \mid \beam, \state}} \sbrk{ \valuefunction_0 \rbrk{\state^\prime}} - \mathbb{E}_{\state^\prime \sim \transitionfunction \rbrk{\groupaction_g\sbrk{\state^\prime} \mid \groupactiontwo_g^\state \sbrk{\beam}, \groupaction_g\sbrk{\state}}} \valuefunction_0 \rbrk{\state^\prime} \right\rvert}_{\leq \partialsymmetryT}
\end{align}
\hrulefill
}
\end{table*}
We now examine the term $\mathbb{X}_0$ in~\eqref{eq:mm_paper_Q1_3}. If the argument of the absolute value in $\mathbb{X}_0$ is non-negative, then we have the inequalities in~\eqref{eq:mm_paper_Xterm_1} and~\eqref{eq:mm_paper_Xterm_2}, where in~\eqref{eq:mm_paper_Xterm_1} we upper bound the one-step value difference as $\left\lvert \valuefunction_0 \rbrk{\state} - \valuefunction_0 \rbrk{\groupaction_g\sbrk{\state}} \right\rvert \leq \partialsymmetryR, \;\forall g \in \group, \state \in \statespace$ which holds due to~\eqref{eq:mm_paper_mmdp_partial_symmetric_reward}, and in~\eqref{eq:mm_paper_Xterm_2} we apply the subadditivity of $\lvert\cdot\rvert$. Likewise, if the argument of the absolute value in $\mathbb{X}_0$ is negative, then we have the inequalities in~\eqref{eq:mm_paper_Xterm_3} and~\eqref{eq:mm_paper_Xterm_4}, obtained using similar steps. It is worth mentioning that in~\eqref{eq:mm_paper_Xterm_2} and~\eqref{eq:mm_paper_Xterm_4}, we bound the second term using~\eqref{eq:mm_paper_mmdp_partial_symmetric_transition}, since the reward $\reward$ is bounded, hence $\actionvaluefunction_m$ and $\valuefunction_m$ are elements of $\mathcal{H}$.
\end{ttwocolumn}%
\begin{oonecolumn}%
    :
\begin{align}
    \label{eq:mm_paper_Q1_1}
    &\left\lvert \actionvaluefunction_1 \rbrk{\state, \beam} - \actionvaluefunction_1 \rbrk{\groupaction_g\sbrk{\state}, \groupactiontwo_g^\state\sbrk{\beam}} \right\rvert \\
    \label{eq:mm_paper_Q1_2}
    = &\left\lvert \reward\rbrk{\state, \beam} + \discountfactor \sum_{\state^\prime \in \statespace} \transitionfunction \rbrk{\state, \beam, \state^\prime} \valuefunction_0 \rbrk{\state^\prime} - \reward\rbrk{\groupaction_g\sbrk{\state}, \groupactiontwo_g^\state\sbrk{\beam}} - \discountfactor \sum_{\state^\prime \in \statespace} \transitionfunction \rbrk{\groupaction_g\sbrk{\state}, \groupactiontwo_g^\state\sbrk{\beam}, \groupaction_g\sbrk{\state^\prime}} \valuefunction_0 \rbrk{\groupaction_g\sbrk{\state^\prime}} \right\rvert \\
    \label{eq:mm_paper_Q1_3}
    \leq & \underbracket[0.100ex][0.500ex]{\left\lvert \reward\rbrk{\state, \beam} - \reward\rbrk{\groupaction_g\sbrk{\state}, \groupactiontwo_g^\state\sbrk{\beam}} \right\rvert}_{\leq \partialsymmetryR} + \discountfactor \underbracket[0.100ex][0.500ex]{\left\lvert \mathbb{E}_{\state^\prime \sim \transitionfunction \rbrk{\state^\prime \mid \beam, \state}} \sbrk{\valuefunction_0 \rbrk{\state^\prime}} - \mathbb{E}_{\state^\prime \sim \transitionfunction\rbrk{\groupaction_g\sbrk{\state^\prime} \mid  \groupactiontwo_g^\state\sbrk{\beam}, \groupaction_g\sbrk{\state}}} \sbrk{\valuefunction_0 \rbrk{\groupaction_g\sbrk{\state^\prime}}} \right\rvert}_{\mathbb{X}_0}
\end{align}
where~\eqref{eq:mm_paper_Q1_2} is a direct application of the definition in~\eqref{eq:mm_paper_Q_def} to~\eqref{eq:mm_paper_Q1_1}, and~\eqref{eq:mm_paper_Q1_3} is due to the subadditivity of the absolute value.
We now examine the term $\mathbb{X}_0$ in~\eqref{eq:mm_paper_Q1_3}. If the argument of the absolute value in $\mathbb{X}_0$ is non-negative, then we have:
\begin{align}
    \label{eq:mm_paper_Xterm_1}
    \mathbb{X}_0 &\leq \left\lvert \mathbb{E}_{\state^\prime \sim \transitionfunction \rbrk{\state^\prime \mid \beam, \state}} \sbrk{\valuefunction_0 \rbrk{\groupaction_g\sbrk{\state^\prime}} + \partialsymmetryR} - \mathbb{E}_{\state^\prime \sim \transitionfunction\rbrk{\groupaction_g\sbrk{\state^\prime} \mid  \groupactiontwo_g^\state\sbrk{\beam}, \groupaction_g\sbrk{\state}}} \sbrk{\valuefunction_0 \rbrk{\groupaction_g\sbrk{\state^\prime}}} \right\rvert \\
    \label{eq:mm_paper_Xterm_2}
    &\leq \partialsymmetryR + \underbracket[0.100ex][0.500ex]{\left\lvert \mathbb{E}_{\state^\prime \sim \transitionfunction \rbrk{\state^\prime \mid \beam, \state}} \sbrk{\valuefunction_0 \rbrk{\groupaction_g\sbrk{\state^\prime}}} - \mathbb{E}_{\state^\prime \sim \transitionfunction\rbrk{\groupaction_g\sbrk{\state^\prime} \mid  \groupactiontwo_g^\state\sbrk{\beam}, \groupaction_g\sbrk{\state}}} \sbrk{\valuefunction_0 \rbrk{\groupaction_g\sbrk{\state^\prime}}} \right\rvert}_{\leq \partialsymmetryT}
\end{align}
where in~\eqref{eq:mm_paper_Xterm_1} we upper bound the one-step value difference as $\left\lvert \valuefunction_0 \rbrk{\state} - \valuefunction_0 \rbrk{\groupaction_g\sbrk{\state}} \right\rvert \leq \partialsymmetryR, \;\forall g \in \group, \state \in \statespace$ which holds due to~\eqref{eq:mm_paper_mmdp_partial_symmetric_reward}, and in~\eqref{eq:mm_paper_Xterm_2} we apply the subadditivity of $\lvert\cdot\rvert$. Likewise, if the argument of the absolute value in $\mathbb{X}_0$ is negative, then we have:
\begin{align}
    \label{eq:mm_paper_Xterm_3}
    \mathbb{X}_0 &\leq \left\lvert \mathbb{E}_{\state^\prime \sim \transitionfunction \rbrk{\state^\prime \mid \beam, \state}} \sbrk{\valuefunction_0 \rbrk{\state^\prime}} - \mathbb{E}_{\state^\prime \sim \transitionfunction\rbrk{\groupaction_g\sbrk{\state^\prime} \mid  \groupactiontwo_g^\state\sbrk{\beam}, \groupaction_g\sbrk{\state}}} \sbrk{\valuefunction_0 \rbrk{\state^\prime} + \partialsymmetryR} \right\rvert \\
    \label{eq:mm_paper_Xterm_4}
    &\leq \partialsymmetryR + \underbracket[0.100ex][0.500ex]{\left\lvert \mathbb{E}_{\state^\prime \sim \transitionfunction \rbrk{\state^\prime \mid \beam, \state}} \sbrk{ \valuefunction_0 \rbrk{\state^\prime}} - \mathbb{E}_{\state^\prime \sim \transitionfunction \rbrk{\groupaction_g\sbrk{\state^\prime} \mid \groupactiontwo_g^\state \sbrk{\beam}, \groupaction_g\sbrk{\state}}} \valuefunction_0 \rbrk{\state^\prime} \right\rvert}_{\leq \partialsymmetryT}
\end{align}
obtained using similar steps. It is worth mentioning that in~\eqref{eq:mm_paper_Xterm_2} and~\eqref{eq:mm_paper_Xterm_4}, we bound the second term using~\eqref{eq:mm_paper_mmdp_partial_symmetric_transition}, since the reward $\reward$ is bounded, hence $\actionvaluefunction_m$ and $\valuefunction_m$ are elements of $\mathcal{H}$.
\end{oonecolumn}%
Therefore, \eqref{eq:mm_paper_Q1_3} yields:
\begin{align}
    \nonumber&\left\lvert \actionvaluefunction_1 \rbrk{\state, \beam} - \actionvaluefunction_1 \rbrk{\groupaction_g\sbrk{\state}, \groupactiontwo_g^\state\sbrk{\beam}} \right\rvert \\
    \leq&\; \partialsymmetryR + \discountfactor \mathbb{X}_0 \\
    \leq&\;\partialsymmetryR + \discountfactor \rbrk{\partialsymmetryR + \partialsymmetryT}
    = \rbrk{1 + \discountfactor} \partialsymmetryR + \discountfactor \partialsymmetryT
\end{align}

Now we assume that:
\begin{equation}\label{eq:mm_paper_induction_step}
    \left\lvert \actionvaluefunction_n\rbrk{\state, \beam} - \actionvaluefunction_n \rbrk{\groupaction_g\sbrk{\state}, \groupactiontwo_g^\state\sbrk{\beam}} \right\rvert \leq \sum_{l=0}^n \discountfactor^l \partialsymmetryR + \sum_{l=1}^n \discountfactor^l \partialsymmetryT
\end{equation}
holds $\forall\; n \leq m, g \in \group, \state \in \statespace, \beam \in \beamset$.
\begin{ttwocolumn}%
\begin{table*}[t]
\revise{
\begin{align}
    \label{eq:mm_paper_Qm_1}
    &\left\lvert \actionvaluefunction_{m+1} \rbrk{\state, \beam} - \actionvaluefunction_{m+1} \rbrk{\groupaction_g\sbrk{\state}, \groupactiontwo_g^\state\sbrk{\beam}} \right\rvert \\
    \label{eq:mm_paper_Qm_2}
    = &\left\lvert \reward\rbrk{\state, \beam} + \discountfactor \sum_{\state^\prime \in \statespace} \transitionfunction \rbrk{\state, \beam, \state^\prime} \valuefunction_m \rbrk{\state^\prime} - \reward\rbrk{\groupaction_g\sbrk{\state}, \groupactiontwo_g^\state\sbrk{\beam}} - \discountfactor \sum_{\state^\prime \in \statespace} \transitionfunction \rbrk{\groupaction_g\sbrk{\state}, \groupactiontwo_g^\state\sbrk{\beam}, \groupaction_g\sbrk{\state^\prime}} \valuefunction_m \rbrk{\groupaction_g\sbrk{\state^\prime}} \right\rvert \\
    \label{eq:mm_paper_Qm_3}
    \leq & \underbracket[0.100ex][0.500ex]{\left\lvert \reward\rbrk{\state, \beam} - \reward\rbrk{\groupaction_g\sbrk{\state}, \groupactiontwo_g^\state\sbrk{\beam}} \right\rvert}_{\leq \partialsymmetryR} + \discountfactor \underbracket[0.100ex][0.500ex]{\left\lvert \mathbb{E}_{\state^\prime \sim \transitionfunction \rbrk{\state^\prime \mid \beam, \state}} \sbrk{\valuefunction_m \rbrk{\state^\prime}} - \mathbb{E}_{\state^\prime \sim \transitionfunction\rbrk{\groupaction_g\sbrk{\state^\prime} \mid  \groupactiontwo_g^\state\sbrk{\beam}, \groupaction_g\sbrk{\state}}} \sbrk{\valuefunction_m \rbrk{\groupaction_g\sbrk{\state^\prime}}} \right\rvert}_{\mathbb{X}_m}
\end{align}
\hrulefill
}
\end{table*}
We study the $\rbrk{m+1}$-step optimal discounted action value function, as shown in~\eqref{eq:mm_paper_Qm_1},~\eqref{eq:mm_paper_Qm_2} and~\eqref{eq:mm_paper_Qm_3} on the top of the next page.
\end{ttwocolumn}%
\begin{oonecolumn}%
We study the $\rbrk{m+1}$-step optimal discounted action value function:
\begin{align}
    \label{eq:mm_paper_Qm_1}
    &\left\lvert \actionvaluefunction_{m+1} \rbrk{\state, \beam} - \actionvaluefunction_{m+1} \rbrk{\groupaction_g\sbrk{\state}, \groupactiontwo_g^\state\sbrk{\beam}} \right\rvert \\
    \label{eq:mm_paper_Qm_2}
    = &\left\lvert \reward\rbrk{\state, \beam} + \discountfactor \sum_{\state^\prime \in \statespace} \transitionfunction \rbrk{\state, \beam, \state^\prime} \valuefunction_m \rbrk{\state^\prime} - \reward\rbrk{\groupaction_g\sbrk{\state}, \groupactiontwo_g^\state\sbrk{\beam}} - \discountfactor \sum_{\state^\prime \in \statespace} \transitionfunction \rbrk{\groupaction_g\sbrk{\state}, \groupactiontwo_g^\state\sbrk{\beam}, \groupaction_g\sbrk{\state^\prime}} \valuefunction_m \rbrk{\groupaction_g\sbrk{\state^\prime}} \right\rvert \\
    \label{eq:mm_paper_Qm_3}
    \leq & \underbracket[0.100ex][0.500ex]{\left\lvert \reward\rbrk{\state, \beam} - \reward\rbrk{\groupaction_g\sbrk{\state}, \groupactiontwo_g^\state\sbrk{\beam}} \right\rvert}_{\leq \partialsymmetryR} + \discountfactor \underbracket[0.100ex][0.500ex]{\left\lvert \mathbb{E}_{\state^\prime \sim \transitionfunction \rbrk{\state^\prime \mid \beam, \state}} \sbrk{\valuefunction_m \rbrk{\state^\prime}} - \mathbb{E}_{\state^\prime \sim \transitionfunction\rbrk{\groupaction_g\sbrk{\state^\prime} \mid  \groupactiontwo_g^\state\sbrk{\beam}, \groupaction_g\sbrk{\state}}} \sbrk{\valuefunction_m \rbrk{\groupaction_g\sbrk{\state^\prime}}} \right\rvert}_{\mathbb{X}_m}.
\end{align}
\end{oonecolumn}%
Similarly to $\mathbb{X}_0$, we upper bound $\mathbb{X}_m$ by utilizing~\eqref{eq:mm_paper_induction_step} as follows:
\begin{equation}\label{eq:mm_paper_Xm}
    \mathbb{X}_m \leq \sum_{l=0}^m \discountfactor^l \partialsymmetryR + \sum_{l=0}^m \discountfactor^l \partialsymmetryT.
\end{equation}
Thus, plugging~\eqref{eq:mm_paper_Xm} in~\eqref{eq:mm_paper_Qm_3} yields:
\begin{align}
    &\left\lvert \actionvaluefunction_{m+1} \rbrk{\state, \beam} - \actionvaluefunction_{m+1} \rbrk{\groupaction_g\sbrk{\state}, \groupactiontwo_g^\state\sbrk{\beam}} \right\rvert \\
    \leq\; &\partialsymmetryR + \discountfactor \rbrk{\sum_{l=0}^m \discountfactor^l \partialsymmetryR + \sum_{l=0}^m \discountfactor^l \partialsymmetryT} \\
    \label{eq:mm_paper_final_bound_2}
    =& \sum_{l=0}^{m+1} \discountfactor^l \partialsymmetryR + \sum_{l=1}^{m+1} \discountfactor^l \partialsymmetryT
\end{align}
As such, the induction step is validated, and therefore~\eqref{eq:mm_paper_induction_step} holds for all non-negative integers. We can now bound the performance error using the relation between $\actionvaluefunction_m$ and $\actionvaluefunction^\star$ as follows, $\forall\; g \in \group, \state \in \statespace, \beam \in \beamset$:
\begin{align}
    \nonumber&\left\lvert \actionvaluefunction^\star \rbrk{\state, \beam} - \actionvaluefunction^\star \rbrk{\groupaction_g\sbrk{\state}, \groupactiontwo_g^\state\sbrk{\beam}} \right\rvert \\
    =& \left\lvert \lim_{m\to\infty} \actionvaluefunction_m \rbrk{\state, \beam} - \lim_{m\to\infty} \actionvaluefunction_m \rbrk{\groupaction_g\sbrk{\state}, \groupactiontwo_g^\state\sbrk{\beam}} \right\rvert \\
    \label{eq:mm_paper_cont_abs}
    =& \lim_{m\to\infty} \left\lvert \actionvaluefunction_m \rbrk{\state, \beam} - \actionvaluefunction_m \rbrk{\groupaction_g\sbrk{\state}, \groupactiontwo_g^\state\sbrk{\beam}} \right\rvert \\
    \label{eq:mm_paper_final_bound_1}
    \leq& \lim_{m\to\infty} \sbrk{\sum_{l=0}^{m} \discountfactor^l \partialsymmetryR + \sum_{l=1}^{m} \discountfactor^l \partialsymmetryT} \\
    \label{eq:mm_paper_final_bound_3}
    =&\rbrk{\partialsymmetryR + \discountfactor \partialsymmetryT}\rbrk{1-\discountfactor}^{-1}
\end{align}
where~\eqref{eq:mm_paper_cont_abs} follows from the continuity of the absolute value function, and in~\eqref{eq:mm_paper_final_bound_1} we use the bound obtained in~\eqref{eq:mm_paper_final_bound_2}.

\subsection{Details on Problem (\ref{eq:mm_paper_alignment_problem})}
\label{proof:mm_paper_details_matching_problem}
We propose to solve problem~\eqref{eq:mm_paper_alignment_problem} using a primal-dual method.
First, we re-write our optimization problem as follows:
\begin{align}\label{eq:mm_paper_alignment_problem_rewriten}
    &\underset{\matchingscalar_\agentindex, \; \matchingmatrix_\agentindex}{\text{minimize}}
    &&\left\lVert \distancematrix_\agentindex^{\text{im}} - \matchingscalar_\agentindex \, \matchingmatrix_\agentindex \, \distancematrix_\agentindex^{\text{csi}} \, \matchingmatrix_\agentindex^\trans \right\rVert_F^2, \\
    \nonumber&\text{subject to} &&\matchingscalar_\agentindex \geq 0 \\
    \nonumber&&& 0 \leq \sbrk{\matchingmatrix_{\agentindex}}_{i,j} \leq 1 \\
    \nonumber&&& \matchingmatrix_\agentindex^\trans \mathbf{1}_{\numsamples_\agentindex^{\text{im}}} \leq \mathbf{1}_{\numsamples_\agentindex^{\text{csi}}} \\
    \nonumber&&& \matchingmatrix_\agentindex \mathbf{1}_{\numsamples_\agentindex^{\text{csi}}} = \mathbf{1}_{\numsamples_\agentindex^{\text{im}}}
\end{align}
The Lagrangian for problem~\eqref{eq:mm_paper_alignment_problem_rewriten} is expressed as:
\begin{align}
    \begin{split}
    \label{eq:mm_paper_alignment_problem_lagrangian}
    &\mathcal{L}_\agentindex = \left\lVert \distancematrix_\agentindex^{\text{im}} - \matchingscalar_\agentindex \, \matchingmatrix_\agentindex \, \distancematrix_\agentindex^{\text{csi}} \, \matchingmatrix_\agentindex^\trans \right\rVert_F^2 + \mathbf{u}_\agentindex^\trans \rbrk{\matchingmatrix_\agentindex \mathbf{1}_{\numsamples_\agentindex^{\text{csi}}} - \mathbf{1}_{\numsamples_\agentindex^{\text{im}}}} + \mathbf{q}_\agentindex^\trans \rbrk{\matchingmatrix_\agentindex^\trans \mathbf{1}_{\numsamples_\agentindex^{\text{im}}} - \mathbf{1}_{\numsamples_\agentindex^{\text{csi}}} + \mathbf{a}_\agentindex}
    \end{split}
\end{align}
where $\mathbf{u}_\agentindex$ and $\mathbf{q}_\agentindex$ are Lagrange multipliers, and $\mathbf{a}_\agentindex$ is a slackness variable that transforms the inequality constraint to an equality.
The problem is solved by iteratively updating the optimization variables as shown in Algorithm~\ref{alg:mm_paper_primal_dual}.

\begin{algorithm}[t]
\DontPrintSemicolon
\caption{Primal-dual method for problem~\eqref{eq:mm_paper_alignment_problem}}\label{alg:mm_paper_primal_dual}
\hspace{-.7em}\textbf{Initialize} $\matchingmatrix_\agentindex, \matchingscalar_\agentindex, \mathbf{u}_\agentindex, \mathbf{q}_\agentindex, \mathbf{a}_\agentindex$, and learning rate $\varsigma$\;
\Repeat{\textnormal{convergence}}{
$\matchingmatrix_\agentindex \gets \matchingmatrix_\agentindex - \varsigma\, \nabla_{\matchingmatrix_\agentindex} \mathcal{L}_\agentindex$ (eq.~\eqref{eq:mm_paper_matchinf_deriv1})\;
$\mathbf{a}_\agentindex \gets \mathbf{a}_\agentindex - \varsigma\, \mathbf{q_a}$\;
$\mathbf{u}_\agentindex \gets \mathbf{u}_\agentindex + \varsigma\, \rbrk{\matchingmatrix_\agentindex \mathbf{1}_{\numsamples_\agentindex^{\text{csi}}} - \mathbf{1}_{\numsamples_\agentindex^{\text{im}}}}$\;
$\mathbf{q}_\agentindex \gets \mathbf{q}_\agentindex + \varsigma\, \rbrk{\matchingmatrix_\agentindex^\trans \mathbf{1}_{\numsamples_\agentindex^{\text{im}}} - \mathbf{1}_{\numsamples_\agentindex^{\text{csi}}} + \mathbf{a}_\agentindex}$\;
\vspace{0.2em}
$\matchingscalar_\agentindex \gets \frac{\text{Tr}\rbrk{\distancematrix_\agentindex^{\text{im}} \matchingmatrix_\agentindex \distancematrix_\agentindex^{\text{csi}} \matchingmatrix_\agentindex^\trans}}{\left\lVert \matchingmatrix_\agentindex \, \distancematrix_\agentindex^{\text{csi}} \, \matchingmatrix_\agentindex^\trans \right\rVert_F^2}$\;
}
\end{algorithm}

Note that for a fixed $\matchingmatrix_\agentindex$, the objective function is quadratic in $\matchingscalar_\agentindex$, since:
\begin{align}
    &\left\lVert \distancematrix_\agentindex^{\text{im}} - \matchingscalar_\agentindex \, \matchingmatrix_\agentindex \, \distancematrix_\agentindex^{\text{csi}} \, \matchingmatrix_\agentindex^\trans \right\rVert_F^2
    \\ \label{eq:mm_paper_scaling_deriv}\begin{split}=& \left\lVert \distancematrix_\agentindex^{\text{im}} \right\rVert_F^2 + \matchingscalar_\agentindex^2 \left\lVert \matchingmatrix_\agentindex \, \distancematrix_\agentindex^{\text{csi}} \, \matchingmatrix_\agentindex^\trans \right\rVert_F^2 -2\, \matchingscalar_\agentindex \text{Tr}\rbrk{\distancematrix_\agentindex^{\text{im}} \matchingmatrix_\agentindex \, \distancematrix_\agentindex^{\text{csi}} \, \matchingmatrix_\agentindex^\trans}.\end{split}
\end{align}
Hence the optimal scaling factor can be obtained by setting the derivative of~\eqref{eq:mm_paper_scaling_deriv} with respect to $\matchingscalar_\agentindex$ to $0$, which yields the update shown in the last line of the algorithm.

On the other hand, updating $\matchingmatrix_\agentindex$ necessitates the gradient term $\nabla_{\matchingmatrix_\agentindex} \mathcal{L}_\agentindex$. Looking at~\eqref{eq:mm_paper_alignment_problem_lagrangian}, we can write:
\begin{align}\label{eq:mm_paper_matching_grad}
\begin{split}
    \nabla_{\matchingmatrix_\agentindex} \mathcal{L}_\agentindex =& \nabla_{\matchingmatrix_\agentindex} \rbrk{\left\lVert \distancematrix_\agentindex^{\text{im}} - \matchingscalar_\agentindex \, \matchingmatrix_\agentindex \, \distancematrix_\agentindex^{\text{csi}} \, \matchingmatrix_\agentindex^\trans \right\rVert_F^2} + \mathbf{u}_\agentindex \mathbf{1}_{\numsamples_\agentindex^{\text{csi}}}^\trans + \mathbf{1}_{\numsamples_\agentindex^{\text{im}}} \mathbf{q}_\agentindex^\trans.
\end{split}
\end{align}
We further expand the first term as follows:
\begin{align}
    &\nabla_{\matchingmatrix_\agentindex} \rbrk{\left\lVert \distancematrix_\agentindex^{\text{im}} - \matchingscalar_\agentindex \, \matchingmatrix_\agentindex \, \distancematrix_\agentindex^{\text{csi}} \, \matchingmatrix_\agentindex^\trans \right\rVert_F^2} \\
    =& \nabla_{\matchingmatrix_\agentindex} \rbrk{\matchingscalar_\agentindex^2 \left\lVert \matchingmatrix_\agentindex \, \distancematrix_\agentindex^{\text{csi}} \, \matchingmatrix_\agentindex^\trans \right\rVert_F^2 -2\, \matchingscalar_\agentindex \text{Tr}\rbrk{\distancematrix_\agentindex^{\text{im}} \matchingmatrix_\agentindex \, \distancematrix_\agentindex^{\text{csi}} \, \matchingmatrix_\agentindex^\trans}} \\
    \begin{split}=& \matchingscalar_\agentindex^2 \, \nabla_{\matchingmatrix_\agentindex} \rbrk{\text{Tr}\rbrk{\matchingmatrix_\agentindex \, \distancematrix_\agentindex^{\text{csi}} \, \matchingmatrix_\agentindex^\trans \, \matchingmatrix_\agentindex \, \distancematrix_\agentindex^{\text{csi}} \, \matchingmatrix_\agentindex^\trans}} -2\, \matchingscalar_\agentindex \nabla_{\matchingmatrix_\agentindex} \rbrk{\text{Tr}\rbrk{\distancematrix_\agentindex^{\text{im}} \, \matchingmatrix_\agentindex \, \distancematrix_\agentindex^{\text{csi}} \, \matchingmatrix_\agentindex^\trans}}\end{split} \\
    \label{eq:mm_paper_matchinf_deriv}
    =& 4 \, \matchingscalar_\agentindex^2 \, \matchingmatrix_\agentindex \, \distancematrix_\agentindex^{\text{csi}} \, \matchingmatrix_\agentindex^\trans\matchingmatrix_\agentindex \, \distancematrix_\agentindex^{\text{csi}} -4\, \matchingscalar_\agentindex \, \distancematrix_\agentindex^{\text{im}} \, \matchingmatrix_\agentindex \, \distancematrix_\agentindex^{\text{csi}}
\end{align}
where in the last equality we applied identities \cite[eq. (118)]{petersen2008matrix} and \cite[eq. (123)]{petersen2008matrix}.
Plugigng~\eqref{eq:mm_paper_matchinf_deriv} in~\eqref{eq:mm_paper_matching_grad} yields the gradient update of $\matchingmatrix_\agentindex$ as:
\begin{align}\label{eq:mm_paper_matchinf_deriv1}
\begin{split}
    \nabla_{\matchingmatrix_\agentindex} \mathcal{L}_\agentindex =& 
    4 \, \matchingscalar_\agentindex^2 \, \matchingmatrix_\agentindex \, \distancematrix_\agentindex^{\text{csi}} \, \matchingmatrix_\agentindex^\trans\matchingmatrix_\agentindex \, \distancematrix_\agentindex^{\text{csi}} -4\, \matchingscalar_\agentindex \, \distancematrix_\agentindex^{\text{im}} \, \matchingmatrix_\agentindex \, \distancematrix_\agentindex^{\text{csi}} + \mathbf{u}_\agentindex \mathbf{1}_{\numsamples_\agentindex^{\text{csi}}}^\trans + \mathbf{1}_{\numsamples_\agentindex^{\text{im}}} \mathbf{q}_\agentindex^\trans.
\end{split}
\end{align}

Finally, the computational complexity of the each primal-dual iteration is governed by the update of the matching matrix.
Thus, by examining~\eqref{eq:mm_paper_matchinf_deriv1}, we evince that computing the first term requires $O\rbrk{\rbrk{\numsamples_\agentindex^{\text{csi}}}^3 + \rbrk{\numsamples_\agentindex^{\text{csi}}}^2 \numsamples_\agentindex^{\text{im}}}$ operations, while the second term requires $O\rbrk{\rbrk{\numsamples_\agentindex^{\text{csi}}}^2 \numsamples_\agentindex^{\text{im}} + \rbrk{\numsamples_\agentindex^{\text{im}}}^2 \numsamples_\agentindex^{\text{csi}}}$ operations, and computing the last two terms is negligible requiring a complexity $O\rbrk{\numsamples_\agentindex^{\text{csi}} \numsamples_\agentindex^{\text{im}}}$.
Combining the complexity of the first two terms, and since the optimization is iterated for a finite number of steps, we conclude that our primal-dual algorithm has an $O\rbrk{\rbrk{\numsamples_\agentindex^{\text{csi}}}^3 + \rbrk{\numsamples_\agentindex^{\text{csi}}}^2 \numsamples_\agentindex^{\text{im}} + \rbrk{\numsamples_\agentindex^{\text{im}}}^2 \numsamples_\agentindex^{\text{csi}}}$ computational complexity.


\bibliographystyle{IEEEtran}
\bibliography{references}
\let\mybibitem\bibitem

\end{document}